\tikzstyle{vertex}=[circle, draw, inner sep=0pt, minimum size=6pt]
\newtheorem{lemma}{Lemma}
\newtheorem{definition}{Definition}
\newcommand{\R}{\mathbb{R}}
\newcommand{\E}{\mathbb{E}}
\newcommand{\defeq}{\coloneqq}
\DeclarePairedDelimiter{\abs}{\lvert}{\rvert} %
\DeclarePairedDelimiter{\brk}{[}{]}
\DeclarePairedDelimiter{\crl}{\{}{\}}
\DeclarePairedDelimiter{\prn}{(}{)}
\DeclarePairedDelimiter{\nrm}{\|}{\|}
\DeclarePairedDelimiter{\norm}{\|}{\|}
\DeclarePairedDelimiter{\tri}{\langle}{\rangle}
\DeclareMathOperator*{\argmin}{arg\,min}
\newcommand{\mc}[1]{\mathcal{#1}}
\def\ddefloop#1{\ifx\ddefloop#1\else\ddef{#1}\expandafter\ddefloop\fi}
\def\ddef#1{\expandafter\def\csname 
bb#1\endcsname{\ensuremath{\mathbb{#1}}}}
\def\ddefloop#1{\ifx\ddefloop#1\else\ddef{#1}\expandafter\ddefloop\fi}
\def\ddef#1{\expandafter\def\csname 
b#1\endcsname{\ensuremath{\mathbf{#1}}}}
\def\ddef#1{\expandafter\def\csname 
c#1\endcsname{\ensuremath{\mathcal{#1}}}}
\def\ddef#1{\expandafter\def\csname 
h#1\endcsname{\ensuremath{\widehat{#1}}}}
\def\ddef#1{\expandafter\def\csname 
hc#1\endcsname{\ensuremath{\widehat{\mathcal{#1}}}}}
\def\ddef#1{\expandafter\def\csname 
t#1\endcsname{\ensuremath{\widetilde{#1}}}}
\def\ddef#1{\expandafter\def\csname 
tc#1\endcsname{\ensuremath{\widetilde{\mathcal{#1}}}}}
\newcommand{\indicatorb}[1]{\mathbbm{1}_{\crl*{#1}}}    
\newsavebox\CBox
\newcommand{\ind}[1]{^{(#1)}}
\newcommand{\pp}[1]{\brk*{#1}_+}
\newcommand{\inner}[2]{\left\langle #1,\, #2 \right\rangle}
\newcommand{\natinote}[1]{{\color{blue}[Nati: {#1}]}}
\newcommand{\removed}[1]{}
\newcommand{\bx}{\bar{x}}
\newcommand{\bxt}{\bar{x}_t}
\newcommand{\bgt}{\bar{g}_t}
\newcommand{\mkandr}{$M$, $K$, and $R$}
\renewcommand{\P}{\mathbb{P}}
\title{\vspace{-2em}\rule{\linewidth}{1.5pt}\\ \textbf{Is Local SGD Better than Minibatch SGD?} \\\rule[8pt]{\linewidth}{1pt}\vspace{-0.7em}}
\author{\normalsize
\begin{minipage}{0.23\textwidth}
\centering
\textbf{Blake Woodworth}\\ 
\small Toyota Technological Institute at Chicago \\
\small\url{blake@ttic.edu}
\end{minipage}
\begin{minipage}{0.23\textwidth}
\centering
\textbf{Kumar Kshitij Patel}\\
\small Toyota Technological Institute at Chicago \\
\small\url{kkpatel@ttic.edu}
\end{minipage}
\begin{minipage}{0.27\textwidth}
\centering
\textbf{Sebastian U. Stich}\\
\small EPFL\\
\small\url{sebastian.stich@epfl.ch} \\
${}$
\end{minipage}
\begin{minipage}{0.23\textwidth}
\centering
\textbf{Zhen Dai}\\
\small University of Chicago\\
\small\url{zhen9@uchicago.edu}\\
${}$
\end{minipage}\\\\
\normalsize
\begin{minipage}{0.21\textwidth}
\centering
\textbf{Brian Bullins}\\
\small Toyota Technological Institute at Chicago\\
\small\url{bbullins@ttic.edu}
\end{minipage}
\begin{minipage}{0.21\textwidth}
\centering
\textbf{H. Brendan McMahan}\\
\small Google\\
\small\url{mcmahan@google.com}
\end{minipage}
\begin{minipage}{0.27\textwidth}
\centering
\textbf{Ohad Shamir}\\
\small Weizmann Institute \\ of Science \\
\small\url{ohad.shamir@weizmann.ac.il}
\end{minipage}
\begin{minipage}{0.21\textwidth}
\centering
\textbf{Nathan Srebro}\\
\small Toyota Technological Institute at Chicago \\
\small\url{nati@ttic.edu}
\end{minipage}\vspace{-1em}
}
\date{}
\begin{document}
\maketitle
\begin{abstract}
	We study local SGD (also known as parallel SGD and federated averaging), a natural and frequently used stochastic distributed optimization method. Its theoretical foundations are currently lacking and we highlight how all existing error guarantees in the convex setting are dominated by a simple baseline, minibatch SGD. (1) For quadratic objectives we prove that local SGD strictly dominates minibatch SGD and that accelerated local SGD is minimax optimal for quadratics; (2) For general convex objectives we provide the first guarantee that at least \emph{sometimes} improves over minibatch SGD; (3) We show that indeed local SGD does \emph{not} dominate minibatch SGD by presenting a lower bound on the performance of local SGD that is worse than the minibatch SGD guarantee.
\end{abstract}
\section{Introduction}
It is often important to leverage parallelism in order to tackle large scale stochastic optimization problems. A prime example is the task of minimizing the loss of machine learning models with millions or billions of parameters over enormous training sets. 

One popular distributed approach is local stochastic gradient descent (SGD) \citep{zinkevich2010parallelized,Coppola2015:IPM,Zhou2018:Kaveraging,stich2018local}, also known as ``parallel SGD'' or ``Federated Averaging''\footnote{Federated Averaging is a specialization of local SGD to the federated setting, where (a) data is assumed to be heterogenous (not i.i.d.) across workers, (b) only a handful of clients are used in each round, and (c) updates are combined with a weighted average to accommodate unbalanced datasets.} \citep{mcmahan2016communication}, which is commonly applied to large scale convex and non-convex stochastic optimization problems, including in data center and ``Federated Learning'' settings \citep{kairouz2019advances}.  Local SGD uses $M$ parallel workers which, in each of $R$ rounds, independently execute $K$ steps of SGD starting from a common iterate, and then communicate and average their iterates to obtain the common iterate from which the next round begins.  Overall, each machine computes $T=KR$ stochastic gradients and executes $KR$ SGD steps locally, for a total of $N=KRM$ overall stochastic gradients computed (and so $N=KRM$ samples used), with $R$ rounds of communication (every $K$ steps of computation).

Given the appeal and usage of local SGD, there is significant value in understanding its performance and limitations theoretically, and in comparing it to other alternatives and baselines {\em that have the same computation and communication structure}.  That is, other methods that are distributed across $M$ machines and compute $K$ gradients per round of communication for $R$ rounds, for a total of $T=KR$ gradients per machine and $R$ communication steps. \removed{\natinote{I don't think this sentence is needed, and it seems to agressive, plus the word "fair" is always loaded}Any other comparison would be unfair.}
This structure can also be formalized through the graph oracle model of \citet[see also Section \ref{sec:setting}]{woodworth2018graph}.

So, how does local SGD compare to other algorithms with the same computation and communication structure? Is local SGD (or perhaps an accelerated variant) optimal in the same way that (accelerated) SGD is optimal in the sequential setting?  Is it better than baselines?  

A natural alternative and baseline is minibatch SGD \citep{dekel2012optimal,cotter2011better,shamir2014distributed} -- a simple method for which we have a complete and tight theoretical understanding.  Within the same computation and communication structure, minibatch SGD can be implemented as follows: Each round, calculate the $K$ stochastic gradient estimates (at the current iterate) on each machine, and then average all $KM$ estimates to obtain a single gradient estimate.  That is, we can implement minibatch SGD that takes $R$ stochastic gradient steps, with each step using a minibatch of size $KM$---this is the fair and correct minibatch SGD to compare to, and when we refer to ``minibatch SGD'' we refer to this implementation ($R$ steps with minibatch size $KM$).

Local SGD seems intuitively better than minibatch SGD, since even when the workers are not communicating, they are making progress towards the optimum. In particular, local SGD performs $K$ times more updates over the course of optimization, and can be thought of as computing gradients at less ``stale'' and more ``updated'' iterates.  For this reason, it has been argued that local SGD is at least as good as minibatch SGD, especially in convex settings where averaging iterates cannot hurt you.  But can we capture this advantage theoretically to understand how and when local SGD is better than minibatch SGD?  Or even just establish that local SGD is at least as good?

A string of recent papers have attempted to analyze local SGD for convex objectives, \citep[e.g.][]{stich2018local,stich2019error,khaled2019better,dieuleveut2019communication}. However, a satisfying analysis has so far proven elusive. In fact, every analysis that we are aware of for local SGD in the general convex (or strongly convex) case with a typical noise scaling (e.g.~as arising from supervised learning) not only does not improve over minibatch SGD, but is actually strictly dominated by minibatch SGD\removed{\footnote{\citet{khaled2019better} are able to show improvement over minibatch SGD but only in a situation where the noise in the gradients is extremely small. See Section \ref{sec:setting} and Appendix \ref{app:khaled-discussion} for more details.\natinote{This footnote is not needed---if someone is concerned, all the details are in the Section, and the statement here is correct as is}}}! But is this just a deficiency of these analyses, or is local SGD actually not better, and perhaps worse, than minibatch SGD?  In this paper, we show that the answer to this question is ``sometimes.'' There is a regime in which local SGD indeed matches or improves upon minibatch SGD, but perhaps surprisingly, there is also a regime in which local SGD really is strictly worse than minibatch SGD.
\vspace{-0.8em}
\subsubsection*{Our contributions}
In Section \ref{sec:quadratic}, we start with the special case of \textbf{quadratic} objectives and show that, at least in this case, {\bf local SGD is strictly better than minibatch SGD} in the worst case, and that an accelerated variant is even {\bf minimax optimal}.

We then turn to general {\bf convex objectives}.   In Section \ref{sec:our-upper-bound} we prove the {\bf first error upper bound on the performance of local SGD which is \emph{not} dominated by minibatch SGD's} upper bound with a typical noise scaling. In doing so, we identify a regime (where $M$ is large and $K\gtrsim R$) in which local SGD performs strictly better than minibatch in the worst case. However, our upper bound does not show that local SGD is \emph{always} as good or better than minibatch SGD. In Section \ref{sec:non-quadratic}, we show that this is not just a failure of our analysis. We prove a {\bf lower bound on the worst-case error of local SGD that is \emph{higher} than the worst-case error of minibatch SGD in a certain regime!}  We demonstrate this behaviour empirically, using a logistic regression problem where local SGD indeed behaves much worse than mini-batch SGD in the theoretically-predicted problematic regime.

Thus, while local SGD is frequently better than minibatch SGD---and we can now see this both in theory and in practice \citep[see experiments by e.g.][]{zhang2016parallel,lin2018don,Zhou2018:Kaveraging}---our work identifies regimes in which users should be wary of using local SGD without considering alternatives like minibatch SGD, and might want to seek alternative methods that combine the best of both, and attain optimal performance in all regimes.

\vspace{-.5em}
\section{Preliminaries}\label{sec:setting}
We consider the stochastic convex optimization problem:
\begin{equation}\label{eq:general-stochastic-objective}
    \min_{x \in \R^d} F(x) := \underset{z\sim\mc{D}}{\E}\brk*{f(x;z)}\,.
\end{equation}
We will study distributed first-order algorithms that compute stochastic gradient estimates at a point $x \in \R^d$ via $\nabla f(x;z)$ based on indpendent samples $z \sim \mc{D}$. Our focus is on objectives $F$ that are $H$-smooth, either (general) convex or $\lambda$-strongly convex\footnote{An $H$-smooth and $\lambda$-strongly convex function satisfies $\frac{\lambda}{2}\nrm*{x-y}^2 \leq F(y) - F(x) - \tri*{\nabla F(x), y-x} \leq \frac{H}{2}\nrm*{x-y}^2$. We allow $\lambda = 0$ in which case $F$ is general convex.}, with a minimizer $x^* \in \argmin_x F(x)$ with $\nrm{x^*} \leq B$. We consider $\nabla f$ which has uniformly bounded variance, i.e.~$\sup_x\E_{z\sim\mc{D}}\nrm*{\nabla f(x;z) - \nabla F(x)}^2 \leq \sigma^2$. We use $\mc{F}(H,\lambda,B,\sigma^2)$ to refer to the set of all pairs $(f,\mc{D})$ which satisfy these properties. All of the analysis in this paper can be done either for general convex or strongly convex functions, and we prove all of our results for both cases. For conciseness and clarity, when discussing the results in the main text, we will focus on the general convex case. However, the picture in the strongly convex case is mostly the same. 

An important instance of \eqref{eq:general-stochastic-objective} is a supervised learning problem where $f(x;z) = \ell\prn*{\tri*{x, \phi(z)}, \textit{label}(z)}$ is the loss on a single sample.  When $\abs{\ell'}, \abs{\ell''} \leq 1$ (referring to derivatives w.r.t.~the first argument), then $H \leq \abs{\ell''} \norm{\phi(z)}^2 \leq \norm{\phi(z)}^2$ and also $\sigma^2 \leq \norm{\nabla f}^2 \leq \abs{\ell'}^2 \norm{\phi(z)}^2 \leq \norm{\phi(z)}^2$. Thus, assuming that the upper bounds on $\ell',\ell''$ are comparable, the relative scaling of parameters we consider as most ``natural'' is $H \approx \sigma^2$.

For simplicity, we consider initializing all algorithms at zero. Then, Local SGD with $M$ machines, $K$ stochastic gradients per round, and $R$ rounds of communication calculates its $t$th iterate on the $m$th machine for $t \in [KR]$ via \vspace{-0.5em}
\begin{equation}
\hspace{-1.5mm}
x_{t}^m = \begin{cases} x_{t-1}^m - \eta \nabla f(x_{t-1}^m; z_{t-1}^m) & K \not|\ t \\ \frac{1}{M}\sum_{m'=1}^M x_{t-1}^{m'} - \eta \nabla f(x_{t-1}^{m'}; z_{t-1}^{m'}) & K\ |\ t\end{cases}
\end{equation}
where $z_t^m\sim\mc{D}$ i.i.d., and $K\ |\ t$  refers to $K$ dividing $t$. For each $r \in [R]$, minibatch SGD calculates its $r$th iterate via
\vspace{-0.5em}
\begin{equation}\label{eq:generic-minibatch-sgd} 
x_{r} = x_{r-1} - \frac{\eta}{MK} \sum_{i=1}^{MK} \nabla f(x_{r-1}; z_{r-1}^i)
\end{equation}
We also introduce another strawman baseline, which we will refer to as ``thumb-twiddling'' SGD. In thumb-twiddling SGD, each machine computes just one (rather than $K$) stochastic gradients per round of communication and ``twiddles its thumbs'' for the remaining $K-1$ computational steps, resulting in $R$ minibatch SGD steps, but with a minibatch size of only $M$ (instead of $KM$, i.e.~as if we used $K=1$).  This is a silly algorithm that is clearly strictly worse than minibatch SGD, and we would certainly expect any reasonable algorithm to beat it.  But as we shall see, previous work has actually struggled to show that local SGD even matches, let alone beats, thumb-twiddling SGD. In fact, we will show in Section \ref{sec:non-quadratic} that, in certain regimes, local SGD truly is {\em worse} than thumb-twiddling.

For a particular algorithm $\mathsf{A}$, we define its worst-case performance with respect to $\mc{F}(H,\lambda,B,\sigma^2)$ as:
\vspace{-0.5em}
\begin{equation}
\epsilon_{\mathsf{A}} = \max_{(f,\mc{D}) \in \mc{F}(H,\lambda,B,\sigma^2)} F(\hat{x}_{\mathsf{A}}) - F(x^*)
\end{equation}
The worst-case performance of minibatch SGD for general convex objectives is tightly understood \citep{nemirovskyyudin1983,dekel2012optimal}:
\vspace{-0.5em}
\begin{equation}
\epsilon_{\textrm{MB-SGD}} = \Theta\prn*{\frac{HB^2}{R} + \frac{\sigma B}{\sqrt{MKR}}}.\label{eq:minibatch_convergence}
\end{equation}

In order to know if an algorithm like local or minibatch SGD is ``optimal'' in the worst case requires understanding the minimax error, i.e.~the best error that any algorithm with the requisite computation and communication structure can guarantee in the worst case. This requires formalizing the set of allowable algorithms. One possible formalization is the graph oracle model of \citet{woodworth2018graph} which focuses on the dependence structure between different stochastic gradient computations resulting from the communication pattern. Using this method, \citeauthor{woodworth2018graph} prove lower bounds which are applicable to our setting. Minibatch SGD does not match these lower bounds (nor does accelerated minibatch SGD, see \citet{cotter2011better}), but these lower bounds are not known to be tight, so the minimax complexity and minimax optimal algorithm are not yet known.

\begin{table}[!ht]
\caption{Comparison of existing analyses of Local SGD for general convex functions, with constant factors and low-order terms (in the natural scaling $H \approx \sigma^2$)  omitted. We applied existing upper bounds as optimistically as possible, e.g.~making additional assumptions where necessary to apply the guarantee to our setting, and our derivations are explained in Appendix \ref{app:comparison-of-existing-work}. The bolded term is the one which compares least favorably against minibatch SGD. Analogous rates for strongly convex functions are given in Appendix \ref{app:comparison-of-existing-work}.}
\vspace{0.2em}
\label{tab:comparison}
    \renewcommand{\arraystretch}{1.5}%
       \begin{minipage}{\linewidth}
       \centering
       \begin{tabular}{ p{.31\linewidth} p{.3\linewidth}  } 
        \toprule  
        Minibatch SGD & $\frac{HB^2}{R} + \frac{\sigma B}{\sqrt{MKR}}$\\
        \cmidrule{2-2}
        Thumb-twiddling SGD & $\frac{HB^2}{R} + \frac{\sigma B}{\sqrt{MR}}$\\
        \midrule 
        \citet{stich2018local} &
        $ \mathbf{\frac{HB^2}{R^{2/3}}} + \frac{HB^2}{(KR)^{3/5}} + \frac{\sigma B}{\sqrt{MKR}}$\\
        \cmidrule{2-2}
        \citet{stich2019error} & $\mathbf{\frac{HB^2 M}{R}} + \frac{\sigma B}{\sqrt{MKR}}$\\
        \cmidrule{2-2}
        \citet{khaled2019better}\footnote{This upper bound applies only when $M \leq KR$. It also requires smoothness of each $f(x;z)$ individually, i.e.~not just $F$.} & $\mathbf{\frac{\sigma^2 M}{H R}} + \frac{H^2B^2 + \sigma^2}{H\sqrt{MKR}}$ \\
        \midrule 
        Our upper bound (Section \ref{sec:our-upper-bound}) & $\mathbf{\frac{\prn*{H\sigma^2B^4}^{1/3}}{(\sqrt{K}R)^{2/3}}} + \frac{HB^2}{KR} + \frac{\sigma B}{\sqrt{MKR}}$\\
        \cmidrule{2-2} 
        Our lower bound (Section \ref{sec:non-quadratic})  & $\mathbf{\frac{\prn*{H\sigma^2B^4}^{1/3}}{(KR)^{2/3}}} + \frac{\sigma B}{\sqrt{MKR}} $\\
        \bottomrule
    \end{tabular}
\end{minipage}
\end{table}
\vspace{-0.5em}

\paragraph{Existing analysis of local SGD}
Table~\ref{tab:comparison} summarizes the best existing analyses of local SGD that we are aware of that can be applied to our setting.  We present the upper bounds as they would apply in our setting, and after optimizing over the stepsize and other parameters.  A detailed derivation of these upper bounds from the explicitly-stated theorems in other papers is provided in Appendix \ref{app:comparison-of-existing-work}.  
As we can see from the table, in the natural scaling $H = \sigma^2$, every previous upper bound is strictly dominated by minibatch SGD. Worse, these upper bounds can even be worse than even thumb-twiddling SGD when $M\gg R$ (although they are sometimes better). In particular, the first term of each previous upper bound (in terms of $M,K,R$) is never better than $R^{-1}$ (the optimization term of minibatch and thumb-twiddling SGD), and can be much worse. 

We should note that in an extremely low noise regime $\sigma^2 \leq H^2B^2\min\crl{\frac{1}{M}, \frac{K}{R}}$, the bound of \citet{khaled2019better} can sometimes improve over minibatch SGD. However, this only happens when $KR$ steps of sequential SGD is better than minibatch SGD---i.e.~when you are better off ignoring $M-1$ of the machines and just doing serial SGD on a single machine (such an approach would have error $\frac{HB^2}{KR} + \frac{\sigma B}{\sqrt{KR}}$). This is a trivial regime in which every update for any of these algorithms is essentially an exact gradient descent step, thus there is no need for parallelism in the first place. See Appendix \ref{app:khaled-discussion} for further details. The upper bound we develop in Section \ref{sec:our-upper-bound}, in contrast, dominates their guarantee and shows an improvement over minibatch that \emph{cannot} be achieved on a single machine (i.e.~without leveraging any parallelism). Furthermore, this improvement can occur even in the natural scaling $H = \sigma^2$ and even when minibatch SGD is better than serial SGD on one machine.

We emphasize that Table \ref{tab:comparison} lists the guarantees specialized to our setting---some of the bounds are presented under slightly weaker assumptions, or with a more detailed dependence on the noise: \citet{stich2019error,haddadpour2019local} analyze local SGD assuming not-quite-convexity; 
and \citet{wang2018cooperative,dieuleveut2019communication} derive guarantees under both multiplicative and additive bounds on the noise.   \citet{dieuleveut2019communication} analyze local SGD with the additional assumption of a bounded third derivative, but even with this assumption do not improve over mini-batch SGD. Numerous works study local SGD in the non-convex setting \citep[see e.g.][]{Zhou2018:Kaveraging, yu2019parallel, wang2017memory, stich2019error, haddadpour2019trading}. Although their bounds would apply in our convex setting, due to the much weaker assumptions they are understandably much worse than minibatch SGD.  There is also a large body of work studying the special case $R=1$, i.e.~where the iterates are averaged just one time at the end \citep{zinkevich2010parallelized, zhang2012communication, li2014efficient, rosenblatt2016optimality, godichon2017rates, jain2017parallelizing}. However, these analyses do not easily extend to multiple rounds, and the $R=1$ constraint can provably harm performance  \citep[see][]{shamir2014communication}. Finally, local SGD has been studied with heterogeneous data, i.e.~where each machine receives stochastic gradients from different distributions---see \citet[Sec. 3.2]{kairouz2019advances} a recent survey.

\paragraph{An Alternative Viewpoint: Reducing Communication}
In this work, we focus on understanding the best achievable error for a given $M$, $K$, and $R$. However, one might also want to know to what extent it is possible to reduce communication without paying for it. Concretely, fix $T = KR$, and consider as a baseline an algorithm which computes $T$ stochastic gradients on each machine sequentially, but is allowed to communicate after every step. We can then ask to what extent we can compete against this baseline while using less communication. One way to do this is to use Local SGD, which reduces communcation by a factor of $K$. However, the amount by which we can reduce communcation using Local SGD is easily determined once we know the error of Local SGD for each fixed $K$. Therefore, this viewpoint of reducing communcation is essentially equivalent to the one we take.

\section{Good News: Quadratic Objectives}\label{sec:quadratic}
As we have seen, existing analyses of local SGD are no better than that of minibatch SGD. In the special case where $F$ is quadratic, we will now show that not only is local SGD \emph{sometimes} as good as minibatch SGD, but it is \emph{always} as good as minibatch SGD, and sometimes better. In fact, an accelerated variant of local SGD is minimax optimal for quadratic objectives. More generally, we show that the local SGD anologue for a large family of serial first-order optimization algorithms enjoys an error guarantee which depends only on the product $KR$ and not on $K$ or $R$ individually. In particular, we consider the following family of linear update algorithms:
\begin{definition}[Linear update algorithm]\label{def:linear-update-algorithm}
We say that a first-order optimization algorithm is a linear update algorithm if, for fixed linear functions $\mc{L}^{(t)}_1,\mc{L}^{(t)}_2$, the algorithm generates its $t+1$st iterate according to 
\begin{equation}\label{eq:local-A-update}
x_{t+1} = \mc{L}^{(t)}_2\prn*{x_1,\dots,x_t,\nabla f\prn*{\mc{L}^{(t)}_1\prn*{x_1,\dots,x_t};z_t}}
\end{equation}
\end{definition}
This family captures many standard first-order methods including SGD, which corresponds to the linear mappings $\mc{L}^{(t)}_1\prn*{x_1,\dots,x_t} = x_t$ and $x_{t+1} = x_t - \eta_t \nabla f(x_t;z_t)$. Another notable algorithm in this class is AC-SA \citep{ghadimi2013optimal}, an accelerated variant of SGD which also has linear updates. Some important non-examples, however, are adaptive gradient methods like AdaGrad \citep{mcmahan2010adaptive,duchi2011adaptive}---these have linear updates, but the linear functions are data-dependent.

For a linear update algorithm $\mc{A}$, we will use local-$\mc{A}$ to denote the local SGD analogue with $\mc{A}$ replacing SGD. 
That is, during each round of communication, each machine independently executes $K$ iterations of $\mc{A}$ and then the $M$ resulting iterates are averaged. For quadratic objectives, we show that this approach inherits the guarantee of $\mc{A}$ with the benefit of variance reduction:
\begin{restatable}{theorem}{quadraticthm}\label{thm:linear-update-alg-quadratics}
Let $\mc{A}$ be a linear update algorithm which, when executed for $T$ iterations on any quadratic $(f,\mc{D})\in\mc{F}(H,\lambda,B,\sigma^2)$, guarantees $\E F(x_T) - F^* \leq \epsilon(T, \sigma^2)$. Then, local-$\mc{A}$'s averaged final iterate $\bar{x}_{KR} = \frac{1}{M}\sum_{m=1}^M x_{KR}^m$ will satisfy $\E F(\bar{x}_{KR}) - F^* \leq \epsilon\prn{KR, \frac{\sigma^2}{M}}$.
\end{restatable}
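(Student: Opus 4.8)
The plan is to exploit two facts special to this setting: for a quadratic $F$ the gradient $\nabla F$ is \emph{affine}, and by Definition~\ref{def:linear-update-algorithm} the maps $\mc{L}_1^{(t)},\mc{L}_2^{(t)}$ are \emph{linear}. Since averaging over machines is itself an affine operation, these two facts let averaging commute with the entire dynamics. Concretely, the goal is to show that the averaged trajectory $\bar{x}_t = \frac{1}{M}\sum_m x_t^m$ is \emph{itself} an exact run of $\mc{A}$ on the same objective $F$, but driven by a stochastic gradient oracle whose variance is reduced by a factor $M$. Once this is established, I simply invoke $\mc{A}$'s assumed guarantee with $T = KR$ iterations and noise level $\sigma^2/M$ to conclude $\E F(\bar{x}_{KR}) - F^* \le \epsilon(KR,\sigma^2/M)$.

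For the core step I write the stochastic gradient of the quadratic as $\nabla f(y;z) = A_z y + b_z$ and $\nabla F(y) = A y + b$ with $A = \E A_z$, $b = \E b_z$, set the per-machine query point $y_t^m = \mc{L}_1^{(t)}(x_1^m,\dots,x_t^m)$, and define the per-machine noise $\xi_t^m = \nabla f(y_t^m;z_t^m) - \nabla F(y_t^m) = (A_{z_t^m}-A)y_t^m + (b_{z_t^m}-b)$. Because an affine map sends an average of inputs to the average of its outputs, linearity of $\mc{L}_1^{(t)}$ gives $\frac{1}{M}\sum_m y_t^m = \mc{L}_1^{(t)}(\bar{x}_1,\dots,\bar{x}_t) =: \bar{y}_t$, and affineness of $\nabla f$ gives $\frac{1}{M}\sum_m \nabla f(y_t^m;z_t^m) = \nabla F(\bar{y}_t) + \bar{\xi}_t$ where $\bar{\xi}_t := \frac{1}{M}\sum_m \xi_t^m$. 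Averaging the $M$ machine updates and using linearity of $\mc{L}_2^{(t)}$ then collapses everything to the single recursion $\bar{x}_{t+1} = \mc{L}_2^{(t)}(\bar{x}_1,\dots,\bar{x}_t,\nabla F(\bar{y}_t) + \bar{\xi}_t)$, which is exactly the $\mc{A}$-update at $\bar{x}_t$ against the stochastic gradient $\nabla F(\bar{y}_t)+\bar{\xi}_t$. The identity applies verbatim at a communication step: there the candidates $\mc{L}_2^{(t)}(x_1^m,\dots)$ are additionally averaged, but by linearity this average is again $\mc{L}_2^{(t)}(\bar{x}_1,\dots,\bar{x}_t,\nabla F(\bar{y}_t)+\bar{\xi}_t)$, so $\bar{x}_{t+1}$ obeys the same recursion whether or not $K \mid t+1$.

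It then remains to check that $\nabla F(\bar{y}_t)+\bar{\xi}_t$ is a legitimate variance-$\sigma^2/M$ oracle for $F$. Conditioning on the history $\mc{H}_t$ through step $t$ (which fixes every $y_t^m$), the samples $z_t^m$ are independent across $m$, so the $\xi_t^m$ are conditionally independent, each with conditional mean zero and, by the \emph{uniform} variance bound applied at the point $y_t^m$, conditional second moment at most $\sigma^2$. Hence $\E[\bar{\xi}_t \mid \mc{H}_t] = 0$ and $\E[\nrm{\bar{\xi}_t}^2 \mid \mc{H}_t] = \frac{1}{M^2}\sum_m \E[\nrm{\xi_t^m}^2 \mid \mc{H}_t] \le \sigma^2/M$. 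Thus $\{\bar{x}_t\}_{t\le KR}$ is an honest run of $\mc{A}$ for $KR$ steps on the quadratic $F$ (same $H,\lambda,B$) against a conditionally unbiased oracle of variance $\sigma^2/M$, i.e.\ an instance in $\mc{F}(H,\lambda,B,\sigma^2/M)$, and $\mc{A}$'s hypothesis delivers the claimed bound $\epsilon(KR,\sigma^2/M)$ on $\bar{x}_{KR}$.

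The main thing to get right — and the reason the result is confined to quadratics — is the averaging-commutes identity, which hinges on $\nabla f$ being affine in $y$; for a non-quadratic $F$ one has $\frac{1}{M}\sum_m \nabla F(y_t^m) \ne \nabla F(\bar{y}_t)$ in general and the argument collapses. A secondary point I would pin down is the definition of local-$\mc{A}$ for algorithms with memory: since $\mc{L}_2^{(t)}$ may depend on the whole history $x_1,\dots,x_t$, I must specify that after each round the machines share the common averaged history, so that averaging the machine histories reproduces $\bar{x}_1,\dots,\bar{x}_t$ term by term (for memoryless SGD this is automatic and the issue does not arise). Finally, I would note that passing to $\mc{A}$'s guarantee uses only conditional unbiasedness and the variance bound of the constructed oracle — precisely the properties a stochastic first-order convergence proof consumes — so no literal instance $(f',\mc{D}')$ need be exhibited.
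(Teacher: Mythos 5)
Your proposal is correct and follows essentially the same route as the paper's proof: use linearity of $\mc{L}_1^{(t)},\mc{L}_2^{(t)}$ and of $\nabla F$ (for quadratics) to show the averaged iterate $\bar{x}_t$ itself evolves by the $\mc{A}$-recursion driven by the machine-averaged stochastic gradient, which is conditionally unbiased at $\mc{L}_1^{(t)}(\bar{x}_1,\dots,\bar{x}_t)$ with variance at most $\sigma^2/M$ by independence across machines. Your extra remarks---that the identity also holds at communication steps, and that invoking $\epsilon(KR,\sigma^2/M)$ requires the guarantee to consume only unbiasedness and a variance bound rather than a literal instance $(f',\mc{D}')$---are exactly the caveats the paper itself raises (the latter in its discussion following Theorem~\ref{thm:linear-update-alg-quadratics}).
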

We prove this in Appendix \ref{app:quadratic} by showing that the average iterate $\bar{x}_t$ is updated according to $\mc{A}$---even in the middle of rounds of communication when $\bar{x}_t$ is not explicitly computed. In particular, we first show that
\begin{equation}
\bar{x}_{t+1}
= \mc{L}_2^{(t)}\bigg(\bar{x}_1,\dots,\bar{x}_t,
\frac{1}{M}\sum_{m'=1}^M \nabla f\prn*{\mc{L}_1^{(t)}\prn*{x_1^{m'},\dots,x_t^{m'}};z_t^{m'}}\bigg)
\end{equation}
Then, by the linearity of $\nabla F$ and $\mc{L}_1^{(t)}$, we prove
\begin{equation}
\E\brk*{\frac{1}{M}\sum_{m'=1}^M \nabla f\prn*{\mc{L}_1^{(t)}\prn*{x_1^{m'},\dots,x_t^{m'}};z_t^{m'}}} 
= \nabla F\prn*{\mc{L}_1^{(t)}\prn*{\bar{x}_1,\dots,\bar{x}_t}}
\end{equation}
and its variance is reduced to $\frac{\sigma^2}{M}$. Therefore, $\mc{A}$'s guarantee carries over while still benefitting from the lower variance.

To rephrase Theorem \ref{thm:linear-update-alg-quadratics}, on quadratic objectives, local-$\mc{A}$ is in some sense equivalent to $KR$ iterations of $\mc{A}$ with the gradient variance reduced by a factor of $M$. Furthermore, this guarantee depends only on the product $KR$, and not on $K$ or $R$ individually. Thus, averaging the $T$th iterate of $M$ independent executions of $\mc{A}$, sometimes called ``one-shot averaging,'' enjoys the same error upper bound as $T$ iterations of size-$M$ minibatch-$\mc{A}$.

Nevertheless, it is important to highlight the boundaries of Theorem \ref{thm:linear-update-alg-quadratics}. Firstly, $\mc{A}$'s error guarantee $\epsilon(T,\sigma^2)$ must not rely on any particular structure of the stochastic gradients themselves, as this structure might not hold for the implicit updates of local-$\mc{A}$. 
Furthermore, even if some structure of the stochastic gradients \emph{is} maintained for local-$\mc{A}$, the particular iterates generated by local-$\mc{A}$ will generally vary with $K$ and $R$ (even holding $KR$ constant). Thus, Theorem \ref{thm:linear-update-alg-quadratics} does \emph{not} guarantee that local-$\mc{A}$ with two different values of $K$ and $R$ would perform the same on any particular instance. We have merely proven matching upper bounds on their worst-case performance.

We apply Theorem \ref{thm:linear-update-alg-quadratics} to yield error upper bounds for local-SGD and local-AC-SA (based on the AC-SA algorithm of \citet{ghadimi2013optimal}) which is minimax optimal:
\begin{restatable}{corollary}{quadraticcorollary}\label{cor:quadratic-localSGD-ACSA}
For any quadratic $(f,\mc{D}) \in \mc{F}(H,\lambda=0,B,\sigma^2)$, there are constants $c_1$ and $c_2$ such that local-SGD returns a point $\hat{x}$ such that
\[
\E F(\hat{x}) - F^* \leq c_1\prn*{\frac{HB^2}{KR} + \frac{\sigma B}{\sqrt{MKR}}}~,
\]
and local-AC-SA returns a point $\tilde{x}$ such that
\[
\E F(\tilde{x}) - F^* \leq c_2\prn*{\frac{HB^2}{K^2R^2} + \frac{\sigma B}{\sqrt{MKR}}}~.
\]
In particular, local-AC-SA is minimax optimal for quadratic objectives.
\end{restatable}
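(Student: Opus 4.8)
The plan is to obtain both upper bounds as direct corollaries of Theorem~\ref{thm:linear-update-alg-quadratics}, and to establish minimax optimality through a matching lower bound assembled from a noiseless hard instance and a purely statistical one. For the upper bounds, the only thing to verify is that SGD and AC-SA are linear update algorithms whose convergence guarantees $\epsilon(T,\sigma^2)$ depend on the stochastic gradients \emph{only} through conditional unbiasedness and a conditional second-moment bound, since these are precisely the properties that the proof of Theorem~\ref{thm:linear-update-alg-quadratics} certifies for the implicit updates of the averaged iterate $\bar{x}_t$ (the averaged gradient has conditional mean $\nabla F(\mc{L}_1^{(t)}(\bar{x}_1,\dots,\bar{x}_t))$ and conditional variance at most $\sigma^2/M$).

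For local-SGD, recall SGD is the linear update algorithm with $\mc{L}_1^{(t)}(x_1,\dots,x_t)=x_t$ and $x_{t+1}=x_t-\eta_t\nabla f(x_t;z_t)$; its standard analysis on $H$-smooth convex objectives with $\nrm{x^*}\le B$ yields, after optimizing the stepsize, $\epsilon(T,\sigma^2)=O\prn*{\frac{HB^2}{T}+\frac{\sigma B}{\sqrt{T}}}$, using nothing about the noise beyond conditional unbiasedness and variance. Plugging $T=KR$ and variance $\sigma^2/M$ into Theorem~\ref{thm:linear-update-alg-quadratics} gives $\epsilon\prn*{KR,\tfrac{\sigma^2}{M}}=O\prn*{\frac{HB^2}{KR}+\frac{\sigma B}{\sqrt{MKR}}}$, the first claim. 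For local-AC-SA, AC-SA is likewise a linear update algorithm and \citet{ghadimi2013optimal} guarantee $\epsilon(T,\sigma^2)=O\prn*{\frac{HB^2}{T^2}+\frac{\sigma B}{\sqrt{T}}}$ under the same structure-free noise assumptions; substituting $T=KR$ and variance $\sigma^2/M$ gives $O\prn*{\frac{HB^2}{K^2R^2}+\frac{\sigma B}{\sqrt{MKR}}}$, the second claim.

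It then remains to prove a matching lower bound $\Omega\prn*{\frac{HB^2}{K^2R^2}+\frac{\sigma B}{\sqrt{MKR}}}$ on the minimax error over quadratics in $\mc{F}(H,0,B,\sigma^2)$; since $\max(a,b)\ge \tfrac12(a+b)$, it suffices to establish each term separately. The statistical term $\Omega\prn*{\frac{\sigma B}{\sqrt{MKR}}}$ is the classical information-theoretic limit for stochastic optimization with $N=MKR$ samples at noise level $\sigma$, via a two-point / Le Cam argument on a pair of one-dimensional quadratics with oppositely shifted minima \citep{nemirovskyyudin1983}; it is insensitive to the communication structure since it depends only on the total sample count. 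The optimization term $\Omega\prn*{\frac{HB^2}{K^2R^2}}$ comes from the noiseless case $\sigma=0$: each machine has a chain of only $KR$ sequentially dependent gradient evaluations, and Nesterov's worst-case construction—itself an $H$-smooth convex \emph{quadratic}—forces error $\Omega\prn*{\frac{HB^2}{t^2}}$ after $t$ such queries. The subtle point is that $M$-fold parallelism cannot reduce the effective depth below $KR$: on the tridiagonal Nesterov quadratic a gradient evaluated anywhere in $\spn\crl{e_1,\dots,e_k}$ lies in $\spn\crl{e_1,\dots,e_{k+1}}$, and averaging across machines preserves spans, so all $M$ machines together can activate at most one new coordinate per sequential step, independent of $M$; after $R$ rounds of $K$ steps only $KR$ coordinates are active. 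This is exactly the phenomenon formalized by the graph oracle lower bounds of \citet{woodworth2018graph} specialized to the depth-$KR$ intermittent-communication graph.

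The main obstacle is this last point—certifying that the deterministic lower bound degrades with the \emph{depth} $KR$ rather than the total work $MKR$—since it is the only place where the distributed structure interacts nontrivially with the hardness construction; the two upper bounds and the statistical lower bound are essentially invocations of known results. Once the depth-$KR$ bound is in hand, combining it with the statistical term shows the minimax error over quadratics is $\Theta\prn*{\frac{HB^2}{K^2R^2}+\frac{\sigma B}{\sqrt{MKR}}}$, which local-AC-SA attains, establishing its minimax optimality.
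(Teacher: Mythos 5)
Your two upper bounds are obtained exactly as in the paper: confirm that SGD and AC-SA are linear update algorithms whose guarantees use the noise only through unbiasedness and a variance bound, then apply Theorem \ref{thm:linear-update-alg-quadratics} with $T=KR$ and variance $\sigma^2/M$. The statistical term of your lower bound, $\Omega\prn*{\sigma B/\sqrt{MKR}}$, also matches the paper's proof, which invokes the classical bound of \citet{nemirovskyyudin1983} and notes it holds even for $MKR$ fully sequential stochastic gradient queries (and it can be realized with quadratic instances). You also correctly identify that the crux of the minimax claim is showing the optimization term degrades with the depth $KR$ rather than the total work $MKR$.

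However, your resolution of that crux has a genuine gap. Your argument for $\Omega\prn*{HB^2/(KR)^2}$ — Nesterov's tridiagonal quadratic together with the observation that gradient evaluations and averaging preserve coordinate spans, so all $M$ machines jointly activate at most one new coordinate per sequential step — is a span (zero-respecting) argument. It constrains only algorithms whose queries lie in the span of previously observed gradients. Minimax optimality quantifies over \emph{all} algorithms with the given computation and communication structure, including randomized algorithms that query outside that span; against those your construction proves nothing. Most glaringly, against a single \emph{fixed} hard quadratic, an algorithm may simply output the known minimizer using zero queries, so any valid lower bound needs a distribution over instances, and once the instances are exact quadratics, off-span queries leak information about the Hessian in a way the span argument cannot control. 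The standard robustification for general smooth convex objectives (high-dimensional constructions with random rotations and resisting oracles, as in \citet{woodworth2018graph}) does not rescue the argument here, because those hard instances are not quadratics, while your claim requires hard instances inside the class of quadratics in $\mc{F}(H,0,B,\sigma^2)$. This is precisely why the paper's proof instead cites \citet{simchowitz2018randomized}: their deformed-Wigner random-matrix construction yields quadratic instances on which \emph{any} randomized algorithm making a bounded number of deterministic first-order queries suffers error $\Omega\prn*{HB^2/T^2}$. So as written, your proof establishes optimality of local-AC-SA only within the restricted class of span-respecting distributed algorithms; to obtain the corollary as stated, the Nesterov/span step must be replaced by a lower bound of the Simchowitz type that is valid for arbitrary randomized algorithms on quadratics.
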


Comparing the bound above for local SGD with the bound for minibatch SGD \eqref{eq:minibatch_convergence}, we see that the local SGD bound is strictly better, due to the first term scaling as $(KR)^{-1}$ as opposed to $R^{-1}$. We note that minibatch SGD can also be accelerated \citep{cotter2011better}, leading to a bound with better dependence on $R$, but this is again outmatched by the bound for the (accelerated) local-AC-SA algorithm above. A similar, improved bound can also be proven when the objective is a strongly convex quadratic.

\paragraph{Prior Work in the Quadratic Setting} Local SGD and related methods have been previously analyzed for quadratic objectives, but in slightly different settings. \citet{jain2017parallelizing} study a similar setting and analyze our ``minibatch SGD'' for $M = 1$ and fixed $KR$, but varying $K$ and $R$. They show that when $K$ is sufficiently small relative to $R$, then minibatch SGD can compete with $KR$ steps of serial SGD. They also show that for fixed $M > 1$ and $bT$, when $b$ is sufficiently small then the average of $M$ independent runs of minibatch SGD with $T$ steps and minibatch size $b$ can compete with $T$ steps of minibatch SGD with minibatch size $Mb$. These results are qualitatively similar to ours, but they analyze a specific algorithm while we are able to provide a guarantee for a broader class of algorithms. 
\citet{dieuleveut2019communication} analyze local SGD on quadratic objectives and show a result analogous to our Theorem \ref{thm:linear-update-alg-quadratics}. However, their result only holds when $M$ is sufficiently small relative to $K$ and $R$. 
Finally, there is a literature on ``one-shot-averaging'' for quadratic objectives, which corresponds to an extreme where the outputs of an algorithm applied to several different training sets are averaged, \citep[e.g.][]{zhang2013divide,zhang2013communication}. These results also highlight similar phenomena, but they do not apply as broadly as Theorem \ref{thm:linear-update-alg-quadratics} and they do not provide as much insight into local SGD specifically.

\vspace{-0.5em}
\section{More Good News: General Convex Objectives}\label{sec:our-upper-bound}
In this section, we present the first analysis of local SGD for general convex objectives that is not dominated by minibatch SGD. For the first time, we can identify a regime of \mkandr~in which local SGD provably performs better than minibatch SGD in the worst case. Furthermore, our analysis dominates all previous upper bounds.
\begin{restatable}{theorem}{ourlocalsgdbound}\label{thm:our-local-sgd-bound}
Let $(f,\mc{D}) \in \mc{F}(H,\lambda,B,\sigma^2)$. When $\lambda = 0$, an appropriate average of the iterates of Local SGD with an optimally tuned constant stepsize satisfies for a universal constant $c$
\begin{align*}
\E\brk*{F(\hat{x}) - F(x^*)} 
\leq c\cdot\min\bigg\{& \frac{HB^2}{KR} + \frac{\sigma B}{\sqrt{MKR}} + \frac{\prn*{H\sigma^2B^4}^{\frac{1}{3}}}{K^{1/3}R^{2/3}},\ 
\frac{HB^2}{KR} + \frac{\sigma B}{\sqrt{KR}} \bigg\}
\end{align*}
If $\lambda > 0$, then an appropriate average of the iterates of Local SGD with decaying stepsizes satisfies for a universal constant $c$
\begin{align*}
\E\brk*{F(\hat{x}) - F(x^*)}
\leq c\cdot\min\bigg\{& HB^2\exp\prn*{-\frac{\lambda KR}{4H}}+ \frac{\sigma^2}{\lambda MKR} + \frac{H\sigma^2\log\prn*{9 + \frac{\lambda KR}{H}}}{\lambda^2 KR^2}, \nonumber\\
&\ HB^2\exp\prn*{-\frac{\lambda KR}{4H}} + \frac{\sigma^2}{\lambda KR}\bigg\}.
\end{align*}
\end{restatable}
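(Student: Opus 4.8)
The plan is to analyze the averaged iterate $\bar{x}_t = \frac{1}{M}\sum_m x_t^m$, which satisfies $\bar{x}_{t+1} = \bar{x}_t - \frac{\eta}{M}\sum_m \nabla f(x_t^m; z_t^m)$ --- an SGD step whose stochastic gradient $\bar{g}_t$ is averaged over the $M$ machines but evaluated at the \emph{distinct} local iterates rather than at $\bar{x}_t$. The whole difficulty lies in this mismatch, which I would control through the consensus error $V_t \defeq \frac{1}{M}\sum_m \E\nrm*{x_t^m - \bar{x}_t}^2$. First I would establish a one-step inequality for $\E\nrm*{\bar{x}_{t+1} - x^*}^2$ in the style of the convex SGD descent lemma: conditionally $\E\bar{g}_t = \frac{1}{M}\sum_m \nabla F(x_t^m)$ and $\E\nrm*{\bar{g}_t}^2 \leq \nrm*{\frac{1}{M}\sum_m \nabla F(x_t^m)}^2 + \frac{\sigma^2}{M}$, where the $\frac{1}{M}$ in the variance is exactly the reduction gained from averaging $M$ independent samples. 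Using convexity and $H$-smoothness I would show that the progress term obeys $\tri*{\frac{1}{M}\sum_m \nabla F(x_t^m),\, \bar{x}_t - x^*} \geq \prn*{F(\bar{x}_t) - F^*} - \frac{H}{2}V_t$ and that $\nrm*{\frac{1}{M}\sum_m \nabla F(x_t^m)}^2 \leq 2H\prn*{F(\bar{x}_t) - F^*} + H^2 V_t$, so that every deviation from the minibatch-SGD computation is paid for by a multiple of $V_t$. For $\eta \leq \frac{1}{2H}$ this telescopes, and Jensen's inequality applied to an appropriate weighted average $\hat{x}$ of the $\bar{x}_t$ gives $\E\brk*{F(\hat{x}) - F^*} \lesssim \frac{B^2}{\eta KR} + \eta\frac{\sigma^2}{M} + H\overline{V}$, where $\overline{V}$ is the time-averaged consensus error.

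The crux is bounding $\overline{V}$, and it is here that the improvement over prior work is won. From $x_{t+1}^m - \bar{x}_{t+1} = (x_t^m - \bar{x}_t) - \eta(\nabla f(x_t^m;z_t^m) - \bar{g}_t)$ the consensus error resets to $0$ at each of the $R$ communication rounds and grows only within a round. Expanding the square, the cross term equals $-2\eta\cdot\frac{1}{M}\sum_m\tri*{x_t^m - \bar{x}_t,\, \nabla F(x_t^m) - \frac{1}{M}\sum_{m'}\nabla F(x_t^{m'})}$ in conditional expectation, and I would lower bound it using the co-coercivity of the gradient of a smooth convex function. This contraction \emph{exactly cancels} the deterministic part of the second-moment term $\eta^2\cdot\frac{1}{M}\sum_m\E\nrm*{\nabla f(x_t^m;z_t^m) - \bar{g}_t}^2$ whenever $\eta \leq \frac{1}{H}$, leaving only the noise. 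The payoff is the clean per-round recursion $\E V_{t+1} \leq \E V_t + \eta^2\sigma^2$, hence $\E V_t \leq \eta^2\sigma^2 K$ and $\overline{V} \lesssim \eta^2\sigma^2 K$ \emph{with no exponential blow-up}. This is precisely what licenses the large stepsize $\eta \asymp \frac{1}{H}$: a naive bound on $\overline{V}$ would force $\eta \lesssim \frac{1}{H\sqrt{K}}$ and degrade the optimization term from $\frac{HB^2}{KR}$ to $\frac{HB^2}{\sqrt{K}R}$, which is exactly the trap that ensnares the earlier analyses in Table~\ref{tab:comparison}.

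Substituting $\overline{V} \lesssim \eta^2\sigma^2 K$ gives $\E\brk*{F(\hat{x}) - F^*} \lesssim \frac{B^2}{\eta KR} + H\eta^2\sigma^2 K + \eta\frac{\sigma^2}{M}$, and optimizing over $\eta \in (0, \frac{1}{2H}]$ reproduces the three terms of the first branch: the boundary $\eta\asymp\frac{1}{H}$ gives $\frac{HB^2}{KR}$, balancing the first two terms gives the consensus term $\frac{\prn*{H\sigma^2 B^4}^{1/3}}{K^{1/3}R^{2/3}}$, and balancing the first and third gives $\frac{\sigma B}{\sqrt{MKR}}$. The second branch $\frac{HB^2}{KR} + \frac{\sigma B}{\sqrt{KR}}$ is a fallback obtained by the same scheme without invoking variance reduction (equivalently, the guarantee inherited from $KR$ serial steps), and it dominates exactly when $M$ is large and $K \lesssim R$, i.e.\ when the consensus term would otherwise swamp the gain from averaging; taking the better of the two yields the stated minimum. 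For $\lambda > 0$ I would run the identical program with strong convexity contributing a $(1-\eta\lambda)$ contraction to the descent lemma --- which produces the $HB^2\exp\prn*{-\frac{\lambda KR}{4H}}$ term --- together with a decaying stepsize schedule; the same co-coercivity argument controls the per-round growth of $V_t$, and summing $H\E V_t$ against the strongly convex weights yields the $\frac{H\sigma^2\log\prn*{9 + \frac{\lambda KR}{H}}}{\lambda^2 KR^2}$ term, with the logarithm arising from the harmonic-type stepsize sum and the $\frac{\sigma^2}{\lambda MKR}$ term being the standard variance-reduced strongly convex rate.

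I expect the co-coercivity cancellation in the consensus recursion to be the main obstacle. It is the one step genuinely particular to Local SGD --- it has no counterpart in minibatch SGD, where all gradients share a single evaluation point --- and it is the one place where being careless costs the factor of $\sqrt{K}$ that separates our rate from the dominated bounds of prior work.
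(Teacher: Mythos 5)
Your plan for the main (first) branch of each minimum is correct and is essentially the paper's own proof. The one-step descent lemma for the averaged iterate with a penalty proportional to the consensus error is the paper's Lemma \ref{lem:ourlemma31} (adapted from \citet{stich2018local}), and your key step---using co-coercivity of the gradient so that the contraction from the inner product cancels the deterministic part of the second moment, yielding the recursion $\E V_{t+1} \leq \E V_t + \eta^2\sigma^2$ and hence $V_t \lesssim \eta^2\sigma^2 K$ with no $K^2$ blow-up---is exactly the content of the paper's Lemmas \ref{lem:co-coercivity}--\ref{lem:distance-bound-between-local-iterates}. The only cosmetic difference is bookkeeping: the paper bounds pairwise distances $\E\|x_t^m - x_t^{m'}\|^2$ via the non-expansiveness of the gradient-descent map (Lemma \ref{lem:contraction-map}) and then converts to distance-from-average (Lemma \ref{lem:distance-to-average-distance-to-other}), whereas you run the same co-coercivity argument directly on $V_t$; both are valid and give the same $O(\eta^2\sigma^2 K)$ bound. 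Your stepsize optimization and the strongly convex outline (constant-then-decaying stepsizes, weighted averaging, the log from the harmonic sum) also match the paper.

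The one genuine flaw is your treatment of the second branch, $\frac{HB^2}{KR} + \frac{\sigma B}{\sqrt{KR}}$. It is \emph{not} obtainable from ``the same scheme without invoking variance reduction'': in your bound $\frac{B^2}{\eta KR} + H\eta^2\sigma^2 K + \eta\sigma^2$, the consensus term does not go away, and at the serial-optimal stepsize $\eta \asymp \frac{B}{\sigma\sqrt{KR}}$ it contributes $H\eta^2\sigma^2 K \asymp \frac{HB^2}{R}$, which is $K$ times larger than the claimed optimization term and can dominate $\frac{\sigma B}{\sqrt{KR}}$ (e.g.\ when $\sigma$ is small). The two things you call ``equivalent'' are therefore not equivalent; the second branch requires the separate argument you mention only in passing---that local SGD is never worse than $KR$ steps of \emph{serial} SGD. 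The paper proves this as Lemma \ref{lem:local-not-worse-than-single}: each machine's trajectory between communications is serial SGD, and the averaging steps cannot hurt because the local iterates are identically distributed, so Jensen's inequality gives $\E\|\frac{1}{M}\sum_{m'} (x_t^{m'} - \eta\nabla f(x_t^{m'};z_t^{m'})) - x^*\|^2 \leq \E\|x_t^m - \eta\nabla f(x_t^m;z_t^m) - x^*\|^2$; this analysis never sees a consensus term at all. Relatedly, your stated regime is backwards: under the natural scaling the serial fallback is the better branch when $K \gtrsim R$ (the consensus term $K^{-1/3}R^{-2/3}$ then exceeds $(KR)^{-1/2}$), not when $K \lesssim R$. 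Since you do name the correct mechanism, this is a repairable gap rather than a failure of the approach, but as written the fallback branch would not follow from your argument.
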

This is proven in Appendix \ref{app:upper-bound}. We use a similar approach as \citet{stich2018local}, who analyzes the behavior of the averaged iterate $\bar{x}_t = \frac{1}{M}\sum_{m=1}^M x_t^m$, even when it is not explicitly computed. They show, in particular, that the averaged iterate evolves almost according to size-$M$-minibatch SGD updates, up to a term proportional to the dispersion of the individual machines' iterates $\frac{1}{M}\sum_{m=1}^M \nrm{\bar{x}_t - x_t^m}^2$. \citeauthor{stich2018local} bounds this with $O(\eta_t^2 K^2 \sigma^2)$, but this bound is too pessimistic---in particular, it holds even if the gradients are replaced by arbitrary vectors of norm $\sigma$. In Lemma \ref{lem:distance-bound-between-local-iterates}, we improve this bound to $O(\eta_t^2 K \sigma^2)$ which allows for our improved guarantee.\footnote{In recent work, \citet{stich2019error} present a new analysis of local-SGD which, in the general convex case is of the form $\frac{MHB^2}{R} + \frac{\sigma B}{\sqrt{MKR}}$. As stated, this is strictly worse than minibatch SGD. However, we suspect that this bound should hold \emph{for any $1 \leq M' \leq M$} because, intuitively, having more machines should not hurt you. If this is true, then optimizing their bound over $M'$ yields a similar result as Theorem \ref{thm:our-local-sgd-bound}.} Our approach resembles that of \citet{khaled2019better}, which we became aware of in the process of preparing this manuscript, however our analysis is more refined. In particular, we optimize more carefully over the stepsize so that our analysis applies for any $M$, $K$, and $R$ (rather than just $M \leq KR$) and shows an improvement over minibatch SGD in a significantly broader regime, including when $\sigma^2 \gg 0$ (see Appendix \ref{app:khaled-discussion} for additional details).

\vspace{-0.5em}
\paragraph{Comparison of our bound with minibatch SGD} 
We now compare the upper bound from Theorem \ref{thm:our-local-sgd-bound} with the guarantee of minibatch SGD.
For clarity, and in order to highlight the role of $M$, $K$, and $R$ in the convergence rate, we will compare rates for general convex objectives when $H = B = \sigma^2 = 1$, and we will also ignore numerical constants and the logarithmic factor in Theorem \ref{thm:our-local-sgd-bound}. In this setting, the worst-case error of minibatch SGD is:
\begin{equation}\label{eq:ub-comparison-mb}
\epsilon_{\textrm{MB-SGD}} = \Theta\prn*{\frac{1}{R} + \frac{1}{\sqrt{MKR}}}
\end{equation}
Our guarantee for local SGD from Theorem \ref{thm:our-local-sgd-bound} reduces to:
\begin{equation}\label{eq:ub-comparison-local}
\epsilon_{\textrm{L-SGD}} \leq O\left(\frac{1}{K^{\frac{1}{3}}R^{\frac{2}{3}}} + \frac{1}{\sqrt{MKR}}\right)
\end{equation}
These guarantees have matching statistical terms of $\frac{1}{\sqrt{MKR}}$, which cannot be improved by any first-order algorithm \citep{nemirovskyyudin1983}. Therefore, in the regime where the statistical term dominates both rates, i.e.~$M^3K\lesssim R$ and $MK\lesssim R$, both algorithms will have similar worst-case performance. When we leave this noise-dominated regime, we see that local SGD's guarantee $K^{-\frac{1}{3}}R^{-\frac{2}{3}}$ is better than minibatch SGD's $R^{-1}$ when $K \gtrsim R$ and is worse when $K \lesssim R$. 
This makes sense intuitively: minibatch SGD benefits from computing very precise gradient estimates, but pays for it by taking fewer gradient steps; conversely, each local SGD update is much noisier, but local SGD is able to make $K$ times more updates.

This establishes that for general convex objectives in the large-$M$ and large-$K$ regime, local SGD will strictly outperform minibatch SGD. However, in the large-$M$ and small-$K$ regime, we are only comparing upper bounds, so it is not clear that local SGD will in fact perform worse than minibatch SGD. Nevertheless, it raises the question of whether this is the best we can hope for from local SGD. Is local SGD truly better than minibatch SGD in some regimes but worse in others? Or, should we believe the intuitive argument suggesting that local SGD is always at least as good as minibatch SGD?
\vspace{-0.5em}
\section{Bad News: Minibatch SGD Can Outperform Local SGD}\label{sec:non-quadratic}\label{sec:lower-bound}
In Section \ref{sec:quadratic}, we saw that when the objective is quadratic, local SGD is strictly better than minibatch SGD, and enjoys an error guarantee that depends only on $KR$ and not $K$ or $R$ individually. In Section \ref{sec:our-upper-bound}, we analyzed local SGD for general convex objectives and showed that local SGD \emph{sometimes} outperforms minibatch SGD. However, we did not show that it \emph{always} does, nor that it is always even competitive with minibatch SGD. We will now show that this is not simply a failure of our analysis---in a certain regime, local SGD really is inferior (in the worst-case) to minibatch SGD, and even to thumb-twiddling SGD. We show this by constructing a simple, smooth piecewise-quadratic objective in three dimensions, on which local SGD performs poorly. We define this hard instance $(f,\mc{D}) \in \mc{F}(H,\lambda,B,\sigma^2)$ as 
\vspace{-0.5em}
\begin{equation}
f(x;z) = \frac{\lambda}{2}\prn*{x_1 - \frac{B}{\sqrt{3}}}^2 + \frac{H}{2}\prn*{x_2 - \frac{B}{\sqrt{3}}}^2 
+ \frac{H}{8}\prn*{\prn*{x_3 - \frac{B}{\sqrt{3}}}^2 + \pp{x_3 - \frac{B}{\sqrt{3}}}^2} + zx_3
\label{eq:lower-bound-construction}
\end{equation}
where $\P\brk*{z=\sigma} = \P\brk*{z=-\sigma} = \frac{1}{2}$ and $\pp{y} \equiv \max\crl{y,0}$.
\vspace{-0.5em}
\begin{restatable}{theorem}{lowerbound}\label{thm:lower-bound}
For $0 \leq \lambda \leq \frac{H}{16}$, there exists $(f,\mc{D}) \in \mc{F}(H,\lambda,B,\sigma^2)$ such that for any $K \geq 2$ and $M,R \geq 1$, local SGD initialized at $0$ with any fixed stepsize, will output a point $\hat{x}$ such that for a universal constant $c$
\begin{equation}
\E F(\hat{x}) - \min_x F(x) 
\geq c\cdot \min\crl*{\frac{H^{1/3}\sigma^{2/3}B^{4/3}}{K^{2/3}R^{2/3}}, \frac{H\sigma^2}{\lambda^2K^2R^2}, H B^2} + c\cdot \min\crl*{\frac{\sigma B}{\sqrt{MKR}}, \frac{\sigma^2}{\lambda MKR}}.
\end{equation}
\end{restatable}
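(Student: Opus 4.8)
My plan is to exploit the fact that the hard instance \eqref{eq:lower-bound-construction} separates across its three coordinates and that only the third coordinate carries gradient noise. First I would record the roles of the coordinates: coordinate $1$ (curvature $\lambda$, noiseless) sets the strong-convexity constant and, in the strongly convex case, is what pins down the admissible stepsize; coordinate $2$ (curvature exactly $H$, noiseless) saturates the smoothness constraint; and coordinate $3$ --- a piecewise quadratic whose deterministic gradient $g(y)=\tfrac H4 y+\tfrac H4\pp{y}$ (writing $y=x_3-\tfrac B{\sqrt3}$) has a \emph{kink} at its optimum, plus additive noise $z\in\crl{\pm\sigma}$ --- is the source of all the trouble. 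Since $F$ is a sum of one-dimensional functions and the noise only touches $x_3$, the suboptimality decomposes and the whole argument reduces to analyzing the scalar noisy process in coordinate $3$, together with the trivial noiseless convergence in coordinates $1$ and $2$ and the bookkeeping that $\nrm{x^*}=B$.

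The second additive term, $\min\crl{\tfrac{\sigma B}{\sqrt{MKR}},\tfrac{\sigma^2}{\lambda MKR}}$, I would obtain by the standard statistical route. The $MK$ gradients averaged per round reduce the variance only to $\sigma^2/M$, and for any stepsize the output inherits an irreducible fluctuation whose contribution to $\E F(\hat x)-F^*$, after balancing the $M$-reduced noise term $\eta\sigma^2/M$ against the optimization term $\tfrac{B^2}{\eta KR}$, is $\tfrac{\sigma B}{\sqrt{MKR}}$ in the convex case and $\tfrac{\sigma^2}{\lambda MKR}$ in the strongly convex case. This reproduces the classical Nemirovski--Yudin and strongly convex statistical floors, holds for any first-order algorithm, and is routine.

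The heart of the proof is the first additive term, the \emph{consensus bias} created by the kink. Within a round the $M$ machines start from a common iterate but then diverge: each machine's coordinate-$3$ iterate performs a noisy walk whose variance grows like $\eta^2\sigma^2$ per step. Because $g$ is convex with a corner, Jensen's inequality gives $\E g(y)>g(\E y)$ as soon as the walk places mass on both sides of the kink, so the expected gradient that drives the \emph{averaged} iterate $\bar x_3$ is systematically too large and pushes $\bar x_3$ away from the optimum toward the flatter (curvature $H/4$) side. Crucially this drift lives in the \emph{mean} trajectory, so it is independent of $M$ --- averaging kills variance, not bias --- and it is entirely absent for minibatch SGD, whose $MK$ gradients are all evaluated at a single point. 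I would make this quantitative by writing a scalar recursion for $\mu_t=\E\brk*{\bar x_3^{(t)}-\tfrac B{\sqrt3}}$, lower-bounding the mass the within-round walk puts on the steep side (so that $\E\pp{y}$ is at least a constant multiple of the per-step spread $\eta\sigma$), and showing the recursion has a stable fixed point a distance $\gtrsim\eta\sigma$ from the optimum that survives the averaging in every communication step. This produces a residual suboptimality $\gtrsim H\eta^2\sigma^2$ for every stepsize.

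The remaining step is the stepsize tradeoff, which is also where I expect the main difficulty. The consensus bias $H\eta^2\sigma^2$ grows with $\eta$ while the optimization error $\tfrac{B^2}{\eta KR}$ (from the finite step budget $T=KR$) shrinks with $\eta$; in the strongly convex case the stepsize is instead forced down to $\eta\sim\tfrac1{\lambda KR}$, which immediately turns the bias floor into $\tfrac{H\sigma^2}{\lambda^2K^2R^2}$. For general convex objectives one takes the worst case over all stepsizes, $\min_\eta\max\crl{H\eta^2\sigma^2,\ \tfrac{B^2}{\eta KR}}\asymp\tfrac{(H\sigma^2B^4)^{1/3}}{(KR)^{2/3}}$, with the cap $HB^2$ handling stepsizes so small that the iterate never leaves its initialization. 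The delicate part is establishing the consensus-bias lower bound \emph{uniformly in $\eta$} and for the algorithm's actual (averaged) output: controlling the full within-round distribution of the iterates and its interaction with the corner, verifying that the bias is diluted neither across machines nor by output averaging, and checking that the averaged output cannot converge faster than $\tfrac{B^2}{\eta KR}$ despite the curvature present in coordinates $2$ and $3$. I expect that the bias one can rigorously certify this way uses only the single-step spread $\eta\sigma$ rather than the true within-round spread $\eta\sigma\sqrt K$; this conservative estimate costs a factor of $K$ in the bias and is exactly what accounts for the $K^{1/3}$ gap between this lower bound and the matching term of Theorem~\ref{thm:our-local-sgd-bound}.
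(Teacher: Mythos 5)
Your reading of the bias mechanism is exactly the paper's: the corner in the third coordinate makes the expected SGD iterate drift toward the flat side to a stable fixed point at distance $\asymp\eta\sigma$ from the optimum, the drift lives in the mean trajectory so machine-averaging and iterate-averaging cannot remove it, and it yields a suboptimality floor of order $H\eta^2\sigma^2$ at every stepsize (the paper's Lemmas \ref{lem:sgd-two-step}--\ref{lem:local-sgd-on-stochastic-coordinate} establish precisely this, with the fixed point near $-\eta\sigma/48$). Your account of the statistical term and of the $K^{1/3}$ slack relative to Theorem \ref{thm:our-local-sgd-bound} also matches the paper.

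The gap is in how you convert the floor $H\eta^2\sigma^2$ into the stated bound. You fix the curvature of the first coordinate at $\lambda$ and, in the general convex case $\lambda=0$, invoke an optimization lower bound $B^2/(\eta KR)$, flagging as the delicate step the claim that the output "cannot converge faster than $B^2/(\eta KR)$ despite the curvature present in coordinates $2$ and $3$." That claim is false for your construction: every coordinate with nonzero curvature has curvature at least $H/4$, so the deterministic part of the error contracts linearly, like $(1-\eta H/4)^{KR}$, not like $B^2/(\eta KR)$. Concretely, taking $\eta \asymp \log(HKR)/(HKR)$ makes local SGD's total error on your instance at most of order $HB^2(KR)^{-2} + \sigma^2\log(HKR)/(HKR)$, which in the natural scaling is far below $H^{1/3}\sigma^{2/3}B^{4/3}(KR)^{-2/3}$; hence no $\min_\eta\max\{\cdot,\cdot\}$ tradeoff over your instance can yield the theorem, and in the strongly convex case your fixed curvature $\lambda$ would give only $\min\{\lambda B^2,\, H\sigma^2/(\lambda^2K^2R^2)\}$, which is vacuous as $\lambda\to 0$ and strictly weaker than the stated three-way minimum for small $\lambda$. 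The missing idea is the paper's tunable-curvature coordinate: the first coordinate is given curvature $\mu$, a free parameter in $[\lambda, H/16]$ (legitimate, since being more strongly convex keeps the instance in $\mathcal{F}(H,\lambda,B,\sigma^2)$). Gradient descent on that coordinate makes essentially no progress unless $\eta \gtrsim 1/(\mu KR)$, so for every stepsize the error is at least a constant times $\min\{\mu B^2,\, H\sigma^2/(\mu^2 K^2R^2)\}$ (plus the divergence cap from coordinate $2$ when $\eta > 2/H$). Choosing $\mu \asymp (H\sigma^2/(B^2K^2R^2))^{1/3}$, clipped to $[\lambda, H/16]$, is exactly what produces the three terms $H^{1/3}\sigma^{2/3}B^{4/3}(KR)^{-2/3}$, $H\sigma^2/(\lambda^2K^2R^2)$, and $HB^2$ in the statement; without this device the first additive term cannot be obtained.
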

We defer a detailed proof of the Theorem to Appendix \ref{app:lower-bound}. Intuitively, it relies on the fact that for non-quadratic functions, the SGD updates are no longer linear as in Section \ref{sec:quadratic}, and the local SGD dynamics introduce an additional bias term which does not depend\footnote{To see this, consider for example the univariate function $f(x;z)=x^2+[x]_+^2+zx$ where $z$ is some zero-mean bounded random variable. It is easy to verify that even if we have infinitely many machines ($M=\infty$), running local SGD for a few iterations starting from the global minimum $x=0$ of $F(x):=\E_{z}[f(x;z)]$ will generally return a point bounded away from $0$. In contrast, minibatch SGD under the same conditions will remain at $0$.} on $M$, and scales poorly with $K,R$. In fact, this phenomenon is not unique to our construction, and can be expected to exist for any ``sufficiently'' non-quadratic function. With our construction, the proof proceeds by showing that the suboptimality is large unless $x_3 \approx \frac{B}{\sqrt{3}}$ but local SGD introduces a bias which causes $x_3$ to ``drift'' in the negative direction by an amount proportional to the stepsize. On the other hand, optimizing the first term of the objective requires the stepsize to be relatively large. Combining these yields the first term of the lower bound. The second term is classical and holds even for first-order algorithms that compute $MKR$ stochastic gradients sequentially \citep{nemirovskyyudin1983}.

In order to compare this lower bound with Theorem \ref{thm:our-local-sgd-bound} and with minibatch SGD, we again consider the general convex setting with $H = B = \sigma^2 = 1$. Then, the lower bound reduces to $K^{-\frac{2}{3}}R^{-\frac{2}{3}} + \prn{MKR}^{-\frac{1}{2}}$. Comparing this to Theorem \ref{thm:our-local-sgd-bound}, we see that our upper bound is tight up to a factor of $K^{-\frac{1}{3}}$ in the optimization term. Furthermore, comparing this to the worst-case error of minibatch SGD \eqref{eq:ub-comparison-mb}, we see that local SGD is indeed worse than minibatch SGD in the worst case when $K$ is small enough relative to $R$. The cross-over point is somewhere between $K \leq \sqrt{R}$ and $K \leq R$; for smaller $K$, minibatch SGD is better than local SGD in the worst case, for larger $K$, local SGD is better in the worst case. Since the optimization terms of minibatch SGD and thumb-twiddling SGD are identical, this further indicates that local SGD is even outperformed by thumb-twiddling SGD in the small $K$ and large $M$ regime.

Finally, it is interesting to note that in the \emph{strongly convex} case (where $\lambda>0$), the gap between local GD and minibatch SGD can be even more dramatic: In that case, the optimization term of minibatch SGD scales as $\exp(-R)$ (see \citet{stich2019unified} and references therein), while our theorem implies that local SGD cannot obtain a term better than $(KR)^{-2}$. This implies an exponentially worse dependence on $R$ in that term, and a worse bound as long as $ R\gtrsim \log(K)$. 

\begin{figure*}[!tb]
\centering
\includegraphics[width=0.6\textwidth]{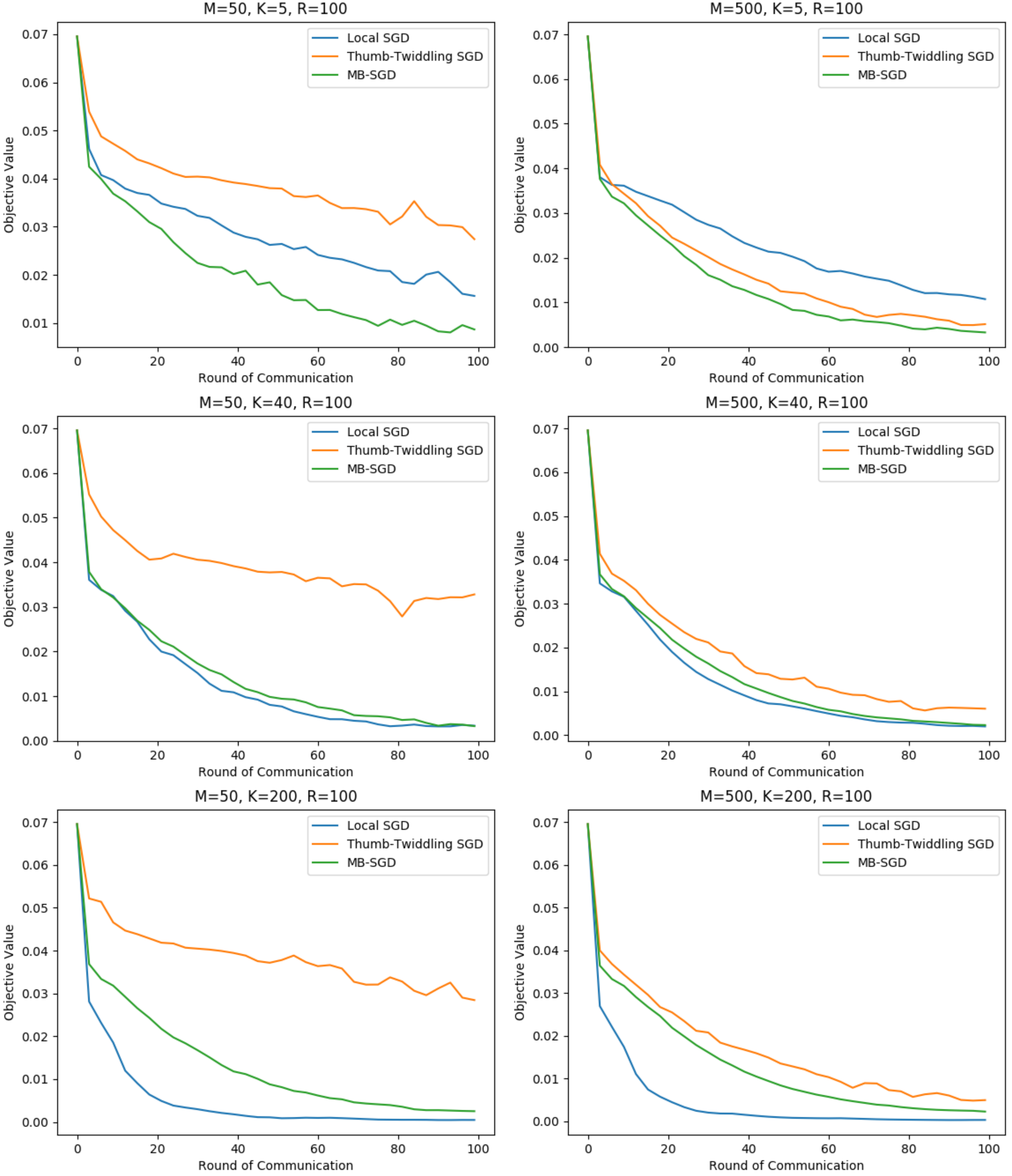}
\caption{\small We constructed a dataset of 50000 points in $\R^{25}$ with the $i$th coordinate of each point distributed independently according to a Gaussian distribution $\mc{N}(0, \frac{10}{i^2})$. The labels are generated via $\mathbb{P}\brk{y = 1\,|\, x} = \sigma(\min\crl{\tri*{w_1^*, x} + b_1^*, \tri*{w_2^*, x} + b_2^*})$ for $w_1^*, w_2^* \sim \mc{N}(0,I_{25\times 25})$ and $b_1^*, b_2^* \sim \mc{N}(0,1)$, where $\sigma(a) = 1/(1+\exp(-a))$ is the sigmoid function, i.e.~the labels correspond to an intersection of two halfspaces with label noise which increases as one approaches the decision boundary. We used each algorithm to train a linear model with a bias term to minimize the logistic loss over the 50000 points, i.e.~$f$ is the logistic loss on one sample and $\mc{D}$ is the empirical distribution over the 50000 samples. For each $M$, $K$, and algorithm, we tuned the constant stepsize to minimize the loss after $r$ rounds of communication individually for each $1 \leq r \leq R$. Let $x_{\mathsf{A},r,\eta}$ denote algorithm $\mathsf{A}$'s iterate after the $r$th round of communication when using constant stepsize $\eta$. The plotted lines are an approximation of $g_{\mathsf{A}}(r) = \min_{\eta} F(x_{\mathsf{A},r,\eta}) - F(x^*)$ for each $\mathsf{A}$ where the minimum is calculated using grid search on a log scale.} 
\label{fig:LR-experiments}
\end{figure*}

In order to prove Theorem \ref{thm:lower-bound} we constructed an artificial, but easily analyzable, situation where we could prove analytically that local SGD is worse than mini-batch. In Figure \ref{fig:LR-experiments}, we also demonstrate the behaviour empirically on a logistic regression task, by plotting the suboptimality of local SGD, minibatch SGD, and thumb-twiddling SGD iterates with optimally tuned stepsizes. As is predicted by Theorem \ref{thm:lower-bound}, we see local SGD goes from performing worse than minibatch in the small $K=5$ regime, but improving relative to the other algorithms as $K$ increases to $40$ and then $200$, when local SGD is far superior to minibatch. For each fixed $K$, increasing $M$ causes thumb-twiddling SGD to improve relative to minibatch SGD, but does not have a significant effect on local SGD, which is consistent with introducing a bias which depends on $K$ but not on $M$. This highlights that the ``problematic regime'' for local SGD is where there are few iterations per round.

\section{Future work}
In this paper, we provided the first analysis of local SGD showing improvement over minibatch SGD in a natural setting, but also demonstrated that local SGD can sometimes be worse than minibatch SGD, and is certainly not optimal.

As can be seen from Table \ref{tab:comparison}, our upper and lower bounds for local SGD are still not tight.  The first term depends on $K^{1/3}$ versus $K^{2/3}$---we believe the correct behaviour might be in between, namely $\sqrt{K}$, matching the bias of $K$-step SGD. 
The exact worst case behaviour of local SGD is therefore not yet resolved.

But beyond obtaining a precise analysis of local SGD, our paper highlights a more important challenge: we see that local SGD is definitely {\em not} optimal, and does not even always improve over minibatch SGD.  Can we suggest an optimal algorithm in this setting?  Or at least a method that combines the advantages of both local SGD and minibatch SGD and enjoys guarantees that dominate both?  Our work motivates developing such an algorithm, which might also have benefits in regimes where local SGD is already better than minibatch SGD. 

To answer this question will require new upper bounds and perhaps also new lower bounds. Looking to the analysis of local AC-SA for quadratic objectives in Corollary \ref{cor:quadratic-localSGD-ACSA}, we might hope to design an algorithm which achieves error
\begin{equation}
\E F(\hat{x}) - F(x^*) \leq O\prn*{\frac{HB^2}{(KR)^2} + \frac{\sigma B}{\sqrt{MKR}}}
\end{equation}
for general convex objectives. That is, an algorithm which combines the optimization term for $KR$ steps of accelerated gradient descent with the optimal statistical term. If this were possible, it would match the lower bound of \citet{woodworth2018graph} and therefore be optimal with respect to this communication structure.


\paragraph{Acknowledgements} 
This work is partially supported by NSF-CCF/BSF award 1718970/2016741, NSF-DMS 1547396, and a Google Faculty Research Award. BW is supported by a Google PhD Fellowship. Part of this work was done while NS was visiting Google.  Work by SS was done while visiting TTIC. 

\bibliography{bibliography}
\clearpage
\appendix
\onecolumn
\section{Comparisons Between Existing Local SGD Analyses and Minibatch SGD}\label{app:comparison-of-existing-work}
In this section, we describe the derivation of the entries in Table \ref{tab:comparison} for the cases in which it is not obvious. 
In particular, these previous analyses were stated based on different assumptions (stronger as well as weaker) which need to be reconciled with ours. Since local SGD is often analyzed in the strongly convex setting (or with weaker assumptions that are implied by strong convexity), we will make use of the following fact: If an algorithm guarantees error at most $\epsilon(\lambda)$ when applied to a $\lambda$-strongly convex function, then we can apply the algorithm to $F(x) + \frac{\lambda}{2}\nrm{x}^2$ in order to ensure error $\epsilon(\lambda) + \frac{\lambda}{2}\nrm{x^*}^2$. This applies for any $\lambda > 0$, so we can actually infer that the algorithm, in fact, guarantees error at most $\min_{\lambda > 0} \epsilon(\lambda) + \frac{\lambda}{2}\nrm{x^*}^2$.

Since our purpose is to show that these analyses are dominated by minibatch SGD, the entries in the table are, in some sense, the most optimistic interpretation of the bounds stated in the paper. For example, if error $\epsilon_1(\lambda) + \epsilon_2(\lambda)$ is guaranteed for strongly convex functions, we actually enter $\frac{1}{2}\min_{\lambda > 0} \epsilon_1(\lambda) + \frac{\lambda}{2}\nrm{x^*}^2 + \frac{1}{2}\min_{\lambda > 0} \epsilon_2(\lambda) + \frac{\lambda}{2}\nrm{x^*}^2$ into the table, which is a lower bound on the actual guarantee.

\begin{table}[]
\begin{center}
	\renewcommand{\arraystretch}{2}%
	\resizebox{\linewidth}{!}{
	\begin{tabular}{ |p{.2\linewidth}|p{.1\linewidth}|p{.7\linewidth}| } 
		\hline
		Reference & Setting & Best Convergence rate (i.e., $\E\left[{F(x^{output})-F(x^\star)}\right] \lesssim$)\\ 
		\hline 
		\multirow{2}{6em}{ \citet{stich2018local}} & SC & $\frac{\sigma^2}{\lambda MKR} + \frac{H\sigma^2}{\lambda^2 M K^2R^2} + \frac{H(H^2B^2+\sigma^2)}{\lambda^2 R^2} + \frac{H^3(H^2B^2 + \sigma^2)}{\lambda^4 K^3 R^3} + \frac{H^2B^2 + \sigma^2}{\lambda R^3}$
		\\
		\cline{2-3}
		&Non-SC& $\frac{\sigma B}{(MKR)^{1/2}} + \frac{HB^2\left(1+ (H^{-1}B^{-1}\sigma)^{2/3}\right) }{R^{2/3}} + \frac{HB^2\left(1 + (H^{-1}B^{-1}\sigma)^{2/5}\right)}{(KR)^{3/5}} + \frac{HB^2+B\sigma}{R^{3/2}}$
		\\
		\hline 
		\multirow{2}{6em}{ \citet{stich2019error}}  &SC& $HKMB^2\exp\left(-\frac{\lambda R}{10HM}\right) + \frac{\sigma^2}{\lambda MKR}$\\
		\cline{2-3}
		&Non-SC& $\frac{HMB^2}{R} + \frac{\sigma B}{\sqrt{MKR}}$\\
		\hline
		\multirow{2}{6em}{\citet{khaled2019better}} &SC& $\frac{HB^2}{K^2R^2} + \frac{H\sigma^2}{\lambda^2 MKR} + \frac{H^2\sigma^2}{\lambda^3 K R^2} $ \\
		\cline{2-3}
		&Non-SC& $\frac{H B^2}{\sqrt{KRM}} + \frac{\sigma^2}{H\sqrt{KRM}} + \frac{\sigma^2 M}{HR}$\\
		\hline
	\end{tabular}
	}
	\label{tab:full_comparison}
	\caption{Best convergence rates up to constants in previous analyses under our assumptions.}
\end{center}    
\end{table}

For reference, we restate the worst-case guarantee of minibatch SGD:
\begin{equation}\label{eq:appendix-MBSGD-error}
\epsilon_{\textrm{MB-SGD}} \asymp \frac{HB^2}{R} + \frac{\sigma B}{\sqrt{MKR}}
\end{equation}
\subsection{\citet{stich2018local}}\label{app:stich18-discussion}
The paper makes the same assumptions as us but, in addition, assumes that the stochastic gradients are uniformly bounded, i.e. $\underset{z\sim\mc{D}}{\E}\left[\norm{\nabla f(x;z)}^2\right] \leq G^2,\ \forall x$. We relax this assumption by noting the following,
\begin{align}
\underset{z\sim\mc{D}}{\E}\left[\norm{\nabla f(x;z)}^2\right] &= \underset{z\sim\mc{D}}{\E}\left[\norm{\nabla f(x;z) - \nabla f(x^\star;z) + \nabla f(x^\star;z) - \nabla F(x^\star)}^2\right] \\
&\lesssim \underset{z\sim\mc{D}}{\E}\left[\norm{\nabla f(x;z) - \nabla f(x^\star;z)}^2\right] + \underset{z\sim\mc{D}}{\E}\left[\norm{\nabla f(x^\star;z) - \nabla F(x^\star)}^2\right] \\
&\lesssim H^2\norm{x-x^\star}^2 + \sigma^2 \\
&\lesssim H^2\norm{x^\star}^2 + \sigma^2 \\
&\leq H^2B^2 + \sigma^2 \label{eq:stich18-grad-bound}
\end{align}
In the last step we make the optimistic assumption that the iterates stray no farther from $x^*$ than they were at initialization, i.e.~$\nrm{x_0 - x^*} \leq B$. This \emph{may not be true}, so this bound is optimistic. On the other hand, it is clear that one cannot generally upper bound $\underset{z\sim\mc{D}}{\E}\left[\norm{\nabla f(x;z)}^2\right]$ any tighter than this in our setting. Since our goal is anyways to show that the analysis of \citet{stich2018local} is deficient, we continue using the bound \eqref{eq:stich18-grad-bound}. This immediately gives the result for the strongly-convex setting in \cref{tab:full_comparison}. For the non-strongly setting we extend their result by optimizing each term separately as $\epsilon(\lambda) + \frac{\lambda}{2}B^2$ and ignore the constants.   

\subsection{\citet{stich2019error}}\label{app:stich19-discussion}

The paper relaxes the convexity assumption, by assuming F is $\lambda^\star$-quasi convex, i.e., $\forall x\ F(x^\star) \leq F(x) + \inner{\nabla F}{x^\star - x} + \frac{\lambda^\star}{2}\norm{x-x^\star}^2$. This condition can also hold for certain non-convex functions and is implied by $\lambda^\star$-strong convexity. Besides they assume $H$-smoothness of $F$ and multiplicative noise for the stochastic gradients, i.e., $\underset{z\sim\mc{D}}{\E}\left[\norm{\nabla f(x;z) - \nabla F(x)}^2\right] \leq N\norm{x-x^\star}^2 + \sigma_\star^2$. The latter assumption is a relaxation of the uniform upper bound on the variance of the stochastic gradients, which we have assumed. Thus to compare to their result we set $N=0$ upper bounding the stochastic variance by $\sigma^2$ and use the strong convexity constant $\lambda$ instead of $\lambda^\star$. For the non-strongly convex setting we use their rate, along with our uniform variance bound. Besides they use specific learning rate and averaging schedules to optimize their rates. Both these rates are given in \Cref{tab:full_comparison}. For the general convex setting, we believe their dependence in $M$ is poor and is improved upon by our upper bound in \Cref{sec:our-upper-bound}.

\subsection{\citet{khaled2019better}}\label{app:khaled-discussion}
The relevant analysis from \citet{khaled2019better} is given in their Corollary 2, which is their only analysis that upper bounds the error in terms of the objective function suboptimality and in the setting where each machine receives i.i.d.~stochastic gradients. Their Corollary 2 states that when $M \leq KR$, the error is bounded by\footnote{There is a typo in their statement which omits the factor of $H$ ($L$ in their notation) from the numerator of the first term.}
\begin{equation}\label{eq:khaled-guarantee}
\epsilon_{\textrm{L-SGD}} \leq \frac{HB^2}{\sqrt{MKR}} + \frac{\sigma^2}{H\sqrt{MKR}} + \frac{\sigma^2 M}{H R}
\end{equation}
In the case where $H=B=\sigma^2=1$, it is clear that this is strictly worse than minibatch SGD since $\frac{M}{R} > \frac{1}{R}$. However, consider the case of arbitrary $H$, $B$ and $\sigma^2$ and suppose \citet{khaled2019better}'s guarantee is less than $\frac{\sigma B}{\sqrt{KR}}$, in which case
\begin{equation}
\frac{HB^2}{\sqrt{MKR}} \leq \frac{\sigma B}{\sqrt{KR}} \implies M \geq \frac{H^2B^2}{\sigma^2} \implies \frac{\sigma^2 M}{H R} \geq \frac{HB^2}{R}
\end{equation}
Consequently, \eqref{eq:khaled-guarantee} is either greater than $\frac{\sigma B}{\sqrt{KR}}$ or greater than $\frac{HB^2}{R}$. This does not mean that their upper bound is worse than minibatch SGD. However, it \emph{is} worse than minibatch SGD unless $\frac{\sigma B}{\sqrt{KR}} \leq \frac{HB^2}{R}$.

If we interrogate what this regime corresponds to, we see that it is actually a trivial regime where $KR$ steps of serial SGD, which achieves error $\frac{HB^2}{KR} + \frac{\sigma B}{\sqrt{KR}} \leq \frac{HB^2}{R}$, is actually better than minibatch SGD. That is, rather than implementing minibatch SGD distributed across the $M$ machines, we are actually better off just ignoring $M-1$ of the available machines and doing serial SGD. If this is really the right thing to do, then there was never any need for parallelism in the first place, and thus there is no reason to use local SGD, which performs no better than serial SGD in this case anyways.

\section{Proofs from Section \ref{sec:quadratic}}\label{app:quadratic}
\quadraticthm*
\begin{proof}
We will show that the average of the iterates at any particular time $\bar{x}_t = \frac{1}{M}\sum_{m=1}^M x_t^m$ evolves according to $\mc{A}$ with a lower variance stochastic gradient, even though this average iterate is not explicitly computed by the algorithm at every step. It is easily confirmed from \eqref{eq:local-A-update} that
\begin{align}
\bar{x}_{t+1} &= 
\frac{1}{M}\sum_{m'=1}^M\mc{L}^{(t)}_2\prn*{x_1^{m'},\dots,x_t^{m'},\nabla f\prn*{\mc{L}^{(t)}_1\prn*{x_1^{m'},\dots,x_t^{m'}};z_t^{m'}}} \\
&= \mc{L}^{(t)}_2\prn*{\bar{x}_1,\dots,\bar{x}_t,\frac{1}{M}\sum_{m'=1}^M\nabla f\prn*{\mc{L}^{(t)}_1\prn*{x_1^{m'},\dots,x_t^{m'}};z_t^{m'}}}
\end{align}
where we used that $\mc{L}^{(t)}_2$ is linear. We will now show that $\frac{1}{M}\sum_{m'=1}^M\nabla f\prn*{\mc{L}^{(t)}_1\prn*{x_1^{m'},\dots,x_t^{m'}};z_t^{m'}}$ is an unbiased estimate of $\nabla F\prn*{\mc{L}^{(t)}_1\prn*{\bar{x}_1,\dots,\bar{x}_t}}$ with variance bounded by $\frac{\sigma^2}{M}$. Therefore, $\bar{x}_{t+1}$ is updated exactly according to $\mc{A}$ with a lower variance stochastic gradient.

By the linearity of $\mc{L}^{(t)}_1$ and $\nabla F$ 
\begin{equation}
\E\brk*{\frac{1}{M}\sum_{m'=1}^M\nabla f\prn*{\mc{L}^{(t)}_1\prn*{x_1^{m'},\dots,x_t^{m'}};z_t^{m'}}} 
= \frac{1}{M}\sum_{m'=1}^M\nabla F\prn*{\mc{L}^{(t)}_1\prn*{x_1^{m'},\dots,x_t^{m'}}} = \nabla F\prn*{\mc{L}^{(t)}_1\prn*{\bar{x}_1,\dots,\bar{x}_t}}
\end{equation}
Furthermore, since the $z_t^m$ on each machine are independent, and $\sup_x \E\nrm*{\nabla f(x;z) - \nabla F(x)}^2 \leq \sigma^2$,
\begin{multline}
\E\nrm*{\frac{1}{M}\sum_{m'=1}^M\nabla f\prn*{\mc{L}^{(t)}_1\prn*{x_1^{m'},\dots,x_t^{m'}};z_t^{m'}} - \E\brk*{\frac{1}{M}\sum_{m'=1}^M\nabla f\prn*{\mc{L}^{(t)}_1\prn*{x_1^{m'},\dots,x_t^{m'}};z_t^{m'}}}}^2\\
= \frac{1}{M^2}\sum_{m=1}^M \E\nrm*{\nabla f\prn*{\mc{L}^{(t)}_1\prn*{x_1^{m},\dots,x_t^{m}};z_t^{m}} - \nabla F\prn*{\mc{L}^{(t)}_1\prn*{x_1^{m},\dots,x_t^{m}}}}^2 \leq \frac{\sigma^2}{M}
\end{multline}
\end{proof}

\quadraticcorollary*
\begin{proof}
It is easily confirmed that SGD and AC-SA \cite{ghadimi2013optimal} are linear update algorithms, which allows us to apply Theorem \ref{thm:linear-update-alg-quadratics}. In addition, \citet{simchowitz2018randomized} shows that any randomized algorithm that accesses an \emph{deterministic} first order oracle at most $T$ times will have error at least $\frac{cHB^2}{T^2}$ in the worst case for an $H$-smooth, convex quadratic objective, for some universal constant $c$. Therefore, the first term of local-AC-SA's guarantee cannot be improved. The second term of the guarantee also cannot be improved \cite{nemirovskyyudin1983}---in fact, this term cannot be improved even by an algorithm which is allowed to make $MKR$ \emph{sequential} calls to a stochastic gradient oracle.
\end{proof}

\section{Proof of Theorem \ref{thm:our-local-sgd-bound}}\label{app:upper-bound}
Before we prove Theorem \ref{thm:our-local-sgd-bound}, we will introduce some notation. Recall that the objective is of the form $F(x) := \E_{z\sim\mc{D}}\brk*{f(x;z)}$. Let $\eta_t$ denote the stepsize used for the $t$th overall iteration. Let $x_t^m$ denote the $t$th iterate on the $m$th machine, and let $\bxt = \frac{1}{M}\sum_{m=1}^M x_t^m$ denote the averaged $t$th iterate. The vector $\bxt$ may not actually be computed by the algorithm, but it will be central to our analysis. We will use $\nabla f(x_t^m; z_t^m)$ to denote the stochastic gradient computed at $x_t^m$ by the $m$th machine at iteration $t$, and $g_t = \frac{1}{M}\sum_{m=1}^M \nabla f(x_t^m; z_t^m)$ will denote the average of the stochastic gradients computed at time $t$. Finally, let $\bgt = \frac{1}{M}\sum_{m=1}^M \nabla F(x_t^m)$ denote the average of the full gradients computed at the individual iterates. 
\begin{lemma}[See Lemma 3.1 \cite{stich2018local}]\label{lem:ourlemma31}
Let $F$ be $H$-smooth and $\lambda$-strongly convex, let\\ $\sup_x \E\nrm*{\nabla f(x;z) - \nabla F(x)}^2 \leq \sigma^2$, and let $\eta_t \leq \frac{1}{4H}$, then the iterates of local SGD satisfy
\[
\E\brk*{F(\bxt) - F^*} \leq \prn*{\frac{2}{\eta_t} - 2\lambda}\E\nrm*{\bxt - x^*}^2 - \frac{2}{\eta_t}\E\nrm*{\bx_{t+1} -x^*}^2 + \frac{2\eta_t\sigma^2}{M} + \frac{4H}{M}\sum_{m=1}^M \E\nrm*{\bxt - x_t^m}^2
\]
\end{lemma}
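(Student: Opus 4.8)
The plan is to track the \emph{virtual} averaged iterate $\bxt = \frac{1}{M}\sum_m x_t^m$ and show it obeys an SGD-like recursion regardless of whether step $t$ is a communication step. First I would establish the key identity $\bx_{t+1} = \bxt - \eta_t g_t$, where $g_t = \frac{1}{M}\sum_m \nabla f(x_t^m;z_t^m)$: on a non-communication step this is immediate by averaging the per-machine updates, and on a communication step every machine sets $x_{t+1}^m = \bxt - \eta_t g_t$, so the average is unchanged. Consequently the analysis reduces to a one-step descent bound for a single virtual SGD iterate whose stochastic gradient is $g_t$, with the crucial caveat that $g_t$ aggregates gradients evaluated at the dispersed points $x_t^m$ rather than at $\bxt$ itself.

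Next I would expand $\nrm{\bx_{t+1}-x^*}^2 = \nrm{\bxt-x^*}^2 - 2\eta_t\inner{g_t}{\bxt-x^*} + \eta_t^2\nrm{g_t}^2$ and take expectations conditioned on the time-$t$ iterates. Writing $\bgt = \frac{1}{M}\sum_m\nabla F(x_t^m) = \E g_t$, I would peel off the variance: since the samples $z_t^m$ are independent across machines and each contributes variance at most $\sigma^2$, we get $\E\nrm{g_t}^2 \le \nrm{\bgt}^2 + \frac{\sigma^2}{M}$. This is exactly where the $\frac{1}{M}$ variance reduction enters.

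The technical heart is bounding the two remaining deterministic quantities, and both require converting gradients evaluated at $x_t^m$ into statements about $F(\bxt)-F^*$, paying a dispersion penalty $\frac{1}{M}\sum_m\nrm{\bxt - x_t^m}^2$ each time. For the inner product I would combine strong convexity, $\inner{\nabla F(x_t^m)}{x_t^m - x^*} \ge F(x_t^m) - F^* + \frac{\lambda}{2}\nrm{x_t^m-x^*}^2$, with the smoothness upper bound $F(\bxt) \le F(x_t^m) + \inner{\nabla F(x_t^m)}{\bxt - x_t^m} + \frac{H}{2}\nrm{\bxt - x_t^m}^2$ to swap $F(x_t^m)$ for $F(\bxt)$; averaging over $m$ and using Jensen, $\frac{1}{M}\sum_m\nrm{x_t^m - x^*}^2 \ge \nrm{\bxt - x^*}^2$, yields $\inner{\bgt}{\bxt - x^*} \ge F(\bxt) - F^* + \frac{\lambda}{2}\nrm{\bxt-x^*}^2 - \frac{H}{2M}\sum_m\nrm{\bxt - x_t^m}^2$. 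For the squared gradient I would use $\nrm{\nabla F(x)}^2 \le 2H(F(x)-F^*)$, valid for smooth convex $F$, to get $\nrm{\bgt}^2 \le \frac{2H}{M}\sum_m (F(x_t^m)-F^*)$, then smoothness once more (expanding $F(x_t^m)$ about $\bxt$, whose linear term vanishes upon averaging) to reach $\nrm{\bgt}^2 \le 2H(F(\bxt)-F^*) + \frac{H^2}{M}\sum_m\nrm{\bxt-x_t^m}^2$.

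Finally I would substitute both bounds, collect the $F(\bxt)-F^*$ contributions (coefficient $-2\eta_t + 2H\eta_t^2$) and the dispersion contributions (coefficient $\eta_t H + \eta_t^2 H^2$), and invoke $\eta_t \le \frac{1}{4H}$ to absorb each $\eta_t^2$ term into its $\eta_t$ counterpart; this leaves a net negative coefficient of at least $\frac{\eta_t}{2}$ on $F(\bxt)-F^*$ and a dispersion coefficient at most $2\eta_t H$. Rearranging to isolate $F(\bxt)-F^*$ and multiplying through by $\frac{2}{\eta_t}$ then produces precisely the stated coefficients $\frac{2}{\eta_t}-2\lambda$, $-\frac{2}{\eta_t}$, $\frac{2\eta_t\sigma^2}{M}$, and $\frac{4H}{M}$. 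I expect the main obstacle to be the dispersion term $\frac{1}{M}\sum_m\nrm{\bxt - x_t^m}^2$: unlike the quadratic case of Theorem \ref{thm:linear-update-alg-quadratics}, it genuinely does not cancel, and it is precisely this quantity (controlled separately in Lemma \ref{lem:distance-bound-between-local-iterates}) that carries the cost of the machines drifting apart between communication rounds. Keeping careful track of which direction of the smoothness inequality is applied, so that this penalty always appears with the correct sign, is the delicate part of the bookkeeping.
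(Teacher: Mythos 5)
Your proposal is correct and takes essentially the same route as the paper's proof: both track the virtual averaged iterate via $\bar{x}_{t+1} = \bar{x}_t - \eta_t g_t$, split off the $\sigma^2/M$ variance by independence across machines, bound $\|\bar{g}_t\|^2$ through $\|\nabla F(x)\|^2 \le 2H(F(x)-F^*)$, and charge the cross terms to the dispersion $\frac{1}{M}\sum_m \|\bar{x}_t - x_t^m\|^2$. The only cosmetic difference is that where the paper bounds $\langle \bar{x}_t - x_t^m, \nabla F(x_t^m)\rangle$ by Young's inequality with $\gamma = 2H$ and converts $\frac{1}{M}\sum_m F(x_t^m)$ to $F(\bar{x}_t)$ by Jensen only at the end, you apply the smoothness upper bound directly in both places; the bookkeeping differs slightly but yields the identical final coefficients.
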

\begin{proof}
This proof is nearly identical to the proof of Lemma 3.1 due to \citet{stich2018local}, and we claim no technical innovation here. We include it in order to be self-contained. 

We begin by analyzing the distance of $\bx_{t+1}$ from the optimum. Below, expectations are taken over the all of the random variables $\crl*{z_t^m}$ which determine the iterates $\crl*{x_t^m}$.
\begin{align}
&\E\nrm*{\bx_{t+1} - x^*}^2 \nonumber\\
&= \E\nrm*{\bxt - \eta_t g_t - x^*}^2 \\
&= \E\nrm*{\bxt - x^*} + \eta_t^2\E\nrm*{\bgt}^2 + \eta_t^2\E\nrm*{g_t - \bgt}^2 - 2\eta_t\E\tri*{\bxt - x^*, \bgt} \\
&\leq \E\nrm*{\bxt - x^*} + \eta_t^2\E\nrm*{\bgt}^2 + \frac{\eta_t^2\sigma^2}{M} - \frac{2\eta_t}{M}\sum_{m=1}^M\E\tri*{\bxt - x^*, \nabla f(x_t^m;z_t^m)} \\
&= \E\nrm*{\bxt - x^*} + \eta_t^2\E\nrm*{\bgt}^2 + \frac{\eta_t^2\sigma^2}{M} - \frac{2\eta_t}{M}\sum_{m=1}^M\brk*{\E\tri*{x_t^m - x^*, \nabla F(x_t^m)} + \E\tri*{\bxt - x_t^m, \nabla F(x_t^m)}}\label{eq:lemma31-initial-bound}
\end{align}
For the second equality, we used that $\E\brk*{g_t - \bgt} = 0$; for the first inequality, we used that $\E\nrm*{g_t - \bgt}^2 = \E\nrm*{\frac{1}{M}\sum_{m=1}^M \nabla f(x_t^m;z_t^m) - \nabla F(x_t^m)}^2 \leq \frac{\sigma^2}{M}$ since the individual stochastic gradient estimates are independent; and for the final equality, we used that $z_t^m$ is independent of $\bxt$.

For any vectors $v_m$, $\nrm*{\sum_{m=1}^M v_m}^2 \leq M\sum_{m=1}^M \nrm*{v_m}^2$. In addition, for any point $x$ and $H$-smooth $F$, $\nrm*{\nabla F(x)}^2 \leq 2H(F(x) - F(x^*))$, thus
\begin{equation}
\eta_t^2\E\nrm*{\bgt}^2 \leq \eta_t^2M\sum_{m=1}^M \nrm*{\frac{1}{M}\nabla F(x_t^m)}^2 \leq \frac{2H\eta_t^2}{M}\sum_{m=1}^M F(x_t^m) - F(x^*)
\end{equation}
By the $\lambda$-strong convexity of $F$, we have that
\begin{multline}
    -\frac{2\eta_t}{M}\sum_{m=1}^M\tri*{x_t^m - x^*, \nabla F(x_t^m)} \leq -\frac{2\eta_t}{M}\sum_{m=1}^M \brk*{F(x_t^m) - F(x^*) + \frac{\lambda}{2}\nrm*{x_t^m - x^*}^2} \\
    \leq -\frac{2\eta_t}{M}\sum_{m=1}^M \brk*{F(x_t^m) - F(x^*)} - \lambda\eta_t\nrm*{\bxt - x^*}^2
\end{multline}
Finally, using the fact that for any vectors $a,b$ and any $\gamma > 0$, $2\tri*{a,b} \leq \gamma \nrm{a}^2 + \gamma^{-1}\nrm{b}^2$ we have
\begin{equation}
-2\eta_t\tri*{\bxt - x_t^m, \nabla F(x_t^m)} \leq \eta_t\gamma\nrm*{\bxt - x_t^m}^2 + \frac{\eta_t}{\gamma}\nrm*{\nabla F(x_t^m)}^2
\leq \eta_t\gamma\nrm*{\bxt - x_t^m}^2 + \frac{2H\eta_t}{\gamma}[F(x_t^m) - F(x^*)]
\end{equation}
Combining these with \eqref{eq:lemma31-initial-bound}, we conclude that for $\gamma = 2H$
\begin{align}
\E\nrm*{\bx_{t+1} - x^*}^2 
&\leq \prn*{1-\lambda\eta_t}\E\nrm*{\bxt - x^*} - \frac{2\eta_t\prn*{1-H\eta_t}}{M}\sum_{m=1}^M \E\brk*{F(x_t^m) - F(x^*)} + \frac{\eta_t^2\sigma^2}{M} \nonumber\\
&\qquad\qquad+ \frac{\eta_t}{M}\sum_{m=1}^M\brk*{2H\E\nrm*{\bxt - x_t^m}^2 + \E\brk*{F(x_t^m) - F(x^*)}} \\
&= \prn*{1-\lambda\eta_t}\E\nrm*{\bxt - x^*} - \frac{\eta_t\prn*{1-2H\eta_t}}{M}\sum_{m=1}^M \E\brk*{F(x_t^m) - F(x^*)} \nonumber\\
&\qquad\qquad+ \frac{\eta_t^2\sigma^2}{M} + \frac{2H\eta_t}{M}\sum_{m=1}^M\E\nrm*{\bxt - x_t^m}^2
\end{align}
By the convexity of $F$ and the fact that $\eta_t \leq \frac{1}{4H}$, this implies
\begin{align}
\E\nrm*{\bx_{t+1} - x^*}^2 
&\leq \prn*{1-\lambda\eta_t}\E\nrm*{\bxt - x^*} - \frac{\eta_t}{2}\E\brk*{F(\bxt) - F(x^*)} + \frac{\eta_t^2\sigma^2}{M} + \frac{2H\eta_t}{M}\sum_{m=1}^M\E\nrm*{\bxt - x_t^m}^2
\end{align}
Rearranging completes the proof.
\end{proof}

We will proceed to bound the final term in Lemma \ref{lem:ourlemma31} more tightly than was done by \citet{stich2018local}, which allows us to improve on their upper bound. To do so, we will use the following technical lemmas:

\begin{lemma}[Co-Coercivity of the Gradient]\label{lem:co-coercivity}
For any $H$-smooth and convex $F$, and any $x$, and $y$
\[
\nrm*{\nabla F(x) - \nabla F(y)}^2 \leq H\inner{\nabla F(x) - \nabla F(y)}{x - y}
\]
and
\[
\nrm*{\nabla F(x) - \nabla F(y)}^2 \leq 2H\prn*{F(x) - F(y) - \inner{\nabla F(y)}{x-y}}
\]
\end{lemma}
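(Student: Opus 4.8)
The plan is to prove the second inequality first and then deduce the first inequality from it by a symmetrization trick. The two statements are the classical ``co-coercivity'' and ``gradient-gap'' forms of smoothness for convex functions, and the cleanest route is to observe that the second inequality is really a statement about minimizing a shifted version of $F$.

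First I would fix $y$ and introduce the auxiliary function $g(x) \defeq F(x) - \inner{\nabla F(y)}{x}$. Since $F$ is convex and $H$-smooth and we have only subtracted a linear term, $g$ is also convex and $H$-smooth, and crucially $\nabla g(x) = \nabla F(x) - \nabla F(y)$, so $\nabla g(y) = 0$ and $y$ minimizes $g$. Applying the standard $H$-smoothness upper bound $g(w) \leq g(x) + \inner{\nabla g(x)}{w-x} + \frac{H}{2}\nrm*{w-x}^2$ and minimizing the right-hand side over $w$ (the minimizer is $w = x - \frac{1}{H}\nabla g(x)$) gives
\[
g(y) = \min_w g(w) \leq g(x) - \frac{1}{2H}\nrm*{\nabla g(x)}^2.
\]
Rearranging and unfolding the definition of $g$ yields exactly $\nrm*{\nabla F(x) - \nabla F(y)}^2 \leq 2H\prn*{F(x) - F(y) - \inner{\nabla F(y)}{x-y}}$, which is the second claimed inequality.

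For the first inequality I would apply the second inequality twice: once as stated, and once with the roles of $x$ and $y$ interchanged, giving a symmetric bound on $\nrm*{\nabla F(y) - \nabla F(x)}^2$ in terms of $F(y) - F(x) - \inner{\nabla F(x)}{y-x}$. Summing the two inequalities causes the function-value terms $F(x)$ and $F(y)$ to cancel, and the two inner-product terms combine into $\inner{\nabla F(x) - \nabla F(y)}{x-y}$; after dividing by the factor of $2$ on the left, this gives $\nrm*{\nabla F(x) - \nabla F(y)}^2 \leq H\inner{\nabla F(x) - \nabla F(y)}{x-y}$, as desired.

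The only genuinely nonroutine step is the construction of the shifted function $g$ together with the recognition that $y$ is its minimizer; everything after that is the textbook ``minimize the quadratic upper bound'' computation plus a one-line symmetrization. So I expect essentially no obstacle here beyond getting that auxiliary-function idea right, which is why I would present the value-gap inequality first and treat the co-coercivity bound as a corollary rather than attempting to prove it directly.
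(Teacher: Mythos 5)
Your proof is correct and follows essentially the same route as the paper's: the paper likewise introduces the tilted function $F_y(z) = F(z) - \tri*{\nabla F(y), z}$ (and its twin $F_x$), observes that $y$ minimizes it, invokes the smoothness bound $\nrm*{\nabla g(z)}^2 \leq 2H\prn*{g(z) - \min_w g(w)}$ to obtain the value-gap inequality, and sums the two symmetric versions to get co-coercivity. The only cosmetic difference is that you derive the smoothness bound inline by minimizing the quadratic upper bound, whereas the paper cites it as a known fact.
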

\begin{proof}
This proof follows closely from \cite{vandenbergheLecture}. Define the $H$-smooth, convex functions
\begin{equation}
F_x(z) = F(z) - \inner{\nabla F(x)}{z} \qquad\textrm{and}\qquad F_y(z) = F(z) - \inner{\nabla F(y)}{z} 
\end{equation}
By setting the gradients of these convex functions equal to zero, it is clear that $x$ minimizes $F_x$ and $y$ minimizes $F_y$. For any $H$-smooth and convex $F$, for any $z$, $\nrm*{\nabla F(z)}^2 \leq 2H(F(z) - \min_x F(x))$, therefore,
\begin{align}
F(y) - F(x) - \inner{\nabla F(x)}{y-x} 
&= F_x(y) - F_x(x) \\
&\geq \frac{1}{2H}\nrm*{\nabla F_x(y)}^2 \\
&= \frac{1}{2H}\nrm*{\nabla F(y) - \nabla F(x)}^2
\end{align}
Similarly, 
\begin{equation}
F(x) - F(y) - \inner{\nabla F(y)}{x-y} 
\geq \frac{1}{2H}\nrm*{\nabla F(y) - \nabla F(x)}^2
\end{equation}
This is the second claim of the Lemma, and combining these last two inequalities proves the first claim.
\end{proof}

\begin{lemma}[See Lemma 6 \cite{karimireddy2019scaffold}]\label{lem:contraction-map}
Let $F$ be any $H$-smooth and $\lambda$-strongly convex function, and let $\eta \leq \frac{1}{H}$. Then for any $x,y$
\[
\nrm*{x - \eta \nabla F(x) - y + \eta\nabla F(y)}^2 \leq \prn*{1-\lambda\eta}\nrm*{x - y}^2
\]
\end{lemma}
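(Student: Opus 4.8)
The plan is to reduce the claim to a scalar inequality in the two quantities $\inner{\nabla F(x) - \nabla F(y)}{x-y}$ and $\nrm*{\nabla F(x)-\nabla F(y)}^2$, and then close it using the first claim of Lemma \ref{lem:co-coercivity} together with strong convexity. Writing $u = x-y$ and $w = \nabla F(x) - \nabla F(y)$, I would expand the left-hand side as
\[
\nrm*{u - \eta w}^2 = \nrm*{u}^2 - 2\eta\inner{u}{w} + \eta^2\nrm*{w}^2\,.
\]
Matching this against the target $\prn*{1-\lambda\eta}\nrm*{u}^2$, the lemma becomes equivalent to $\eta\nrm*{w}^2 - 2\inner{u}{w} + \lambda\nrm*{u}^2 \le 0$, obtained by cancelling $\nrm*{u}^2$ and dividing through by $\eta > 0$.

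First I would apply the co-coercivity bound $\nrm*{w}^2 \le H\inner{u}{w}$ from Lemma \ref{lem:co-coercivity} (valid since a $\lambda$-strongly convex function is in particular convex); multiplying by $\eta$ and using the stepsize constraint $\eta H \le 1$ gives $\eta\nrm*{w}^2 \le \inner{u}{w}$. Substituting this into the reduced inequality eliminates the quadratic-in-$w$ term and leaves $-\inner{u}{w} + \lambda\nrm*{u}^2$. I would then invoke the gradient form of $\lambda$-strong convexity, $\inner{u}{w} \ge \lambda\nrm*{u}^2$, which makes this quantity nonpositive; rearranging back recovers the stated bound.

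I do not expect a genuine obstacle, as the computation is short and every ingredient (co-coercivity, strong convexity, and $\eta H \le 1$) is already in hand. The only point requiring care is the order of operations: co-coercivity must be used on the $\eta\nrm*{w}^2$ term \emph{before} applying strong convexity to the cross term, and it is precisely the sign of the resulting coefficient $(\eta H - 2) \le -1$ that lets the strong-convexity bound be applied with the correct orientation. A tempting but unnecessary detour would be to reach for the sharper interpolation inequality that exploits smoothness and strong convexity simultaneously to obtain a better contraction factor; in the regime $\eta \le 1/H$ this is not needed, and the cleaner two-step combination above suffices.
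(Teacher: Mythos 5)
Your proposal is correct and is essentially the paper's own argument: both expand the squared norm, bound the $\eta^2\nrm*{\nabla F(x)-\nabla F(y)}^2$ term via the co-coercivity inequality of Lemma \ref{lem:co-coercivity} together with $\eta H \le 1$, and then finish by applying the monotonicity consequence of $\lambda$-strong convexity, $\inner{\nabla F(x)-\nabla F(y)}{x-y} \ge \lambda\nrm*{x-y}^2$, to the remaining cross term. Your reformulation as a scalar inequality in $\inner{u}{w}$ and $\nrm*{w}^2$ is only a cosmetic reorganization of the same chain of estimates.
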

\begin{proof}
This Lemma and its proof are essentially identical to \citep[Lemma 6]{karimireddy2019scaffold}, we include it here in order to keep our results self-contained, and we are more explicit about the steps used.
\begin{align}
\nrm*{x - \eta \nabla F(x) - y + \eta\nabla F(y)}^2
&= \nrm*{x-y}^2 + \eta^2 \nrm*{\nabla F(x) - \nabla F(y)}^2 - 2\eta\inner{\nabla F(x) - \nabla F(y)}{x-y} \\
&\leq \nrm*{x-y}^2 + \eta^2H\inner{\nabla F(x) - \nabla F(y)}{x - y}  - 2\eta\inner{\nabla F(x) - \nabla F(y)}{x-y}
\end{align}
where the inequality follows from Lemma \ref{lem:co-coercivity}. Since $\eta H \leq 1$, we further conclude that
\begin{equation}
\nrm*{x - \eta \nabla F(x) - y + \eta\nabla F(y)}^2
\leq \nrm*{x-y}^2 - \eta\inner{\nabla F(x) - \nabla F(y)}{x-y}
\end{equation}
Finally, by the $\lambda$-strong convexity of $F$
\begin{gather}
\inner{\nabla F(x)}{x-y} \geq F(x) - F(y) + \frac{\lambda}{2}\nrm*{x-y}^2 \\
-\inner{\nabla F(y)}{x-y} \geq F(y) - F(x) + \frac{\lambda}{2}\nrm*{x-y}^2
\end{gather}
Combining these, we conclude
\begin{align}
\nrm*{x - \eta \nabla F(x) - y + \eta\nabla F(y)}^2
&\leq \nrm*{x-y}^2 - \eta\inner{\nabla F(x) - \nabla F(y)}{x-y} \\
&\leq \nrm*{x-y}^2 - \eta\lambda \nrm*{x-y}^2
\end{align}
which completes the proof.
\end{proof}

\begin{lemma}\label{lem:distance-to-average-distance-to-other}
For any $t$ and $m \neq m'$
\[
\E\nrm*{x_t^m - \bxt}^2 \leq \frac{M-1}{M}\E\nrm*{x_t^m - x_t^{m'}}^2
\]
\end{lemma}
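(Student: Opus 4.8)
The plan is to express the deviation of a single machine's iterate from the averaged iterate as an average of pairwise differences, apply Jensen's inequality, and then reduce the resulting sum to a single representative term via a symmetry (exchangeability) argument. First I would use the definition $\bxt = \frac{1}{M}\sum_{m'=1}^M x_t^{m'}$ to write
\[
x_t^m - \bxt = \frac{1}{M}\sum_{m'=1}^M \prn*{x_t^m - x_t^{m'}},
\]
observing that the $m'=m$ summand vanishes, so the right-hand side is the average of $M$ vectors, of which $M-1$ are (possibly) nonzero. Applying the convexity of $\nrm*{\cdot}^2$ (Jensen's inequality) to this average of $M$ vectors yields the pointwise bound
\[
\nrm*{x_t^m - \bxt}^2 \le \frac{1}{M}\sum_{m'=1}^M \nrm*{x_t^m - x_t^{m'}}^2 = \frac{1}{M}\sum_{m'\neq m}\nrm*{x_t^m - x_t^{m'}}^2,
\]
and taking expectations on both sides is immediate. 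Note that this route produces the factor $\frac{1}{M}$ against a sum of $M-1$ terms, which is exactly what is needed to land on the stated constant $\frac{M-1}{M}$.

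The remaining step, and the only genuinely non-mechanical one, is to replace the sum of $M-1$ pairwise terms by $(M-1)$ copies of a single term $\E\nrm*{x_t^m - x_t^{m'}}^2$. Here I would invoke the \emph{exchangeability} of the $M$ machines: they are all initialized at the same point, draw fresh i.i.d.\ samples $z_t^{m}\sim\mc{D}$, and run identical update rules, while the averaging step at communication rounds is itself symmetric in the machine indices. Consequently the joint law of $\prn*{x_t^1,\dots,x_t^M}$ is invariant under permutations of the machine indices for every $t$, so $\E\nrm*{x_t^m - x_t^{m'}}^2$ takes a common value $D$ for all ordered pairs with $m\neq m'$. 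The main obstacle is to argue this invariance cleanly across the entire trajectory rather than just within a single local step; I would do so by induction on $t$, checking that both a local SGD step (applied coordinatewise in the machine index with i.i.d.\ noise) and the averaging step preserve permutation-invariance of the joint distribution.

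Substituting $\E\nrm*{x_t^m - x_t^{m'}}^2 = D$ into the expectation of the Jensen bound gives
\[
\E\nrm*{x_t^m - \bxt}^2 \le \frac{1}{M}\,(M-1)\,D = \frac{M-1}{M}\,\E\nrm*{x_t^m - x_t^{m'}}^2,
\]
which is exactly the claim for an arbitrary fixed pair $m\neq m'$. I would remark that one can avoid the exchangeability argument entirely by instead applying the inequality $\nrm*{\sum_{m'\neq m} v_{m'}}^2 \le (M-1)\sum_{m'\neq m}\nrm*{v_{m'}}^2$ to obtain $\E\nrm*{x_t^m - \bxt}^2 \le \frac{(M-1)^2}{M^2}\sum_{m'\neq m}\E\nrm*{x_t^m - x_t^{m'}}^2 / (M-1)$ after symmetrizing, but since exchangeability is already available and gives the tight constant for this formulation, the Jensen-plus-symmetry route is the most direct.
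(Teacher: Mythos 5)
Your proof is correct and follows essentially the same route as the paper's: decompose $x_t^m - \bxt$ into an average of pairwise differences $\frac{1}{M}\sum_{m'\neq m}\prn*{x_t^m - x_t^{m'}}$ and use the symmetry of the machines to collapse the resulting sum to a single representative pair. The only difference is cosmetic: the paper expands the square and applies Cauchy--Schwarz to the cross terms, obtaining the sharper constant $\frac{(M-1)^2}{M^2}$ before relaxing it to $\frac{M-1}{M}$, whereas your single application of Jensen's inequality lands on $\frac{M-1}{M}$ directly; your exchangeability-by-induction argument is simply a more careful justification of the ``identically distributed'' fact that the paper asserts without proof.
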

\begin{proof}
First, we note that $x_t^1,\dots,x_t^M$ are identically distributed. Therefore,
\begin{align}
\E\nrm*{x_t^m - \bxt}^2
&= \E\nrm*{x_t^m - \frac{1}{M}\sum_{m'=1}^M x_t^{m'}}^2 \\
&= \frac{1}{M^2}\E\nrm*{\frac{1}{M}\sum_{m'\neq m} x_t^m - x_t^{m'}}^2 \\
&= \frac{1}{M^2}\brk*{\sum_{m'\neq m}\E\nrm*{x_t^m - x_t^{m'}}^2 + \sum_{m'\neq m, m''\neq m, m'\neq m''} \E\inner{x_t^m - x_t^{m'}}{x_t^m - x_t^{m''}}} \\
&\leq \frac{1}{M^2}\brk*{(M-1)\E\nrm*{x_t^m - x_t^{m'}}^2 + \sum_{m'\neq m, m''\neq m, m'\neq m''} \sqrt{\E\nrm*{x_t^m - x_t^{m'}}^2\E\nrm*{x_t^m - x_t^{m''}}^2}} \\
&= \frac{1}{M^2}\brk*{(M-1)\E\nrm*{x_t^m - x_t^{m'}}^2 + 2\binom{M-1}{2} \E\nrm*{x_t^m - x_t^{m'}}^2} \\
&= \frac{(M-1)^2}{M^2}\E\nrm*{x_t^m - x_t^{m'}}^2 \\
&\leq \frac{M-1}{M}\E\nrm*{x_t^m - x_t^{m'}}^2
\end{align}
\end{proof}

\begin{lemma}\label{lem:distance-bound-between-local-iterates}
Under the conditions of Lemma \ref{lem:ourlemma31}, with the additional condition that the sequence of stepsizes $\eta_1,\eta_2,\dots$ is non-increasing and $\eta_t \leq \frac{1}{H}$ for all $t$, for any $t$ and any $m$
\[
\E\nrm*{x_t^m - \bxt}^2 \leq \frac{2(M-1)(K-1)\eta_{t-K+1 \land 0}^2\sigma^2}{M}
\]
If $\eta_t = \frac{2}{\lambda(a+t+1)}$, then it further satisfies
\[
\E\nrm*{x_t^m - \bxt}^2 \leq \frac{2(M-1)(K-1)\eta_{t-1}^2\sigma^2}{M}
\]
\end{lemma}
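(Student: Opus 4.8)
The plan is to control the dispersion through the pairwise distance between two machines. By Lemma~\ref{lem:distance-to-average-distance-to-other} it suffices to bound $D_t := \E\nrm*{x_t^m - x_t^{m'}}^2$ for a fixed pair $m \neq m'$, since $\E\nrm*{x_t^m - \bxt}^2 \leq \frac{M-1}{M}D_t$. The two machines are synchronized immediately after every communication, so letting $t_0 := K\floor{t/K}$ be the last multiple of $K$ at or before $t$, we have $D_{t_0}=0$ (all machines share the averaged iterate, and $D_0=0$ at initialization), and when $K\nmid t$ the steps $t_0\to t_0+1\to\cdots\to t$ are all ordinary local SGD steps (none of $t_0+1,\dots,t$ is a multiple of $K$); the case $K\mid t$ is trivial as then $D_t=0$.

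First I would establish the one-step recursion $D_{s+1}\leq(1-\lambda\eta_s)D_s+2\eta_s^2\sigma^2$ for each local step. Writing $\nabla f(x_s^m;z_s^m)=\nabla F(x_s^m)+\xi_s^m$ with $\xi_s^m$ conditionally zero-mean and $\E\nrm*{\xi_s^m}^2\leq\sigma^2$, split
\[
x_{s+1}^m - x_{s+1}^{m'} = \bigl(x_s^m-\eta_s\nabla F(x_s^m)\bigr)-\bigl(x_s^{m'}-\eta_s\nabla F(x_s^{m'})\bigr)-\eta_s\bigl(\xi_s^m-\xi_s^{m'}\bigr).
\]
Conditioning on the history that determines $x_s^m,x_s^{m'}$, the noise is zero-mean and independent across the two machines, so the cross term vanishes and $\E\nrm*{\xi_s^m-\xi_s^{m'}}^2=\E\nrm*{\xi_s^m}^2+\E\nrm*{\xi_s^{m'}}^2\leq2\sigma^2$. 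The deterministic part is a gradient-step difference, which contracts by Lemma~\ref{lem:contraction-map} (applicable since $\eta_s\leq\frac1H$) to at most $(1-\lambda\eta_s)\nrm*{x_s^m-x_s^{m'}}^2$. Taking expectations yields the recursion.

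Unrolling from $D_{t_0}=0$ gives $D_t\leq\sum_{s=t_0}^{t-1}2\eta_s^2\sigma^2\prod_{u=s+1}^{t-1}(1-\lambda\eta_u)$, a sum of at most $K-1$ terms since $t-t_0\leq K-1$. For the general non-increasing schedule I would drop the contraction factors ($\leq1$) and use $\eta_s\leq\eta_{t_0}$; since $t_0\geq\max\crl{t-K+1,0}$, monotonicity gives $D_t\leq2(K-1)\eta^2\sigma^2$ at the clamped index $\max\crl{t-K+1,0}$ appearing in the statement, and multiplying by $\frac{M-1}{M}$ proves the first claim. For the specific schedule $\eta_t=\frac{2}{\lambda(a+t+1)}$ I would instead evaluate the product exactly: $1-\lambda\eta_u=\frac{a+u-1}{a+u+1}$ telescopes to $\prod_{u=s+1}^{t-1}(1-\lambda\eta_u)=\frac{(a+s)(a+s+1)}{(a+t-1)(a+t)}$. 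Substituting $\eta_s^2=\frac{4}{\lambda^2(a+s+1)^2}$, each summand equals $\frac{8\sigma^2}{\lambda^2}\cdot\frac{a+s}{(a+s+1)(a+t-1)(a+t)}$; since $x\mapsto\frac{x}{x+1}$ is increasing and $s\leq t-1$, we have $\frac{a+s}{a+s+1}\leq\frac{a+t-1}{a+t}$, so each summand is at most $\frac{8\sigma^2}{\lambda^2(a+t)^2}=2\eta_{t-1}^2\sigma^2$. Summing the $\leq K-1$ terms and invoking Lemma~\ref{lem:distance-to-average-distance-to-other} gives the sharpened bound.

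The main obstacle is obtaining the correct power of $K$: the naive argument that replaces each round's drift by the worst-case gradient magnitude (as in \citet{stich2018local}) introduces a spurious extra factor of $K$. The improvement hinges on treating the drift as an accumulation of \emph{independent} gradient noises, so that only the variances add (the $2\sigma^2$ per step rather than a squared sum), which is precisely what the conditioning step in the recursion secures. For the sharper second bound, the delicate point is that the earlier, larger stepsizes must be discounted by the contraction product; the exact telescoping together with the monotonicity estimate $\frac{a+s}{a+s+1}\leq\frac{a+t-1}{a+t}$ is what converts the loose $\eta_{t_0}$ into the tighter $\eta_{t-1}$ with no loss in the constant.
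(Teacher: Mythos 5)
Your proof is correct and follows essentially the same route as the paper's: reduce to pairwise distances via Lemma~\ref{lem:distance-to-average-distance-to-other}, derive the one-step recursion from Lemma~\ref{lem:contraction-map} plus the independent-noise variance bound, unroll from the last synchronization point, and for the schedule $\eta_t = \frac{2}{\lambda(a+t+1)}$ exploit the telescoping product $\prod_{u=s+1}^{t-1}\frac{a+u-1}{a+u+1} = \frac{(a+s)(a+s+1)}{(a+t-1)(a+t)}$. Your uniform bound of each summand by $2\eta_{t-1}^2\sigma^2$ via the monotonicity of $x \mapsto \frac{x}{x+1}$ is the same estimate the paper makes when it bounds each ratio $\frac{(a+s)(a+t)}{(a+s+1)(a+t-1)}$ by one.
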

\begin{proof}
By Lemma \ref{lem:distance-to-average-distance-to-other}, we can upper bound
\begin{equation}
    \E\nrm*{x_t^m - \bxt}^2 \leq \frac{M-1}{M}\E\nrm*{x_t^m - x_t^{m'}}^2
\end{equation}
for all $t$ and $m\neq m'$. In addition,
\begin{align}
\E\nrm*{x_t^m - x_t^{m'}}^2
&= \E\nrm*{x_{t-1}^m - \eta_{t-1} \nabla f(x_{t-1}^m;z_{t-1}^m) - x_{t-1}^{m'} + \eta_{t-1} \nabla f(x_{t-1}^{m'};z_{t-1}^{m'})}^2 \\
&\leq \E\nrm*{x_{t-1}^m - \eta_{t-1} \nabla F(x_{t-1}^m) - x_{t-1}^{m'} + \eta_{t-1} \nabla F(x_{t-1}^{m'})}^2 + 2\eta_{t-1}^2\sigma^2 \\
&\leq \prn*{1-\lambda\eta_{t-1}}\E\nrm*{x_{t-1}^m - x_{t-1}^{m'}}^2 + 2\eta_{t-1}^2\sigma^2
\end{align}
where for the final inequality we used Lemma \ref{lem:contraction-map} and the fact that the stepsizes are less than $\frac{1}{H}$,. Since the iterates are averaged every $K$ iterations, for each $t$, there must be a $t_0$ with $0 \leq t-t_0 \leq K-1$ such that $x_{t_0}^m = x_{t_0}^{m'}$. Therefore, we can unroll the recurrence above to conclude that
\begin{equation}
\E\nrm*{x_t^m - x_t^{m'}}^2
\leq \sum_{i=t_0}^{t-1}2\eta_i^2\sigma^2\prod_{j=i+1}^{t-1}\prn*{1-\lambda\eta_{j}} \leq 2\sigma^2\sum_{i=t_0}^{t-1}\eta_i^2
\end{equation}
where we define $\sum_{i=a}^b c_i = 0$ and $\prod_{i=a}^b c_i = 1$ for all $a > b$ and all $\crl{c_i}_{i\in\mathbb{N}}$. Therefore, for any non-increasing stepsizes, we conclude
\begin{equation}
\E\nrm*{x_t^m - \bxt}^2 \leq \frac{2\eta_{t-K+1 \land 0}^2\sigma^2(M-1)(K-1)}{M}
\end{equation}
This implies the first claim.

In the special case $\eta_t = \frac{2}{\lambda \prn*{a + t + 1}}$, we have 
\begin{align}
\E\nrm*{x_t^m - x^{m'}}^2 
&\leq 2\sigma^2 \sum_{i = t_0}^{t-1} \eta_i^2 \prod_{j=i+1}^{t-1}\prn*{1-\lambda\eta_j} \\
&= 2\sigma^2 \sum_{i = t_0}^{t-1} \eta_i^2 \prod_{j=i+1}^{t-1}\prn*{\frac{a + j - 1}{a + j + 1}} \\
&= 2\sigma^2\eta_{t-1}^2 + \frac{2\sigma^2\eta_{t-2}^2(a+t-2)}{a+t} + 2\sigma^2 \sum_{i = t_0}^{t-3} \eta_i^2 \frac{(a + i)(a + i + 1)}{(a+t-1)(a+t)} \\
&= 2\sigma^2\eta_{t-1}^2\prn*{1 + \frac{(a+t)(a+t-2)}{(a+t-1)^2} + \sum_{i = t_0}^{t-3}\frac{(a + i)(a + t)}{(a+t-1)(a+i+1)}} \\
&\leq 2\sigma^2\eta_{t-1}^2\prn*{t-t_0} \\
&\leq 2(K-1)\sigma^2\eta_{t-1}^2
\end{align}
This implies the second claim.
\end{proof}

Next, we show that Local SGD is always at least as good as $KR$ steps of sequential SGD. To do so, we use the following result from \citet{stich2019unified}:
\begin{lemma}[Lemma 3 \cite{stich2019unified}]\label{lem:stich-recurrence}
For any recurrence of the form
\[
r_{t+1} \leq (1-a\gamma_t)r_t - b\gamma_t s_t + c\gamma_t^2
\]
with $a, b > 0$, there exists a sequence $0 < \gamma_t \leq \frac{1}{d}$ and weights $w_t > 0$ such that
\[
\frac{b}{W_T}\sum_{t=0}^T \brk*{s_t w_t + a r_{t+1}} \leq 32d r_0 \exp\prn*{-\frac{aT}{2d}} + \frac{36c}{aT}
\] 
where $W_T := \sum_{t=0}^T w_t$.
\end{lemma}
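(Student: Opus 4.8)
The plan is to follow the stepsize-tuning template of \citet{stich2019unified}: treat the recurrence as a telescoping inequality after multiplying by carefully chosen weights $w_t$, and then design the pair $(\gamma_t, w_t)$ so that the residual of the telescope contracts geometrically in $T$ while the accumulated noise collapses to $\Theta(c/(aT))$. The two summands in the conclusion correspond exactly to these two effects: $32 d\, r_0 \exp\prn*{-aT/(2d)}$ is the contraction of the initial value $r_0$, and $36c/(aT)$ is the residual variance.

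First I would rewrite $r_{t+1} \leq (1-a\gamma_t) r_t - b\gamma_t s_t + c\gamma_t^2$ as $b\gamma_t s_t + r_{t+1} \leq (1-a\gamma_t) r_t + c\gamma_t^2$, multiply through by a weight $w_t > 0$, and sum over $t = 0,\dots,T$. The weights are chosen so that the coefficient $w_t(1-a\gamma_t)$ of $r_t$ matches $w_{t-1}$, i.e.\ $w_{t-1} = (1-a\gamma_t)w_t$; the $r$-terms then telescope, leaving only $w_{-1} r_0$ in the numerator and a dropped nonnegative $w_T r_{T+1}$. What survives is a bound of the schematic form $\frac{b}{W_T}\sum_t \prn*{w_t s_t + a r_{t+1}} \lesssim \frac{r_0}{W_T}\cdot(\text{leading weight}) + \frac{c}{W_T}\sum_t \gamma_t^2 w_t$, so the whole problem reduces to lower-bounding $W_T$ and upper-bounding the weighted noise; the $a r_{t+1}$ bookkeeping terms are simply the strongly-convex distance contributions retained from the telescope rather than discarded.

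The core design step, and the one responsible for the clean \emph{log-free} constants, is a two-phase stepsize schedule. For the first $\ceil*{T/2}$ iterations I would take the constant maximal stepsize $\gamma_t = \tfrac1d$; compounding the contraction factor $(1-a/d)$ over $\Theta(T)$ steps produces $r_0 \exp\prn*{-aT/(2d)}$, which explains both the halved exponent and why the leading constant scales with $d$. For the remaining iterations I would switch to a harmonically decaying stepsize $\gamma_t = \Theta\prn*{1/(a t)}$ with matched polynomial weights $w_t$, tuned so that $W_T$ grows like $T^2$ and the noise term is exactly $\Theta(c/(aT))$ with no spurious $\log T$ factor. A short case split then closes the argument: when $aT/d$ is small the exponential term already dominates and the constant phase alone suffices, whereas when $aT/d$ is large the decaying tail is needed to drive down the variance; in each regime one checks $0 < \gamma_t \leq \tfrac1d$ and reads off the two terms, absorbing the numerical losses into the stated constants $32$ and $36$.

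The main obstacle is the decaying phase. One must verify the telescoping identity $w_{t-1} = (1-a\gamma_t) w_t$ (or a one-sided version $w_{t-1} \geq (1-a\gamma_t) w_t$ with a controllable defect) when both $\gamma_t$ and $w_t$ vary, and then balance the schedule so that the variance contribution is genuinely $\Theta(c/(aT))$ rather than $\Theta\prn*{(c\log T)/(aT)}$. Obtaining the log-free rate, and stitching the constant and decaying phases together at the midpoint without sacrificing the geometric contraction of $r_0$, is the delicate part; once the schedule is fixed, the remainder is routine summation and constant-chasing.
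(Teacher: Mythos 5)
This lemma is not proved in the paper at all---it is imported as a black box from Lemma 3 of \citet{stich2019unified}---and your proposal reconstructs essentially the same argument as that cited proof: a two-phase schedule (constant maximal stepsize $\gamma_t = 1/d$ with zero averaging weights on the first $\approx T/2$ steps to contract $r_0$ geometrically, then harmonic decay $\gamma_t = \Theta(1/(at))$ with polynomially growing weights, together with a case split on $aT$ versus $d$), which is also exactly the schedule this paper writes out when it invokes the lemma in its Appendix C. Your sketch is structurally correct and the deferred bookkeeping does close as you predict: the telescoping condition $w_{t-1} = (1-a\gamma_t)w_t$ with harmonic stepsizes forces quadratic telescoping weights, hence \emph{linear} averaging weights on $s_t$, so $W_T = \Theta(T^2)$ while each step contributes only $\Theta(c/a)$ of weighted noise, which is precisely what yields the log-free $O(c/(aT))$ term, and the phase-one noise $c/(ad)$ carried into phase two is absorbed because $aT \gtrsim d$ in that regime.
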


We now argue that Local SGD is never worse than $KR$ steps of sequential SGD:
\begin{lemma}\label{lem:local-not-worse-than-single}
Let $(f,\mc{D}) \in \mc{F}(H,\lambda,B,\sigma^2)$. When $\lambda=0$, an appropriate average of the iterates of Local SGD with an optimally tuned constant stepsize satisfies for a universal constant $c$
\[
F(\hat{x}) - F^* \leq c\cdot \frac{HB^2}{KR} + c\cdot\frac{\sigma B}{\sqrt{KR}}
\]
In the case $\lambda > 0$, then an appropriate average of the iterates of Local SGD with decreasing stepsize $\eta_t \asymp (\lambda t)^{-1}$ satisfies for a universal constant $c$
\[
F(\hat{x}) - F^* \leq c\cdot HB^2 \exp\prn*{-\frac{\lambda KR}{4H}} + c\cdot\frac{\sigma^2}{\lambda KR}
\]
\end{lemma}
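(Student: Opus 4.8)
The plan is to avoid the variance-reduced recurrence for $\bxt$ of Lemma \ref{lem:ourlemma31} entirely, since its dispersion term $\frac{4H}{M}\sum_m \E\nrm{\bxt - x_t^m}^2$ is exactly what produces the extra optimization penalty in Theorem \ref{thm:our-local-sgd-bound}. Instead I would track the averaged squared-distance potential $V_t := \frac{1}{M}\sum_{m=1}^M \E\nrm{x_t^m - x^*}^2$ and show that it obeys precisely the same recurrence as single-machine SGD run for $KR$ steps with the \emph{full} (un-reduced) noise level $\sigma^2$.

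First, at any local (non-communication) step I would run the standard SGD descent computation on each machine separately: using $H$-smoothness to bound $\E\nrm{\nabla f(x_t^m;z_t^m)}^2 \le 2H(F(x_t^m)-F^*)+\sigma^2$ together with $\lambda$-strong convexity of $F$, one gets for every machine $m$ and any stepsize $\eta \le \tfrac{1}{2H}$
\[
\E\nrm{x_{t+1}^m - x^*}^2 \le (1-\lambda\eta)\E\nrm{x_t^m - x^*}^2 - \eta\,\E[F(x_t^m)-F^*] + \eta^2\sigma^2 .
\]
Averaging this over the $M$ machines shows $V_{t+1} \le (1-\lambda\eta)V_t - \tfrac{\eta}{M}\sum_m \E[F(x_t^m)-F^*] + \eta^2\sigma^2$; crucially the noise averages to $\eta^2\sigma^2$ rather than $\eta^2\sigma^2/M$, which is what turns the target into a sequential-SGD rate rather than the variance-reduced one.

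The one genuinely new ingredient is the communication step. At a round boundary each machine first takes a gradient step to some $y^m$, and then every iterate is replaced by $\bar y = \frac1M\sum_m y^m$. By convexity of the squared norm, $\nrm{\bar y - x^*}^2 \le \frac1M\sum_m \nrm{y^m - x^*}^2$, so the averaging operation can only \emph{decrease} $V_t$; hence the displayed recurrence for $V$ survives intact across communication rounds and therefore holds at \emph{every} step. Finally, Jensen's inequality applied to the convex $F$ gives $\frac1M\sum_m \E[F(x_t^m)-F^*] \ge \E[F(\bxt)-F^*]$, producing the uniform master recurrence $V_{t+1} \le (1-\lambda\eta)V_t - \eta\,\E[F(\bxt)-F^*] + \eta^2\sigma^2$, identical to single-machine SGD with step count $T=KR$, noise $\sigma^2$, and $V_0 = \nrm{x^*}^2 \le B^2$.

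From here the two cases are routine. For $\lambda=0$ I would telescope over $t=0,\dots,KR-1$, divide by $KR$, and output $\hat x = \frac{1}{KR}\sum_t \bxt$ (pulling $F$ inside by convexity), obtaining $\frac{B^2}{\eta KR} + \eta\sigma^2$ and tuning $\eta = \min\{\tfrac{1}{2H}, \tfrac{B}{\sigma\sqrt{KR}}\}$ to get $O\big(\tfrac{HB^2}{KR} + \tfrac{\sigma B}{\sqrt{KR}}\big)$. For $\lambda>0$ I would invoke Lemma \ref{lem:stich-recurrence} with $r_t = V_t$, $s_t = \E[F(\bxt)-F^*]$, $a=\lambda$, $b=1$, $c=\sigma^2$, $d=2H$, whose weighted-average output $\hat x$ (again via convexity) satisfies $O\big(HB^2\exp(-\tfrac{\lambda KR}{4H}) + \tfrac{\sigma^2}{\lambda KR}\big)$. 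The main obstacle here is conceptual rather than computational: the temptation is to reuse the tighter $\sigma^2/M$ recurrence, whereas the correct move is to exploit that averaging is a non-expansion toward $x^*$ so that the whole run collapses to ordinary SGD at noise level $\sigma^2$; the only care required afterward is the bookkeeping confirming that the master recurrence holds uniformly across local and communication steps.
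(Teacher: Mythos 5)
Your proposal is correct and follows essentially the same route as the paper's proof: a one-step SGD bound with the full (un-reduced) noise $\sigma^2$ on each machine, Jensen's inequality to show that the averaging at communication rounds cannot increase the distance potential, telescoping with the tuned constant stepsize $\eta = \min\crl{\tfrac{1}{2H}, \tfrac{B}{\sigma\sqrt{KR}}}$ when $\lambda=0$, and Lemma \ref{lem:stich-recurrence} when $\lambda>0$. The only difference is bookkeeping: the paper tracks the per-machine potential $\E\nrm*{x_t^m - x^*}^2$ and uses the fact that the machines' iterates are identically distributed to handle the communication step, whereas you track the machine-averaged potential $V_t$ directly via convexity of the squared norm---since the machines are exchangeable these two quantities coincide, and your output $\frac{1}{KR}\sum_t \bar{x}_t$ is the same point as the paper's $\frac{1}{MKR}\sum_{m,t} x_t^m$.
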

\begin{proof}
Define $T := KR$ and consider the $(t+1)$st iterate on some machine $m$, $x_{t+1}^m$. If $t+1 \mod K \neq 0$, then $x_{t+1}^m = x_t^m - \eta_t\nabla f(x_t^m;z_t^m)$. In this case, for $\eta_t \leq \frac{1}{2H}$
\begin{align}
\E\nrm*{x_{t+1}^m - x^*}^2 
&= \E\nrm*{x_t^m - \eta_t\nabla f(x_t^m;z_t^m) - x^*}^2 \\
&= \E\nrm*{x_t^m - x^*}^2 + \eta_t^2 \E\nrm*{\nabla f(x_t^m;z_t^m)}^2 - 2\eta_t\E\inner{\nabla f(x_t^m;z_t^m)}{x_t^m - x^*} \\
&\leq \E\nrm*{x_t^m - x^*}^2 + \eta_t^2\sigma^2 + \eta_t^2 \E\nrm*{\nabla F(x_t^m)}^2 - 2\eta_t\E\inner{\nabla F(x_t^m)}{x_t^m - x^*} \\
&\leq \E\nrm*{x_t^m - x^*}^2 + \eta_t^2\sigma^2 + 2H\eta_t^2 \E\brk*{F(x_t^m) - F^*} - 2\eta_t\E\brk*{F(x_t^m) - F^* + \frac{\lambda}{2}\nrm*{x_t^m - x^*}^2} \\
&= (1-\lambda\eta_t)\E\nrm*{x_t^m - x^*}^2 + \eta_t^2\sigma^2 - 2\eta_t(1-H\eta_t)\E\brk*{F(x_t^m) - F^*} \\
\implies 
\E\brk*{F(x_t^m) - F^*} 
&\leq \prn*{\frac{1}{\eta_t}-\lambda}\E\nrm*{x_t^m - x^*}^2 - \frac{1}{\eta_t}\E\nrm*{x_{t+1}^m - x^*}^2 + \eta_t\sigma^2
\end{align}
Here, for the first inequality we used the variance bound on the stochastic gradients; for the second inequality we used the $H$-smoothness and $\lambda$-strong convexity of $F$; and for the final inequality we used that $H\eta_t \leq \frac{1}{2}$ and rearranged.

If, on the other hand, $t+1 \mod K = 0$, then $x_{t+1}^m = \frac{1}{M}\sum_{m'=1}^M x_t^{m'} - \eta_t\nabla f(x_t^{m'};z_t^{m'})$. Since the local iterates on the different machines are \emph{identically distributed},
\begin{align}
\E\nrm*{x_{t+1}^m - x^*}^2 
&= \E\nrm*{\frac{1}{M}\sum_{m'=1}^M x_t^{m'} - \eta_t\nabla f(x_t^{m'};z_t^{m'}) - x^*}^2 \\
&\leq \frac{1}{M}\sum_{m'=1}^M\E\nrm*{x_t^{m'} - \eta_t\nabla f(x_t^{m'};z_t^{m'}) - x^*}^2 \\
&= \E\nrm*{x_t^m - \eta_t\nabla f(x_t^m;z_t^m) - x^*}^2
\end{align}
Where for the first inequality we used Jensen's inequality, and for the final equality we used that the local iterates are identically distributed. From here, using the same computation as above, we conclude that in either case
\begin{equation}\label{eq:serial-sgd-recurrence}
\E\brk*{F(x_t^m) - F^*} 
\leq \prn*{\frac{1}{\eta_t}-\lambda}\E\nrm*{x_t^m - x^*}^2 - \frac{1}{\eta_t}\E\nrm*{x_{t+1}^m - x^*}^2 + \eta_t\sigma^2
\end{equation}

\paragraph{Weakly Convex Case $\lambda = 0$:}
Choose a constant learning rate $\eta_t = \eta = \min\crl*{\frac{1}{2H}, \frac{B}{\sigma\sqrt{T}}}$ and define the averaged iterate
\begin{equation}
    \hat{x} = \frac{1}{MT}\sum_{m=1}^M\sum_{t=1}^T x_t^m
\end{equation}
Then, by the convexity of $F$:
\begin{align}
\E F(\hat{x}) - F^*
&\leq \frac{1}{MT}\sum_{m=1}^M\sum_{t=1}^T\E\brk*{F(x_t^m) - F^*} \\
&\leq \frac{1}{MT}\sum_{m=1}^M\sum_{t=1}^T\frac{1}{\eta}\E\nrm*{x_t^m - x^*}^2 - \frac{1}{\eta}\E\nrm*{x_{t+1}^m - x^*}^2 + \eta\sigma^2 \\
&= \frac{\nrm*{x_0 - x^*}^2}{T\eta} + \eta\sigma^2 \\
&= \max\crl*{\frac{2H\nrm*{x_0 - x^*}^2}{T}, \frac{\sigma \nrm*{x_0 - x^*}}{\sqrt{T}}} + \frac{\sigma \nrm*{x_0 - x^*}}{\sqrt{T}} \\
&\leq \frac{2H\nrm*{x_0 - x^*}^2}{T} + \frac{2\sigma \nrm*{x_0 - x^*}}{\sqrt{T}}
\end{align}

\paragraph{Strongly Convex Case $\lambda > 0$:}
Rearranging \eqref{eq:serial-sgd-recurrence}, we see that it has the same form as the recurrence analyzed in Lemma \ref{lem:stich-recurrence} with $r_t = \E\nrm*{x_t^m - x^*}^2$, $s_t = \E\brk*{F(x_t^m) - F^*}$, $a = \lambda$, $c = \sigma^2$, and $\gamma_t = \eta_t$ with the requirement that $\eta_t \leq \frac{1}{2H}$, i.e.~$d=2H$. Consequently, by Lemma \ref{lem:stich-recurrence}, we conclude that there is a sequence of stepsizes and weights $w_t$ such that
\begin{align}
\E\brk*{F\prn*{\frac{1}{M\sum_{t=0}^{KR} w_t} \sum_{m=1}^M\sum_{t=0}^{KR} w_t x_t^m} - F^*}
&\leq \frac{1}{M\sum_{t=0}^{KR} w_t} \sum_{m=1}^M\sum_{t=0}^{KR} \E\brk*{F\prn*{w_t x_t^m} - F^*} \\
&\leq 64H\E\nrm*{x_0 - x^*}^2 \exp\prn*{-\frac{\lambda KR}{4H}} + \frac{36\sigma^2}{\lambda KR}
\end{align}
The stepsizes and weights are chosen as follows:
If $KR \leq \frac{2H}{\lambda}$, then $\eta_t = \frac{1}{2H}$ and $w_t = (1-\lambda\eta)^{-t-1}$.
If $KR > \frac{2H}{\lambda}$ and $t < KR/2$, then $\eta_t = \frac{1}{2H}$ and $w_t = 0$.
If $KR > \frac{2H}{\lambda}$ and $t \geq KR/2$, then $\eta_t = \frac{2}{4H + \lambda(t - KR/2)}$ and $w_t = (4H/\lambda + t - KR/2)^2$.
This completes the proof.
\end{proof}

Finally, we prove our main analysis of Local SGD. Portions of the analysis of the strongly convex case follow closely the proof of \citep[Lemma 3]{stich2019unified}.
\ourlocalsgdbound*
\begin{proof}
We will prove the first terms in the $\min$'s in Theorem in two parts, first for the convex case $\lambda = 0$, then for the strongly convex case $\lambda > 0$. Then, we conclude by invoking Lemma \ref{lem:local-not-worse-than-single} showing that Local SGD is never worse than $KR$ steps of SGD on a single machine, which corresponds to the second terms in the $\min$'s in the Theorem statement.

\paragraph{Convex Case $\lambda = 0$:}
By Lemma \ref{lem:ourlemma31} and the first claim of Lemma \ref{lem:distance-bound-between-local-iterates}, the mean iterate satisfies
\begin{equation}
\E\brk*{F(\bxt) - F^*} \leq \frac{2}{\eta_t}\E\nrm*{\bxt - x^*}^2 - \frac{2}{\eta_t}\E\nrm*{\bx_{t+1} -x^*}^2 + \frac{2\eta_t\sigma^2}{M} + \frac{8H(M-1)(K-1)\eta_{t-K+2 \land 0}^2\sigma^2}{M}
\end{equation}
Consider a fixed stepsize $\eta_t = \eta$ which will be chosen later, and consider the average of the iterates
\begin{equation}
    \hat{x} = \frac{1}{KR}\sum_{t=1}^{KR} \bxt
\end{equation}
By the convexity of $F$,
\begin{align}
\E \brk*{F(\hat{x}) - F^*}
&\leq \frac{1}{KR}\sum_{t=1}^{KR} \E\brk*{F(\bxt) - F^*} \\
&\leq \frac{1}{KR}\sum_{t=1}^{KR} \brk*{\frac{2}{\eta}\E\nrm*{\bxt - x^*}^2 - \frac{2}{\eta}\E\nrm*{\bx_{t+1} -x^*}^2 + \frac{2\eta\sigma^2}{M} + \frac{8H(M-1)(K-1)\eta^2\sigma^2}{M}} \\
&\leq \frac{2B^2}{\eta KR} + \frac{2\eta\sigma^2}{M} + \frac{8H(M-1)(K-1)\eta^2\sigma^2}{M}
\end{align}
Choose as a stepsize 
\begin{equation}
\eta = \begin{cases}
\min\crl*{\frac{1}{4H},\ \frac{B\sqrt{M}}{\sigma\sqrt{KR}}} & K=1\textrm{ or }M=1 \\
\min\crl*{\frac{1}{4H},\ \frac{B\sqrt{M}}{\sigma\sqrt{KR}},\ \prn*{\frac{B^2}{H\sigma^2K^2 R}}^{\frac{1}{3}}} & \textrm{Otherwise }
\end{cases}
\end{equation}
Then,
\begin{align}
\E \brk*{F(\hat{x}) - F^*}
&\leq \frac{2B^2}{\eta KR} + \frac{2\eta\sigma^2}{M} + \frac{8H(M-1)(K-1)\eta^2\sigma^2}{M} \\
&\leq \max\crl*{\frac{8HB^2}{KR},\ \frac{2\sigma B}{\sqrt{MKR}},\ \frac{2\prn*{H\sigma^2B^4}^{\frac{1}{3}}}{K^{1/3}R^{2/3}}} + \frac{2\sigma B}{\sqrt{MKR}} + \frac{8\prn*{H\sigma^2B^4}^{\frac{1}{3}}}{K^{1/3}R^{2/3}} \\
&\leq \frac{8HB^2}{KR} + \frac{4\sigma B}{\sqrt{MKR}} + \frac{10\prn*{H\sigma^2B^4}^{\frac{1}{3}}}{K^{1/3}R^{2/3}}
\end{align}

\paragraph{Strongly Convex Case $\lambda > 0$:}
For the strongly convex case, following \citet{stich2019unified}'s proof of Lemma \ref{lem:stich-recurrence}, we choose stepsizes according to the following set of cases:
If $KR \leq \frac{2H}{\lambda}$, then $\eta_t = \frac{1}{4H}$ and $w_t = (1-\lambda\eta)^{-t-1}$.
If $KR > \frac{2H}{\lambda}$ and $t \leq KR/2$, then $\eta_t = \frac{1}{4H}$ and $w_t = 0$.
If $KR > \frac{2H}{\lambda}$ and $t > KR/2$, then $\eta_t = \frac{2}{8H + \lambda(t - KR/2)}$ and $w_t = (8H/\lambda + t - KR/2)$.
We note that in the second and third cases, the stepsize is either constant or equal to $\eta_t = \frac{2}{\lambda(a + t - KR/2)}$ (for $a = \frac{8H}{\lambda}$) within each individual round of communication. 

By Lemma \ref{lem:ourlemma31} and the first claim of Lemma \ref{lem:distance-bound-between-local-iterates}, during the rounds of communication for which the stepsize is constant, we have the recurrence:
\begin{equation}\label{eq:recurrent-small-t}
\E\nrm*{\bx_{t+1} - x^*}^2 \leq \prn*{1 - \lambda\eta_t}\E\nrm*{\bx_t - x^*}^2 - \frac{\eta_t}{2}\E\brk*{F(\bx_t) - F^*} + \frac{\eta_t^2\sigma^2}{M} + 4HK\eta_t^3\sigma^2
\end{equation}
On the other hand, during the rounds of communication in which the stepsize is decreasing, we have by Lemma \ref{lem:ourlemma31} and the second claim of Lemma \ref{lem:distance-bound-between-local-iterates} that:
\begin{equation}
\E\nrm*{\bx_{t+1} - x^*}^2 \leq \prn*{1 - \lambda\eta_t}\E\nrm*{\bx_t - x^*}^2 - \frac{\eta_t}{2}\E\brk*{F(\bx_t) - F^*} + \frac{\eta_t^2\sigma^2}{M} + 4HK\eta_t\eta_{t-1}^2\sigma^2
\end{equation}
Furthermore, during the rounds (i.e.~when $t > KR$) where the stepsize is decreasing, 
\begin{equation}
\eta_{t-1}^2 = \eta_t^2 \frac{\prn*{a + t - KR/2}^2}{\prn*{a - 1 + t - KR/2}^2} \leq 4\eta_t^2
\end{equation}
So, for every $t$ we conclude
\begin{equation}
\E\nrm*{\bx_{t+1} - x^*}^2 \leq \prn*{1 - \lambda\eta_t}\E\nrm*{\bx_t - x^*}^2 - \frac{\eta_t}{2}\E\brk*{F(\bx_t) - F^*} + \frac{\eta_t^2\sigma^2}{M} + 16HK\eta_t^3\sigma^2
\end{equation}

First, suppose $KR > \frac{2H}{\lambda}$, and consider the steps during which $\eta_t = \frac{1}{4H}$:
\begin{align}
\E\nrm*{\bx_{KR/2} - x^*}^2 
&\leq \prn*{1 - \frac{\lambda}{4H}}\E\nrm*{\bx_t - x^*}^2 - \frac{1}{8H}\E\brk*{F(\bx_t) - F^*} + \frac{\sigma^2}{16H^2 M} + \frac{K\sigma^2}{4H^2} \\
&\leq \prn*{1 - \frac{\lambda}{4H}}\E\nrm*{\bx_t - x^*}^2 + \frac{\sigma^2}{16H^2 M} + \frac{K\sigma^2}{4H^2} \\
&\leq \prn*{1 - \frac{\lambda}{4H}}^{KR/2}\E\nrm*{\bx_0 - x^*}^2 + \prn*{\frac{\sigma^2}{16H^2 M} + \frac{K\sigma^2}{4H^2}}\sum_{t=0}^{KR/2 - 1} \prn*{1 - \frac{\lambda}{4H}}^{t} \\
&\leq \prn*{1 - \frac{\lambda}{4H}}^{KR/2}\E\nrm*{\bx_0 - x^*}^2 + \frac{4H}{\lambda}\prn*{\frac{\sigma^2}{16H^2 M} + \frac{K\sigma^2}{4H^2}} \\
&\leq \E\nrm*{\bx_0 - x^*}^2\exp\prn*{-\frac{\lambda KR}{8H}} + \frac{\sigma^2}{4H\lambda M} + \frac{K\sigma^2}{H\lambda}\label{eq:recurrence-mess}
\end{align}

Now, consider the remaining steps. Rearranging, we have
\begin{align}
\E\brk*{F(\bx_t) - F^*}
&\leq \prn*{\frac{2}{\eta_t} - \frac{\lambda}{2}}\E\nrm*{\bx_t - x^*}^2 - \frac{2}{\eta_t}\E\nrm*{\bx_{t+1} - x^*}^2 + \frac{2\eta_t\sigma^2}{M} + 32HK\eta_t^2\sigma^2
\end{align}
So, since $\eta_t = \frac{2}{\lambda(a+t)}$ where $a = \frac{8H}{\lambda} - \frac{KR}{2}$ and $w_t = (a+t)$, we have
\begin{align}
&\frac{1}{W_T}\sum_{t=KR/2}^{KR} w_t \E\brk*{F(\bx_t) - F^*} \nonumber\\
&\leq \frac{1}{W_T}\sum_{t=KR/2}^{KR}w_t\brk*{\prn*{\frac{2}{\eta_t} - 2\lambda}\E\nrm*{\bx_t - x^*}^2 - \frac{2}{\eta_t}\E\nrm*{\bx_{t+1} - x^*}^2 + \frac{2\eta_t\sigma^2}{M} + 32HK\eta_t^2\sigma^2} \\
&= \frac{1}{W_T}\sum_{t=KR/2}^{KR}\lambda(a+t)(a+t-2)\E\nrm*{\bx_t - x^*}^2 - \lambda(a+t)^2\E\nrm*{\bx_{t+1} - x^*}^2 + \frac{2\sigma^2}{\lambda M} + \frac{32HK\eta_t\sigma^2}{\lambda} \\
&\leq \frac{1}{W_T}\sum_{t=KR/2}^{KR}\lambda(a+t-1)^2\E\nrm*{\bx_t - x^*}^2 - \lambda(a+t)^2\E\nrm*{\bx_{t+1} - x^*}^2 + \frac{2\sigma^2}{\lambda M} + \frac{32HK\eta_t\sigma^2}{\lambda} \\
&\leq \frac{\lambda(a+KR/2-1)^2}{W_T}\E\nrm*{\bx_{KR/2} - x^*}^2 + \frac{2\sigma^2 (KR/2)}{W_T\lambda M} + \frac{64HK\sigma^2}{W_T\lambda^2}\sum_{t=KR/2}^{KR}\frac{1}{a+t} \\
&= \frac{\lambda\prn*{\frac{8H}{\lambda}-1}^2}{W_T}\E\nrm*{\bx_{KR/2} - x^*}^2 + \frac{2\sigma^2 (KR/2)}{W_T\lambda M} + \frac{64HK\sigma^2}{W_T\lambda^2}\sum_{t'=1}^{KR/2}\frac{1}{\frac{8H}{\lambda}+t'} \\
&\leq \frac{64H^2}{W_T \lambda}\E\nrm*{\bx_{KR/2} - x^*}^2 + \frac{2\sigma^2 (KR/2)}{W_T\lambda M} + \frac{64HK\sigma^2}{W_T\lambda^2}\log\prn*{e + \frac{\lambda KR}{4H}}
\end{align}
Finally, we recall \eqref{eq:recurrence-mess}, $KR > \frac{2H}{\lambda}$, and note that $W_T = \sum_{t=KR/2}^{KR} a+t \geq \frac{3K^2R^2}{8} + \frac{aKR}{2} = \frac{K^2R^2}{8} + \frac{4HKR}{\lambda} \geq \frac{8H^2}{\lambda^2}$ thus
\begin{align}
&\frac{1}{W_T}\sum_{t=KR/2}^{KR} w_t \E\brk*{F(\bx_t) - F^*} \nonumber\\
&\leq \frac{64H^2}{W_T \lambda}\prn*{\E\nrm*{\bx_0 - x^*}^2\exp\prn*{-\frac{\lambda KR}{8H}} + \frac{\sigma^2}{4H\lambda M} + \frac{K\sigma^2}{H\lambda}} + \frac{2\sigma^2 (KR/2)}{W_T\lambda M} + \frac{64HK\sigma^2}{W_T\lambda^2}\log\prn*{e + \frac{\lambda KR}{4H}}\\
&\leq \frac{64H^2}{W_T \lambda}\E\nrm*{\bx_0 - x^*}^2\exp\prn*{-\frac{\lambda KR}{8H}} + \frac{16H\sigma^2}{\lambda^2 M W_T} + \frac{64HK\sigma^2}{\lambda^2 W_T} + \frac{8\sigma^2}{\lambda MKR} + \frac{512H\sigma^2}{\lambda^2KR^2}\log\prn*{e + \frac{\lambda KR}{4H}} \\
&\leq 8\lambda\E\nrm*{\bx_0 - x^*}^2\exp\prn*{-\frac{\lambda KR}{8H}} + \frac{4\sigma^2}{\lambda M KR} + \frac{512H\sigma^2}{\lambda^2KR^2} + \frac{8\sigma^2}{\lambda MKR} + \frac{512H\sigma^2}{\lambda^2KR^2}\log\prn*{e + \frac{\lambda KR}{4H}} \\
&\leq 8\lambda\E\nrm*{\bx_0 - x^*}^2\exp\prn*{-\frac{\lambda KR}{8H}} + \frac{12\sigma^2}{\lambda M KR} + \frac{512H\sigma^2}{\lambda^2KR^2}\log\prn*{9 + \frac{\lambda KR}{H}}
\end{align}
This concludes the proof for the case $KR > \frac{2H}{\lambda}$. 

If $KR \leq \frac{2H}{\lambda}$, we use the constant stepsize $\eta_t = \eta$ and weights $w_t = (1-\lambda\eta)^{-t-1}$. Rearranging \eqref{eq:recurrent-small-t} therefore gives
\begin{align}
\E\brk*{F(\bx_t) - F^*} 
&\leq \frac{2}{\eta}\prn*{1 - \lambda\eta}\E\nrm*{\bx_t - x^*}^2 - \frac{2}{\eta}\E\nrm*{\bx_{t+1} - x^*}^2 + \frac{2\eta\sigma^2}{M} + 8HK\eta^2\sigma^2
\end{align}
so
\begin{align}
&\frac{1}{W_T}\sum_{t=1}^{KR}w_t\E\brk*{F(\bx_t) - F^*} \nonumber\\
&\leq \frac{1}{W_T}\sum_{t=1}^{KR}w_t\brk*{\frac{2}{\eta}\prn*{1 - \lambda\eta}\E\nrm*{\bx_t - x^*}^2 - \frac{2}{\eta}\E\nrm*{\bx_{t+1} - x^*}^2 + \frac{2\eta\sigma^2}{M} + 8HK\eta^2\sigma^2} \\
&= \frac{1}{W_T}\sum_{t=1}^{KR}\brk*{\frac{2}{\eta}\prn*{1 - \lambda\eta}^{-t}\E\nrm*{\bx_t - x^*}^2 - \frac{2}{\eta}\prn*{1 - \lambda\eta}^{-(t+1)}\E\nrm*{\bx_{t+1} - x^*}^2} + \frac{2\eta\sigma^2}{M} + 8HK\eta^2\sigma^2 \\
&\leq \frac{2\E\nrm*{\bx_0 - x^*}^2}{\eta W_T} + \frac{2\eta\sigma^2}{M} + 8HK\eta^2\sigma^2
\end{align}
Finally, we note that $W_T \geq (1-\lambda\eta)^{-KR-1}$ so
\begin{align}
\frac{1}{W_T}\sum_{t=1}^{KR}w_t\E\brk*{F(\bx_t) - F^*}
&\leq \frac{2\E\nrm*{\bx_0 - x^*}^2}{\eta}\exp\prn*{-\lambda\eta(KR+1)} + \frac{2\eta\sigma^2}{M} + 8HK\eta^2\sigma^2
\end{align}
We also observe that $2H \geq \lambda KR$ so with $\eta = \frac{1}{4H} \leq \frac{1}{2\lambda KR}$ we have
\begin{align}
\frac{1}{W_T}\sum_{t=1}^{KR}w_t\E\brk*{F(\bx_t) - F^*} 
&\leq 8H\E\nrm*{\bx_0 - x^*}^2\exp\prn*{-\frac{\lambda KR}{4H}}+ \frac{\sigma^2}{\lambda MKR} + \frac{2H\sigma^2}{\lambda^2 KR^2}
\end{align}
\end{proof}

\section{Proofs from Section \ref{sec:non-quadratic}}\label{app:lower-bound}
Here, we will prove the lower bound in Theorem \ref{thm:lower-bound}. Recall the objective and stochastic gradient estimator for the hard instance are defined by 
\begin{equation}\label{eq:lower-bound-construction-app}
F(x) = \frac{\mu}{2}\prn*{x_1 - b}^2 + \frac{H}{2}\prn*{x_2 - b}^2 + \frac{L}{2}\prn*{\prn*{x_3 - c}^2 + \pp{x_3 - c}^2}
\end{equation}
and
\begin{equation}\label{eq:lower-bound-stochastic-gradient-app}
\nabla f(x;z) = \nabla F(x) + \begin{bmatrix}0\\0\\z\end{bmatrix} \qquad\textrm{where}\qquad \P\brk*{z=\sigma} = \P\brk*{z=-\sigma} = \frac{1}{2}
\end{equation}
Due to the structure of the objective \eqref{eq:lower-bound-construction-app}, which decomposes as a sum over three terms which each depend only on a single coordinate, the local-SGD dynamics on each coordinate of the optimization variable are independent of each other. For this reason, we are able to analyze local-SGD on each coordinate separately.

Define the $2L$-smooth and $L$-strongly convex function
\begin{equation}\label{eq:def-gl}
g_L(x) = \frac{L}{2}x^2 + \frac{L}{2}\pp{x}^2
\end{equation}
Define a stochastic gradient estimator for $g_L$ via 
\begin{equation}\label{eq:def-glprime}
    g_L'(x,z) = g_L'(x) + z
\end{equation}
for $z \sim \textrm{Uniform}(\pm\sigma)$. Observe that the third coordinate of local-SGD on $F$ evolves exactly the same as local-SGD on the univariate function $g_L$. In the next three lemmas, we analyze the behavior of local-SGD on $g_L$:

\begin{lemma}\label{lem:sgd-two-step}
Fix $L,\eta,\sigma > 0$ such that $L\eta \leq \frac{1}{2}$. Let $x_0$ denote a random initial point with $\E x_0 \leq 0$, and let $x_2 = x_0 - \eta g_L'(x_0,z_0) - \eta g_L'(x_0 - \eta g_L'(x_0,z_0), z_1)$ be the second iterate of stochastic gradient descent with fixed stepsize $\eta$ intialized at $x_0$, and let $x_3 = x_2 - \eta g_L'(x_2, z_2)$ be the third iterate. Then
\begin{gather*}
\E x_2 \leq \begin{cases}
\frac{-\eta\sigma}{48} & \E x_0 \leq \frac{-\eta\sigma}{48} \\
\frac{-\eta\sigma}{4} + \prn*{1-L\eta}\prn*{\E x_0 + \frac{\eta\sigma}{4}} & \E x_0 \in \left(\frac{-\eta\sigma}{48}, 0\right]
\end{cases} \\
\E x_3 \leq \begin{cases}
\frac{-\eta\sigma}{48} & \E x_0 \leq \frac{-\eta\sigma}{48} \\
\frac{-\eta\sigma}{4} + \prn*{1-L\eta}^2\prn*{\E x_0 + \frac{\eta\sigma}{4}} & \E x_0 \in \left(\frac{-\eta\sigma}{48}, 0\right]
\end{cases}
\end{gather*}
\end{lemma}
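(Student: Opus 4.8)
The plan is to exploit the coordinatewise decomposition of $F$ already noted before the lemma, so that it suffices to study the univariate SGD dynamics on $g_L$, and then to reduce everything to the evolution of the \emph{expected} iterate through a one-step drift map. Write $h(x) \defeq x - \eta g_L'(x)$ for the deterministic gradient step. Since $g_L'(x) = Lx + L\pp{x}$, the map $h$ is continuous and piecewise linear, equal to $(1-2L\eta)x$ for $x \geq 0$ and $(1-L\eta)x$ for $x < 0$. The hypothesis $L\eta \leq \tfrac12$ makes both slopes nonnegative, so $h$ is nondecreasing, and since the slope \emph{drops} at the kink $x=0$, $h$ is concave. Because $z_t$ is zero-mean and independent of $x_t$, the iterates obey $\E[x_{t+1}\mid x_t] = h(x_t)$.

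The heart of the argument is the noise-averaged drift map $\Phi(a) \defeq \tfrac12\prn*{h(a-\eta\sigma) + h(a+\eta\sigma)}$, which equals $\E[x_{t+1}\mid h(x_t)=a]$. A direct computation shows $\Phi$ is concave and piecewise linear, with slopes $1-L\eta$, $1-\tfrac{3L\eta}{2}$, and $1-2L\eta$ on $(-\infty,-\eta\sigma)$, $(-\eta\sigma,\eta\sigma)$, and $(\eta\sigma,\infty)$ respectively, and crucially $\Phi(0) = -\tfrac{L\eta^2\sigma}{2} < 0$: symmetric noise straddling the kink produces a strictly negative drift even when started exactly at $0$. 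This is precisely the phenomenon responsible for the claimed negative drift of $x_2,x_3$.

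Next I would introduce the composite one-step-ahead maps $G_2(m) \defeq \E[x_2 \mid x_0 = m]$ and $G_3(m) \defeq \E[x_3\mid x_0=m]$. Conditioning on $x_0$ and averaging over the intervening noises gives the closed forms $G_2 = \Phi\circ h$ and $G_3 = \wt{\Phi}\circ h$, where $\wt{\Phi}(a)\defeq \tfrac12\prn*{G_2(a-\eta\sigma)+G_2(a+\eta\sigma)}$ is the noise-average of $G_2$. Each of $G_2,G_3$ is a composition of nondecreasing concave maps, hence itself concave and nondecreasing, so Jensen's inequality reduces both claims to deterministic statements: $\E x_2 = \E[G_2(x_0)] \leq G_2(\E x_0)$ and $\E x_3 \leq G_3(\E x_0)$. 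It is essential that for $x_3$ I apply Jensen \emph{only} to the randomness of $x_0$ and evaluate the inner noise-average exactly; relaxing $z_0$ by Jensen as well would discard the kink-crossing gap of $\wt\Phi$ and yield a contraction factor too weak near $0$.

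Finally I would verify the two stated piecewise bounds by evaluating the explicit piecewise-linear functions $G_2,G_3$ and splitting on whether $\E x_0 \leq -\tfrac{\eta\sigma}{48}$ or $\E x_0 \in (-\tfrac{\eta\sigma}{48},0]$. In the near-zero regime one checks $G_2(m) \leq -\tfrac{\eta\sigma}{4} + (1-L\eta)\prn*{m+\tfrac{\eta\sigma}{4}}$ and the analogous $G_3$ bound with $(1-L\eta)^2$; after cancellation these reduce to elementary inequalities (of the form $\abs{m} \leq \tfrac{\eta\sigma}{6}$ up to the factor $1-L\eta$) that hold on this interval since $L\eta\leq\tfrac12$. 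In the deep-negative regime one uses monotonicity of $G_2,G_3$ together with $1-L\eta\geq\tfrac12$ to show the image stays below $-\tfrac{\eta\sigma}{48}$. The main obstacle is exactly this bookkeeping: keeping the three linear pieces of each map aligned with the two regimes and, most delicately, retaining the \emph{full} drift of $\Phi$ through the third step so that the constant $-\tfrac{\eta\sigma}{4}$ rather than the weaker $-(1-L\eta)\tfrac{\eta\sigma}{4}$ appears in the $x_3$ bound, which is what forces the exact (rather than Jensen-relaxed) treatment of the inner noise average.
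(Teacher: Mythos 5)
Your proposal is correct, and the deferred bookkeeping does go through: in the near-zero regime the $G_2$ bound reduces (after cancellation) to $(1-L\eta)\abs{\E x_0}\leq \tfrac{\eta\sigma}{6}$ and the $G_3$ bound to $\prn*{2-\tfrac{9L\eta}{4}}\abs{\E x_0}\leq \tfrac{\eta\sigma}{2}$, both implied by $\abs{\E x_0}\leq \tfrac{\eta\sigma}{48}$; the deep-negative regime then follows by evaluating the near-zero bound at the boundary point $-\tfrac{\eta\sigma}{48}$ (which gives $-\tfrac{\eta\sigma}{4}+\tfrac{11\eta\sigma}{48}=-\tfrac{\eta\sigma}{48}$) together with monotonicity of $G_2,G_3$. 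The mechanism is the same as the paper's---the drift comes from symmetric noise straddling the kink, and Jensen converts statements about random iterates into statements about expectations---but your packaging is genuinely different. The paper never forms the exact conditional-expectation maps: it unrolls the recursion, conditions step by step, and applies Jensen repeatedly to specific convex piecewise-linear functions (the truncation map in the $x_2$ argument, $\pp{\cdot}$ in the $x_3$ argument), discarding one noise branch via $\E\pp{y-\eta\sigma}\geq 0$ along the way. You instead establish structurally that $G_2=\Phi\circ h$ and $G_3=\wt{\Phi}\circ h$ are concave and nondecreasing (compositions of concave nondecreasing maps), apply Jensen exactly once to the law of $x_0$, and reduce the lemma to deterministic inequalities between explicit piecewise-linear functions; this cleanly separates the single probabilistic step from the verification, at the cost of the piecewise case-tracking you flag. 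Your warning about keeping the inner noise average exact is also exactly right and mirrors the paper's care on the same point: applying Jensen over $z_0$ as well would only give $G_3(m)\leq G_2(h(m))=\prn*{1-\tfrac{3L\eta}{2}}(1-L\eta)^2 m-\tfrac{L\eta^2\sigma}{2}$, which for small $L\eta$ and $m$ near $-\tfrac{\eta\sigma}{48}$ fails to imply the stated $-\tfrac{\eta\sigma}{4}+(1-L\eta)^2\prn*{m+\tfrac{\eta\sigma}{4}}$ bound. One nitpick: $\Phi(a)$ as you define it equals $\E\brk*{h(x_{t+1})\mid h(x_t)=a}$ (equivalently $\E\brk*{x_{t+2}\mid h(x_t)=a}$), not $\E\brk*{x_{t+1}\mid h(x_t)=a}$, which is just $a$; your subsequent uses ($G_2=\Phi\circ h$, etc.) are consistent with the former, so nothing downstream is affected.
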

\begin{proof}
Consider the $2$nd iterate of SGD with fixed stepsize $\eta$:
\begin{align}
x_2 
&= x_{1} - \eta g_L'(x_{1},z_1) \\
&= (1-L\eta)x_{1} -L\eta\pp{x_{1}} - \eta z_{1} \\
&= (1-L\eta)\prn*{x_{0} - \eta g_L'(x_{0},z_0)} - L\eta\pp{x_{0} - \eta g_L'(x_{0},z_0)} - \eta z_{1} \\
&= (1-L\eta)^2x_{0} - L\eta(1-L\eta)\pp{x_{0}}  - L\eta\pp{(1-L\eta)x_{0} - L\eta\pp{x_{0}} - \eta z_{0}} - \eta(1-\eta)z_{0} - \eta z_{1}
\end{align}
Thus,
\begin{equation}
\E x_2 
= (1-L\eta)^2\E x_{0} - L\eta(1-L\eta)\E \pp{x_{0}}
- L\eta\E \pp{(1-L\eta)x_{0} - L\eta\pp{x_{0}} - \eta z_{0}} 
\end{equation}
Define $y \defeq (1-L\eta)x_{0} - L\eta\pp{x_{0}}$, then
\begin{align}
\E \pp{(1-L\eta)x_{0} - L\eta\pp{x_{0}} - \eta z_{0}}
&= \E\pp{y - \eta z_{0}} \\
&= \frac{1}{2}\E\pp{y - \eta\sigma} + \frac{1}{2}\E\pp{y + \eta\sigma} \\
&= \E\begin{cases}
y & y > \eta\sigma \\
\frac{y + \eta\sigma}{2} & \abs{y} \leq \eta\sigma \\
0 & y < -\eta\sigma
\end{cases}
\end{align}
The function 
\begin{equation}
z \mapsto \begin{cases}
z & z > \eta\sigma \\
\frac{z + \eta\sigma}{2} & \abs{z} \leq \eta\sigma \\
0 & z < -\eta\sigma
\end{cases}
\end{equation}
is convex, so by Jensen's inequality
\begin{align}
\E x_2 
&= (1-L\eta)\E y -  L\eta\E\begin{cases}
y & y > \eta\sigma \\
\frac{y + \eta\sigma}{2} & \abs{y} \leq \eta\sigma \\
0 & y < -\eta\sigma
\end{cases} \\
&\leq (1-L\eta)\E y -  L\eta\begin{cases}
\E y & \E y > \eta\sigma \\
\frac{\E y + \eta\sigma}{2} & \abs{\E y} \leq \eta\sigma \\
0 & \E y < -\eta\sigma
\end{cases} \\
&= \begin{cases}
(1-2L\eta)\E y & \E y > \eta\sigma \\
\prn*{1-\frac{3}{2}L\eta}\E y - \frac{L\eta^2\sigma}{2} & \abs{\E y} \leq \eta\sigma \\
(1-L\eta)\E y & \E y < -\eta\sigma
\end{cases} \\
&\leq \begin{cases}
(1-2L\eta)\E y & \E y > \eta\sigma \\
\prn*{1-\frac{3}{2}L\eta}\E y - \frac{L\eta^2\sigma}{2} & \abs{\E y} \leq \eta\sigma \\
\frac{-\eta\sigma}{2} & \E y < -\eta\sigma
\end{cases}\label{eq:one-step-lemma-eq1}
\end{align} 
where we used that $L\eta \leq \frac{1}{2}$ for the final inequality. Suppose $\E x_0 \leq \frac{-\eta\sigma}{48}$ which implies $\E y \leq \frac{-(1-L\eta)\eta\sigma}{48}$. Then we are in either the second or third case of \eqref{eq:one-step-lemma-eq1}. If we are in the third case then
\begin{equation}
\E x_2 \leq \frac{-\eta\sigma}{2} \leq \frac{-\eta\sigma}{48}
\end{equation}
If we are in the second case, then
\begin{align}
\E x_2 
&\leq \prn*{1-\frac{3}{2}L\eta}\E y - \frac{L\eta^2\sigma}{2} \\
&\leq \prn*{1-\frac{3}{2}L\eta}\frac{-(1-L\eta)\eta\sigma}{48} - \frac{L\eta^2\sigma}{2} \\
&= \frac{-\eta\sigma}{48} + \frac{3(1-L\eta)L\eta^2\sigma}{96} + \frac{L\eta^2\sigma}{48} - \frac{L\eta^2\sigma}{2} \\
&\leq \frac{-\eta\sigma}{48}
\end{align}
Either way, $\E x_2 \leq \frac{-\eta\sigma}{48}$. 

Suppose instead that $\E x_0 \in \left(\frac{-\eta\sigma}{48}, 0\right]$. Then,
\begin{align}
\E x_2 
&\leq \prn*{1-\frac{3}{2}L\eta}\E y - \frac{L\eta^2\sigma}{2} \\
&\leq (1-L\eta)\E x_0 - \frac{3L\eta(1-L\eta)}{2}\E x_0 - \frac{L\eta^2\sigma}{2} \\
&\leq (1-L\eta)\E x_0 + \frac{3L\eta}{2}\cdot\frac{\eta\sigma}{48} - \frac{L\eta^2\sigma}{2} \\
&\leq (1-L\eta)\E x_0 - \frac{L\eta^2\sigma}{4} \\
&= -\frac{\eta\sigma}{4} + \prn*{1-L\eta}\prn*{\E x_0 + \frac{\eta\sigma}{4}}
\end{align}
We conclude that
\begin{equation}\label{eq:one-step-lemma-x2}
\E x_2 \leq \begin{cases}
\frac{-\eta\sigma}{48} & \E x_0 \leq \frac{-\eta\sigma}{48} \\
\frac{-\eta\sigma}{4} + \prn*{1-L\eta}\prn*{\E x_0 + \frac{\eta\sigma}{4}} & \E x_0 \in \left( \frac{-\eta\sigma}{48}, 0 \right]
\end{cases}
\end{equation}

Now, consider the third iterate of SGD, $x_3$:
\begin{align}
\E x_3 
&= \E x_2 - \eta \E g'_L(x_2, z_2) \\
&= (1-L\eta)\E x_2 - L\eta\E \pp{x_2} \\
&= (1-L\eta)\E x_2 - L\eta\E \pp{\E[x_2\,|\,x_1] - \eta z_1} \\
&\leq (1-L\eta)\E x_2 - \frac{L\eta}{2}\E \pp{\E[x_2\,|\,x_1] + \eta \sigma}
\end{align}
Since $z \mapsto \pp{z}$ is convex, by Jensen's inequality
\begin{align}
\E x_3 
&\leq (1-L\eta)\E x_2 - \frac{L\eta}{2}\pp{\E x_2 + \eta\sigma} \\
&\leq \begin{cases}
\prn*{1-\frac{3L\eta}{2}}\E x_2 - \frac{L\eta^2\sigma}{2} & \E x_2 > -\eta\sigma \\
(1-L\eta)\E x_2 & \E x_2 \leq -\eta\sigma 
\end{cases} \\
&\leq \begin{cases}
\prn*{1-\frac{3L\eta}{2}}\E x_2 - \frac{L\eta^2\sigma}{2} & \E x_2 > -\eta\sigma \\
\frac{-\eta\sigma}{2} & \E x_2 \leq -\eta\sigma
\end{cases}\label{eq:one-step-lemma-eq2}
\end{align}
To complete the proof, we must show that 
\begin{equation}\label{eq:one-step-lemma-x3}
\E x_3 \leq \begin{cases}
\frac{-\eta\sigma}{48} & \E x_0 \leq \frac{-\eta\sigma}{48} \\
\frac{-\eta\sigma}{4} + \prn*{1-L\eta}^2\prn*{\E x_0 + \frac{\eta\sigma}{4}} & \E x_0 \in \left(\frac{-\eta\sigma}{48}, 0\right]
\end{cases}
\end{equation}
Returning to \eqref{eq:one-step-lemma-eq2}, note that if $\E x_2 \leq -\eta \sigma$ then $\E x_3 \leq \frac{-\eta\sigma}{2}$ implies \eqref{eq:one-step-lemma-x3}. Therefore, we only need to consider the first case of \eqref{eq:one-step-lemma-eq2}.

Suppose first that $\E x_0 \leq \frac{-\eta\sigma}{48}$, then by \eqref{eq:one-step-lemma-x2} we have $\E x_2 \leq \frac{-\eta\sigma}{48}$, thus
\begin{align}
\E x_3 
&\leq \prn*{1-\frac{3L\eta}{2}}\E x_2 - \frac{L\eta^2\sigma}{2} \\
&\leq \prn*{1-\frac{3L\eta}{2}}\frac{-\eta\sigma}{48} - \frac{L\eta^2\sigma}{2} \\
&\leq \frac{-\eta\sigma}{48}
\end{align}
If instead $\E x_0 \in \left(\frac{-\eta\sigma}{48}, 0\right]$, then by \eqref{eq:one-step-lemma-x2} we have $\E x_2 \leq \frac{-\eta\sigma}{4} + \prn*{1-L\eta}\prn*{\E x_0 + \frac{\eta\sigma}{4}}$, thus
\begin{align}
\E x_3 
&\leq \prn*{1-\frac{3L\eta}{2}}\E x_2 - \frac{L\eta^2\sigma}{2} \\
&\leq \prn*{1-\frac{3L\eta}{2}}\frac{-\eta\sigma}{4} + \prn*{1-\frac{3L\eta}{2}}\prn*{1-L\eta}\prn*{\E x_0 + \frac{\eta\sigma}{4}} - \frac{L\eta^2\sigma}{2}  \\
&\leq \frac{-\eta\sigma}{4} + \frac{3L\eta^2\sigma}{8} - \frac{L\eta^2\sigma}{2} + \prn*{1-L\eta}^2\prn*{\E x_0 + \frac{\eta\sigma}{4}} \\
&\leq \frac{-\eta\sigma}{4} + \prn*{1-L\eta}^2\prn*{\E x_0 + \frac{\eta\sigma}{4}}
\end{align}
This completes both cases of \eqref{eq:one-step-lemma-x3}.
\end{proof}

\begin{lemma}\label{lem:sgd-many-steps}
Fix $L,\eta,\sigma > 0$ such that $L\eta \leq \frac{1}{2}$ and let $k \geq 2$. Let $x_0$ denote a random initial point with $\E x_0 \leq 0$ and let $x_k$ denote the $k$th iterate of stochastic gradient descent on $g_L$ with fixed stepsize $\eta$ intialized at $x_0$. Then
\begin{gather*}
\E x_k \leq \begin{cases}
\frac{-\eta\sigma}{48} & \E x_0 \leq \frac{-\eta\sigma}{48} \\
\frac{-\eta\sigma}{4} + \prn*{1-L\eta}^{k/2}\prn*{\E x_0 + \frac{\eta\sigma}{4}} & \E x_0 \in \left(\frac{-\eta\sigma}{48}, 0\right]
\end{cases}
\end{gather*}
\end{lemma}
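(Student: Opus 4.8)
The statement of Lemma~\ref{lem:sgd-many-steps} is exactly the analogue of Lemma~\ref{lem:sgd-two-step} but for $k$ steps instead of two or three. The natural approach is induction on the number of steps, using Lemma~\ref{lem:sgd-two-step} as the base case and building the general case by repeatedly applying the single-step contraction that is already implicit in the two-step lemma's proof. The key structural observation is that the recursion has exactly the right shape to propagate: once we land in the ``absorbing'' regime $\E x_0 \leq \frac{-\eta\sigma}{48}$, we stay there (the first case of Lemma~\ref{lem:sgd-two-step} maps the regime into itself), and in the ``transient'' regime $\E x_0 \in (\frac{-\eta\sigma}{48}, 0]$ the map $\E x \mapsto \frac{-\eta\sigma}{4} + (1-L\eta)(\E x + \frac{\eta\sigma}{4})$ is a contraction toward the fixed point $-\frac{\eta\sigma}{4}$.

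First I would reduce everything to tracking the shifted quantity $u_t \defeq \E x_t + \frac{\eta\sigma}{4}$, since the contraction statements become clean: a single SGD step in the transient regime satisfies $u_{t+1} \leq (1-L\eta)\,u_t$ (this is precisely the content of the $\E x_2$ bound in Lemma~\ref{lem:sgd-two-step}, read one step at a time). The core claim to establish is a \emph{one-step} lemma: if $\E x_t \leq 0$ then either $\E x_{t+1} \leq \frac{-\eta\sigma}{48}$ (and remains below this threshold forever after), or $\E x_t \in (\frac{-\eta\sigma}{48},0]$ and $u_{t+1} \leq (1-L\eta) u_t$ while $\E x_{t+1}$ stays $\leq 0$. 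This single-step statement can be extracted from the computations already performed for $x_2$ and $x_3$ in Lemma~\ref{lem:sgd-two-step} essentially verbatim, since each of those was itself a one-step application of the same gradient map.

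Given the one-step lemma, the induction is routine. Starting from $\E x_0 \leq \frac{-\eta\sigma}{48}$, the absorbing property gives $\E x_k \leq \frac{-\eta\sigma}{48}$ for all $k$, which is the first case. Starting from $\E x_0 \in (\frac{-\eta\sigma}{48},0]$, I would iterate the one-step contraction: as long as we have not crossed the threshold, $u_k \leq (1-L\eta) u_{k-1} \leq \cdots \leq (1-L\eta)^k u_0$, giving $\E x_k \leq \frac{-\eta\sigma}{4} + (1-L\eta)^k(\E x_0 + \frac{\eta\sigma}{4})$; and if at some intermediate step we do cross into the absorbing regime, the final value is $\leq \frac{-\eta\sigma}{48} \leq \frac{-\eta\sigma}{4} + (1-L\eta)^{k/2}(\E x_0 + \frac{\eta\sigma}{4})$ since the right-hand side is at least $\frac{-\eta\sigma}{4} \geq \frac{-\eta\sigma}{48}$ would need checking—actually the cleanest route is to note that the stated bound uses exponent $k/2$ rather than $k$, which gives slack. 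The use of $k/2$ (rather than the sharper $k$) is deliberate and exactly what the two-step base case produces: two gradient steps contract by $(1-L\eta)^2$ in Lemma~\ref{lem:sgd-two-step}, i.e.\ one factor of $(1-L\eta)$ per step only over even indexing, so pairing up steps and applying the two-step bound $\lfloor k/2 \rfloor$ times yields the $(1-L\eta)^{k/2}$ rate directly.

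\textbf{Main obstacle.} The one genuinely delicate point is handling the case where the trajectory crosses from the transient regime into the absorbing regime partway through the $k$ steps: I must verify that the final bound $\frac{-\eta\sigma}{4} + (1-L\eta)^{k/2} u_0$ is still valid (i.e.\ is weaker than $\frac{-\eta\sigma}{48}$) at the crossing point and thereafter, so that the two cases glue consistently. This amounts to checking the inequality $\frac{-\eta\sigma}{48} \leq \frac{-\eta\sigma}{4} + (1-L\eta)^{k/2}(\E x_0 + \frac{\eta\sigma}{4})$ whenever $\E x_0 \in (\frac{-\eta\sigma}{48},0]$, which holds because the second term is bounded below by $\frac{-\eta\sigma}{4}\cdot$(a factor close to $1$) plus a nonnegative contribution—an elementary but necessary verification. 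Everything else is a direct unrolling of the contraction already proved.
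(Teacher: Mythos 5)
Your overall strategy (pair up the steps, apply Lemma \ref{lem:sgd-two-step} once per pair, start from the three-step bound when $k$ is odd) is exactly the paper's, and both the absorbing case and the transient case \emph{without crossing} go through as you describe. The genuine problem is the step you yourself flag as the main obstacle: the gluing inequality
\begin{equation*}
\frac{-\eta\sigma}{48} \;\leq\; \frac{-\eta\sigma}{4} + \prn*{1-L\eta}^{k/2}\prn*{\E x_0 + \frac{\eta\sigma}{4}}
\end{equation*}
is \emph{false} in the relevant regime, and your justification (``the second term is bounded below by $-\eta\sigma/4$ times a factor close to $1$'') fails because $(1-L\eta)^{k/2}$ need not be close to $1$. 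Since $\E x_0 \leq 0$, the displayed inequality forces $(1-L\eta)^{k/2} \geq \frac{11}{12}$; with $L\eta = \frac{1}{2}$ it already fails at $k=4$, and for any fixed $L\eta>0$ it fails for all large $k$ (the right-hand side tends to $-\eta\sigma/4$, far below $-\eta\sigma/48$). Crossing is not a hypothetical: with $\E x_0 = 0$ and $L\eta = \frac12$ one computes $\E x_2 = -\eta\sigma/4 \leq -\eta\sigma/48$ after a single pair of steps. From that point on, Lemma \ref{lem:sgd-two-step} used as a black box only ever returns the bound $-\eta\sigma/48$, which is strictly \emph{weaker} than the bound you are trying to prove at later steps (e.g.\ at $k=4$ the target is $-\tfrac{3}{16}\eta\sigma$), so your two cases do not glue and the induction does not close.

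What the chaining argument actually proves is the weaker statement $\E x_k \leq \max\crl*{-\frac{\eta\sigma}{48},\, -\frac{\eta\sigma}{4} + (1-L\eta)^{k/2}\prn*{\E x_0 + \frac{\eta\sigma}{4}}}$. To be fair, the paper's own proof of this lemma chains the second case of Lemma \ref{lem:sgd-two-step} without ever checking that the intermediate means $\E x_{2j}$ remain in $\left(\frac{-\eta\sigma}{48},0\right]$, so read literally it establishes only this max form as well; the crossing dichotomy is handled only one level up, in Lemma \ref{lem:local-sgd-on-stochastic-coordinate} (``if for some $r$, $\E \bar{x}_r \leq \frac{-\eta\sigma}{48}$ then $\E \bar{x}_R \leq \frac{-\eta\sigma}{48}$''), and indeed only the max form is ever needed downstream, since both branches are ultimately collapsed to $-\eta\sigma/48$. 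So you have correctly located a real subtlety that the paper glosses over, but your repair points in the wrong direction: to prove the lemma as stated one must either strengthen the absorbing case (e.g.\ by a one-step drift argument showing $\E x_{t+1} \leq (1-L\eta)\E x_t - \frac{L\eta}{2}\pp{\E x_t + \eta\sigma}$, so the mean keeps descending toward $\approx -\eta\sigma/3$ after crossing), or weaken the conclusion to the max form, which suffices for everything the paper does with it. A separate minor point: your sentence ``two gradient steps contract by $(1-L\eta)^2$'' misreads Lemma \ref{lem:sgd-two-step}---two steps contract $\E x_t + \frac{\eta\sigma}{4}$ by one factor of $(1-L\eta)$ (three steps by $(1-L\eta)^2$), which is precisely why pairing yields the exponent $k/2$; your final exponent is right, but for the wrong reason.
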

\begin{proof}
The idea of this proof is simple: $k$ steps of SGD initialized at some point $x_0$ is equivalent to doing two steps of SGD initialized at $x_0$ to get $x_2$, then doing two more steps initialized at $x_2$ to get $x_4$, and so forth until $k$ steps have been completed. The only minor complication is if $k$ is odd, in which case we start by doing three steps initialized at $x_{0}$ to get $x_3$ and continue in steps of two.

We will consider two cases, either $\E x_0 \leq \frac{-\eta\sigma}{48}$ or $\E x_0 \in \left(\frac{-\eta\sigma}{48}, 0\right]$. 
In the first case, $\E x_0 \leq \frac{-\eta\sigma}{48}$, if $k$ is even then by Lemma \ref{lem:sgd-two-step}
\begin{equation}
\E x_0 \leq \frac{-\eta\sigma}{48} \implies \E x_2 \leq \frac{-\eta\sigma}{48} \implies \E x_4 \leq \frac{-\eta\sigma}{48} \implies \dots \implies \E x_k \leq \frac{-\eta\sigma}{48}
\end{equation}
If $k$ is odd then
\begin{equation}
\E x_0 \leq \frac{-\eta\sigma}{48} \implies \E x_3 \leq \frac{-\eta\sigma}{48} \implies \E x_5 \leq \frac{-\eta\sigma}{48} \implies \dots \implies \E x_k \leq \frac{-\eta\sigma}{48}
\end{equation}

In the second case, $\E x_0 \in \left(\frac{-\eta\sigma}{48}, 0\right]$. Then, when $k$ is even, by repeatedly invoking Lemma \ref{lem:sgd-two-step} we get
\begin{align}
\E x_2 &\leq \frac{-\eta\sigma}{4} + \prn*{1-L\eta}\prn*{\E x_0 + \frac{\eta\sigma}{4}} \\
\E x_4 &\leq \frac{-\eta\sigma}{4} + \prn*{1-L\eta}\prn*{\E x_2 + \frac{\eta\sigma}{4}} \leq \frac{-\eta\sigma}{4} + \prn*{1-L\eta}^2\prn*{\E x_0 + \frac{\eta\sigma}{4}} \\
\E x_6 &\leq \frac{-\eta\sigma}{4} + \prn*{1-L\eta}\prn*{\E x_4 + \frac{\eta\sigma}{4}} \leq \frac{-\eta\sigma}{4} + \prn*{1-L\eta}^3\prn*{\E x_0 + \frac{\eta\sigma}{4}} \\
&\ \ \vdots \\
\E x_k &\leq \frac{-\eta\sigma}{4} + \prn*{1-L\eta}^{k/2}\prn*{\E x_0 + \frac{\eta\sigma}{4}}
\end{align}
The same argument applies when $k$ is odd (using the bound on $\E x_3$) to prove 
\begin{equation}
\E x_k \leq \frac{-\eta\sigma}{4} + \prn*{1-L\eta}^{(k+1)/2}\prn*{\E x_0 + \frac{\eta\sigma}{4}} \leq \frac{-\eta\sigma}{4} + \prn*{1-L\eta}^{k/2}\prn*{\E x_0 + \frac{\eta\sigma}{4}}\qedhere
\end{equation}
\end{proof}

\begin{lemma}\label{lem:local-sgd-on-stochastic-coordinate}
Let $K \geq 2$ and let $\hat{x}$ be the output of local-SGD$(K,R,M)$ on $F$ using a fixed stepsize $\eta \leq \frac{1}{2L}$ and initialized at zero. Then
\[
\E\brk*{\frac{L}{2}\prn*{\prn*{\hat{x}_3 - c}^2 + \pp{\hat{x}_3 - c}^2}} 
\geq \frac{L\eta^2\sigma^2}{4608} \indicatorb{\eta \leq \frac{1}{2L}}\indicatorb{c \geq \frac{\eta\sigma}{48} \lor \eta \geq \frac{2}{LRK}}
\]
\end{lemma}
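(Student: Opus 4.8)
The plan is to reduce the claim to a one-dimensional drift estimate and then close it with Jensen's inequality. Because the objective \eqref{eq:lower-bound-construction-app} decouples across coordinates, the third coordinate of local SGD on $F$ evolves exactly like local SGD on the univariate $g_L$ in the shifted variable $y \defeq x_3 - c$, started from $y_0 = -c \leq 0$ on every machine. I would take the relevant output to be the final averaged iterate, so that $\hat y \defeq \hat x_3 - c = \bar y_{KR}$. Dropping the nonnegative $\pp{y}^2$ term in $g_L$ and then applying Jensen gives $\E\brk*{\frac{L}{2}\prn*{\prn*{\hat x_3 - c}^2 + \pp{\hat x_3 - c}^2}} = \E g_L(\hat y) \geq \frac{L}{2}\prn*{\E \hat y}^2$. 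Thus it suffices to prove the drift bound $\E \hat y \leq -\frac{\eta\sigma}{48}$, since $\frac{L}{2}\prn*{\frac{\eta\sigma}{48}}^2 = \frac{L\eta^2\sigma^2}{4608}$ matches the target constant exactly.

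Next I would set up a round-by-round recursion for $\mu_r \defeq \E \bar y_{rK}$, the expected averaged iterate after $r$ rounds. At the start of round $r$ all $M$ machines share the common (random) iterate $\bar y_{(r-1)K}$ and then run $K \geq 2$ independent SGD steps on $g_L$; by the exchangeability of the machines, $\mu_r$ equals the expectation of any single machine's $K$th iterate, so Lemma \ref{lem:sgd-many-steps} applies with a random initialization of mean $\mu_{r-1}$. A short induction, using $\E \pp{\,\cdot\,} \geq 0$ and $(1-L\eta)^{K/2}\in[0,1]$, shows $\mu_r \leq 0$ throughout, which keeps the hypothesis of the lemma satisfied. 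The lemma then yields the dichotomy: if $\mu_{r-1} \leq -\frac{\eta\sigma}{48}$ then $\mu_r \leq -\frac{\eta\sigma}{48}$, and otherwise $\mu_r + \frac{\eta\sigma}{4} \leq (1-L\eta)^{K/2}\prn*{\mu_{r-1} + \frac{\eta\sigma}{4}}$.

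Finally I would close the two regimes encoded by the second indicator. In the case $c \geq \frac{\eta\sigma}{48}$ we have $\mu_0 = -c \leq -\frac{\eta\sigma}{48}$, so the first branch propagates and $\mu_R \leq -\frac{\eta\sigma}{48}$ immediately. In the case $\eta \geq \frac{2}{LRK}$ (so $L\eta KR \geq 2$ while $L\eta \leq \frac{1}{2}$), I argue by contradiction: if $\mu_R > -\frac{\eta\sigma}{48}$ then, by the contrapositive of the first branch, every $\mu_r > -\frac{\eta\sigma}{48}$, so the contraction applies at all $R$ rounds and, using $\mu_0 \leq 0$ together with $1-L\eta \leq e^{-L\eta}$, $\mu_R + \frac{\eta\sigma}{4} \leq (1-L\eta)^{KR/2}\frac{\eta\sigma}{4} \leq e^{-L\eta KR/2}\frac{\eta\sigma}{4} \leq e^{-1}\frac{\eta\sigma}{4}$; hence $\mu_R \leq \frac{\eta\sigma}{4}\prn*{e^{-1}-1} \leq -\frac{\eta\sigma}{48}$, contradicting the assumption. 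In both cases $\E \hat y = \mu_R \leq -\frac{\eta\sigma}{48}$, which combined with the Jensen step proves the claim; when both indicator conditions fail the right-hand side is zero and there is nothing to prove, and the condition $\eta \leq \frac{1}{2L}$ is exactly what licenses the use of Lemma \ref{lem:sgd-many-steps}.

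The main obstacle is the per-round drift estimate itself, which is precisely the content of Lemmas \ref{lem:sgd-two-step} and \ref{lem:sgd-many-steps}: the nonlinearity of $\pp{\,\cdot\,}$ is what forces the expected iterate to drift negatively at a rate proportional to $\eta\sigma$ \emph{independently of} $M$, and making this rigorous requires the delicate Jensen-based case split on the sign of $\E y$ carried out in Lemma \ref{lem:sgd-two-step}. A secondary subtlety is the handling of the output: averaging across machines cannot diminish the bias in expectation (by linearity of expectation and symmetry of the machines), but if the algorithm instead reports a weighted average \emph{over rounds}, one must additionally check that the late, high-drift iterates are not washed out by the early ones — which is why I would phrase the drift bound in terms of the final averaged iterate $\bar y_{KR}$ and verify that this is consistent with the averaging scheme used in the theorem.
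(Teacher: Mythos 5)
Your proposal is correct and follows essentially the same route as the paper's proof: reduce to the shifted univariate $g_L$ problem, apply Lemma \ref{lem:sgd-many-steps} round by round to the expected averaged iterate (using exchangeability of the machines), split into the cases $c \geq \frac{\eta\sigma}{48}$ and $\eta \geq \frac{2}{LRK}$, and finish with Jensen's inequality on the quadratic term. The only cosmetic difference is that you handle the possibility of crossing below $-\frac{\eta\sigma}{48}$ mid-run by contradiction, whereas the paper notes directly that once the expectation drops below that threshold it stays there; these are equivalent.
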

\begin{proof}
Since each coordinate evolves independently when optimizing $F$ using local-SGD, we can ignore the first two coordinates and focus only on the third. Observe that using local-SGD$(K,R,M)$ on $F$ with a fixed stepsize $\eta$ and initialized at zero to obtain $\hat{x}_3$ is exactly equivalent to using local-SGD$(K,R,M)$ on $g_{L}$ with the same fixed stepsize $\eta$ and initialized at $-c$. The different initialization is due to the fact that the local-SGD dynamics do not change with the change of variables $x - c \rightarrow x$. Let $\bar{x}_r$ denote the averaged iterate of local-SGD$(K,R,M)$ initialized at $-c$ with stepsize $\eta$ after the $r$th round of communication and let $x_{r,k,m}$ denote its $k$th iterate during the $r$th round of communication on the $m$th machine. We will start by proving that when $\eta \leq \frac{1}{2L}$ and \emph{either} $c \geq \frac{\eta\sigma}{8}$ \emph{or} $\eta \geq \frac{2}{LRK}$ then
\begin{equation}
    \E \hat{x}_3 - c = \E\bar{x}_R \leq \frac{-\eta\sigma}{48}
\end{equation}

Consider first the case $\E x_0 = -c \leq \frac{-\eta\sigma}{48}$. Then by Lemma \ref{lem:sgd-many-steps}
\begin{equation}
\E x_0 = -c \leq \frac{-\eta\sigma}{48} \implies \E x_{1,K,m} \leq \frac{-\eta\sigma}{48} \ \ \forall m
\end{equation}
therefore
\begin{equation}
\E \bar{x}_1 = \E\brk*{\frac{1}{M}\sum_{m=1}^M x_{1,K,m}} \leq \frac{-\eta\sigma}{48}
\end{equation}
Repeatedly applying Lemma \ref{lem:sgd-many-steps} shows that for each $r$
\begin{equation}
\E \bar{x}_r \leq \frac{-\eta\sigma}{48} \implies \E x_{r+1,K,m} \leq \frac{-\eta\sigma}{48} \implies \E \bar{x}_{r+1} = \E\brk*{\frac{1}{M}\sum_{m=1}^M x_{r+1,K,m}} \leq \frac{-\eta\sigma}{48}
\end{equation}
We conclude $\E \bar{x}_R \leq \frac{-\eta\sigma}{48}$.

Consider instead the case that $\E x_0 = -c \in \left( \frac{-\eta\sigma}{48}, 0 \right]$  and $\eta \geq \frac{2}{LRK}$. Then, by Lemma \ref{lem:sgd-many-steps}
\begin{equation}
\E x_0 = -c \in \left( \frac{-\eta\sigma}{48}, 0 \right] \implies \E x_{1,K,m} \leq \frac{-\eta\sigma}{4} + \prn*{1-L\eta}^{K/2}\prn*{\frac{\eta\sigma}{4} - c} \ \ \forall m
\end{equation}
and so
\begin{equation}
\E \bar{x}_1 = \E\brk*{\frac{1}{M}\sum_{m=1}^M x_{1,K,m}} \leq \frac{-\eta\sigma}{4} + \prn*{1-L\eta}^{K/2}\prn*{\frac{\eta\sigma}{4} - c}
\end{equation}
Again, we can repeatedly apply Lemma \ref{lem:sgd-many-steps} to show 
\begin{align}
\E \bar{x}_2 &\leq \frac{-\eta\sigma}{4} + \prn*{1-L\eta}^{K/2}\prn*{\E \bar{x}_1 + \frac{\eta\sigma}{4}} 
\leq \frac{-\eta\sigma}{4} + \prn*{1-L\eta}^{2K/2}\prn*{\frac{\eta\sigma}{4} - c} \\
\E \bar{x}_3 &\leq \frac{-\eta\sigma}{4} + \prn*{1-L\eta}^{K/2}\prn*{\E \bar{x}_2 + \frac{\eta\sigma}{4}} 
\leq \frac{-\eta\sigma}{4} + \prn*{1-L\eta}^{3K/2}\prn*{\frac{\eta\sigma}{4} - c} \\
&\ \ \vdots\\
\E \bar{x}_R 
&\leq \frac{-\eta\sigma}{4} + \prn*{1-L\eta}^{RK/2}\prn*{\frac{\eta\sigma}{4} - c} \\
&\leq -\prn*{1 - \prn*{1-L\eta}^{RK/2}}\frac{\eta\sigma}{4} \\
&\leq -\prn*{1 - \prn*{1-\frac{2}{RK}}^{RK/2}}\frac{\eta\sigma}{4} \\
&\leq \frac{\eta\sigma}{48}
\end{align}
These inequalities hold only as long as $\E \bar{x}_r > \frac{-\eta\sigma}{48}$. But, if for some $r$, $\E \bar{x}_r \leq \frac{-\eta\sigma}{48}$ then $\E \bar{x}_R \leq \frac{-\eta\sigma}{48}$ by the same argument as above.
We conclude that
\begin{equation}
\E \bar{x}_R \leq
\frac{-\eta\sigma}{48}\indicatorb{\eta \leq \frac{1}{2L}}\indicatorb{c \geq \frac{\eta\sigma}{48} \lor \eta \geq \frac{2}{LRK}}
\end{equation}
Since $\E \hat{x}_3 - c = \E \bar{x}_R$, by Jensen's inequality
\begin{align}
\E\brk*{\frac{L}{2}\prn*{\prn*{\hat{x}_3 - c}^2 + \pp{\hat{x}_3 - c}^2}} 
&\geq \frac{L}{2}\prn*{\prn*{\E \bar{x}_R}^2 + \pp{\E \bar{x}_R}^2} \\
&\geq \frac{L\eta^2\sigma^2}{4608} \indicatorb{\eta \leq \frac{1}{2L}}\indicatorb{c \geq \frac{\eta\sigma}{48} \lor \eta \geq \frac{2}{LRK}}
\end{align}
\end{proof}

We now analyze the progress of SGD on the first two coordinates of $F$ in the following lemma:
\begin{lemma}\label{lem:sgd-on-deterministic-coordinates}
Let $\hat{x}$ be the output of local-SGD$(K,R,M)$ on $F$ using a fixed stepsize $\eta$ and initialized at zero. Then with probability 1,
\[
\frac{\mu}{2}\prn*{\hat{x}_1 - b}^2 \geq \frac{\mu b^2}{8}\indicatorb{\eta < \frac{1}{2\mu KR}}
\]
and
\[
\frac{H}{2}\prn*{\hat{x}_2 - b}^2 \geq \frac{H b^2}{2}\indicatorb{\eta > \frac{2}{H}}.
\]
\end{lemma}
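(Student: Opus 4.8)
The plan is to exploit the fact that both $F$ in \eqref{eq:lower-bound-construction-app} and the stochastic gradient \eqref{eq:lower-bound-stochastic-gradient-app} split across coordinates, so the three coordinates of the local-SGD iterates evolve independently. Crucially, the noise $z$ enters only the third coordinate; hence on coordinates $1$ and $2$ the dynamics are entirely \emph{deterministic}. Since every machine is initialized at zero and performs the same deterministic update, all machines stay identical on these two coordinates throughout, and every averaging step is vacuous. Therefore the first two coordinates of the output $\hat{x}=\bar{x}_R$ coincide with the $KR$-th iterate of plain gradient descent on $\frac{\mu}{2}(x_1-b)^2$ and $\frac{H}{2}(x_2-b)^2$ respectively, run from $0$ with stepsize $\eta$. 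This reduction is the same coordinate-wise viewpoint already used for the third coordinate in Lemma \ref{lem:local-sgd-on-stochastic-coordinate}.

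Next I would solve the resulting linear recurrences. On the coordinate with curvature $\kappa\in\{\mu,H\}$ the gradient is $\kappa(x_i-b)$, so $x_i^{(t+1)}-b=(1-\eta\kappa)(x_i^{(t)}-b)$, and with $x_i^{(0)}=0$ this gives $\hat{x}_1-b=-b(1-\eta\mu)^{KR}$ and $\hat{x}_2-b=-b(1-\eta H)^{KR}$. Consequently $(\hat{x}_1-b)^2=b^2(1-\eta\mu)^{2KR}$ and $(\hat{x}_2-b)^2=b^2(1-\eta H)^{2KR}$, and it only remains to lower-bound these two factors in the indicated stepsize regimes.

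For the first claim I work in the regime $\eta<\frac{1}{2\mu KR}$, where $\eta\mu<\frac{1}{2KR}\le\frac12$. The crux is the elementary inequality $4^{-x}\le 1-x$ on $[0,\tfrac12]$, which holds because $4^{-x}$ is convex and equals the affine $1-x$ at the endpoints $x=0$ and $x=\tfrac12$. Applying it with $x=\eta\mu$ and raising to the power $2KR$ (both sides positive, since $\eta\mu<\tfrac12$) gives $(1-\eta\mu)^{2KR}\ge 4^{-2KR\eta\mu}>4^{-1}=\tfrac14$, using $2KR\eta\mu<1$. Hence $(\hat{x}_1-b)^2>b^2/4$, so $\frac{\mu}{2}(\hat{x}_1-b)^2\ge\frac{\mu b^2}{8}$ on the event $\{\eta<\frac{1}{2\mu KR}\}$. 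For the second claim I instead use that $\eta>\frac{2}{H}$ forces $1-\eta H<-1$, so, since $2KR$ is even, $(1-\eta H)^{2KR}=\abs{1-\eta H}^{2KR}\ge 1$, whence $(\hat{x}_2-b)^2\ge b^2$ and $\frac{H}{2}(\hat{x}_2-b)^2\ge\frac{Hb^2}{2}$. Outside the respective events both right-hand sides are zero, so the inequalities hold trivially; since no randomness is involved on these coordinates, all bounds hold with probability $1$.

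The only non-routine step is extracting the clean constant in the first claim: the naive estimate $(1-x)^n\ge 1-nx$ yields merely a positive lower bound, not a fixed fraction, so one genuinely needs the sharper $4^{-x}\le 1-x$ (or an equivalent $\exp$-based bound) to convert "many slow contraction steps" into the constant $\tfrac14$. Everything else — the coordinate decoupling, the vacuousness of averaging on the noiseless coordinates, and the divergence argument for coordinate $2$ when $\eta>2/H$ — is direct.
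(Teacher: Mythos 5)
Your proof is correct and takes essentially the same route as the paper: decouple the coordinates (noting the noiseless coordinates evolve deterministically and identically on all machines, so averaging is vacuous), solve the linear recurrence to get $\hat{x}_1 - b = -b(1-\eta\mu)^{KR}$ and $\hat{x}_2 - b = -b(1-\eta H)^{KR}$, then bound these factors in the two stepsize regimes. One correction to your closing remark, though: the "naive" Bernoulli estimate $(1-x)^n \geq 1-nx$ does suffice, and is in fact what the paper uses. With $\eta\mu KR < \tfrac12$ it gives $(1-\eta\mu)^{KR} \geq 1 - \eta\mu KR > \tfrac12$ directly (equivalently, in the paper's phrasing, $\hat{x}_1 \leq b\,\eta\mu KR < b/2$), so squaring already yields the fixed fraction $(\hat{x}_1-b)^2 > b^2/4$; your convexity inequality $4^{-x} \leq 1-x$ on $[0,\tfrac12]$ is a valid but unnecessary sharpening.
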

\begin{proof}
Since the stochastic gradient estimator has no noise along the first and second coordinates, and since the separate coordinates evolve independently, $\hat{x}_1$ is exactly the output of $KR$ steps of deterministic gradient descent with fixed stepsize $\eta$ on the univariate function $x \mapsto \frac{\mu}{2}\prn*{x - b}^2$. Similarly, $\hat{x}_2$ is the output of $KR$ steps of deterministic gradient descent with fixed stepsize $\eta$ on $x \mapsto \frac{H}{2}\prn*{x - b}^2$. Thus,
\begin{multline}
x_1\ind{t+1} - b = x_1\ind{t} - b - \eta\mu\prn*{x_1\ind{t} - b} 
\implies \hat{x}_1 = b + \prn*{1-\eta\mu}^{KR}\prn*{x_1\ind{0} - b} = b\prn*{1 - \prn*{1-\eta\mu}^{KR}}
\end{multline}
Thus, if $\eta < \frac{1}{2\mu KR}$, then 
\begin{equation}
\hat{x}_1 \leq b\eta\mu KR< \frac{b}{2} \implies \frac{\mu}{2}\prn*{\hat{x}_1 - b}^2 \geq \frac{\mu b^2}{8}\indicatorb{\eta < \frac{1}{2\mu KR}}
\end{equation}
Similarly,
\begin{multline}
x_2\ind{t+1} - b = x_2\ind{t} - b - \eta H\prn*{x_2\ind{t} - b} 
\implies \hat{x}_2 - b = \prn*{1-\eta H}^{KR}\prn*{x_2\ind{0} - b} = - b\prn*{1-\eta H}^{KR}
\end{multline}
Thus, if $\eta > \frac{2}{H}$, then 
\begin{equation}
\abs{\hat{x}_2 - b} \geq b \implies \frac{H}{2}\prn*{\hat{x}_2 - b}^2 \geq \frac{H b^2}{2}\indicatorb{\eta > \frac{2}{H}}
\end{equation}
\end{proof}

Combining Lemmas \ref{lem:local-sgd-on-stochastic-coordinate} and \ref{lem:sgd-on-deterministic-coordinates}, we are ready to prove the theorem:
\lowerbound*
\begin{proof}
Consider optimizing the objective $F$ defined in \eqref{eq:lower-bound-construction-app} using the stochastic gradient oracle \eqref{eq:lower-bound-stochastic-gradient-app} initialized at zero and using a fixed stepsize $\eta$. The variance of the stochastic gradient oracle is equal to $\sigma^2$. This function is $\max\crl*{\mu, H, 2L}$-smooth, and $\min\crl*{\mu, H, L}$-strongly convex. We will be choosing $L = \frac{H}{4}$ and $\mu \in \left[\lambda, \frac{H}{16}\right]$ so that $F$ is $H$-smooth and $\lambda$-strongly convex.
Finally, the objective $F$ is minimized at the point $x^* = [b,b,c]^\top$ and $F(x^*) = 0$. This point has norm $\nrm*{x^*} = \sqrt{2b^2 + c^2}$ we will choose $b = c = \frac{B}{\sqrt{3}}$ so that $\nrm*{x^*} = B$.

By Lemma \ref{lem:local-sgd-on-stochastic-coordinate}, the output of local-SGD$(K,R,M)$, $\hat{x}$ satisfies
\begin{equation}
\E\brk*{\frac{L}{2}\prn*{\prn*{\hat{x}_3 - c}^2 + \pp{\hat{x}_3 - c}^2}} 
\geq \frac{L\eta^2\sigma^2}{4608} \indicatorb{\eta \leq \frac{1}{2L}}\indicatorb{c \geq \frac{\eta\sigma}{48} \lor \eta \geq \frac{2}{LRK}}
\end{equation}
By Lemma \ref{lem:sgd-on-deterministic-coordinates}, the output of local-SGD$(K,R,M)$, $\hat{x}$ satisfies
\begin{equation}
\frac{\mu}{2}\prn*{\hat{x}_1 - b}^2 + \frac{H}{2}\prn*{\hat{x}_2 - b}^2 
\geq \frac{\mu b^2}{8}\indicatorb{\eta < \frac{1}{2\mu KR}} + \frac{H b^2}{2}\indicatorb{\eta > \frac{2}{H}}
\end{equation}
Combining these, we have
\begin{multline}
\E F(\hat{x}) - \min_x F(x) 
\geq \frac{\mu b^2}{8}\indicatorb{\eta < \frac{1}{2\mu KR}} + \frac{H b^2}{2}\indicatorb{\eta > \frac{2}{H}} + \frac{L\eta^2\sigma^2}{4608} \indicatorb{\eta \leq \frac{1}{2L}}\indicatorb{\eta \leq \frac{48c}{\sigma} \lor \eta \geq \frac{2}{LRK}}
\end{multline}
Consider two cases: first, suppose that $\eta \not\in \brk*{\frac{1}{2\mu KR}, \frac{2}{H}}$. Then,
\begin{equation}
\E F(\hat{x}) - \min_x F(x)
\geq \min\crl*{\frac{\mu b^2}{8}, \frac{H b^2}{2}} 
= \frac{\mu b^2}{8}\label{eq:lower-bound-eq1}
\end{equation}
Suppose instead that $\eta \in \brk*{\frac{1}{2\mu KR}, \frac{2}{H}}$. Since $L = \frac{H}{4}$, $\eta \leq \frac{2}{H} \leq \frac{1}{2L}$. Similarly, since $\mu \leq \frac{H}{16} = \frac{L}{4}$, $\eta \geq \frac{1}{2\mu KR} \geq \frac{2}{LRK}$. Therefore, $\eta \in \brk*{\frac{1}{2\mu KR}, \frac{2}{H}}$ implies
\begin{align}
\E F(\hat{x}) - \min_x F(x) 
&\geq \min_{\eta \in \brk*{\frac{1}{2\mu KR}, \frac{2}{H}}} \frac{L\eta^2\sigma^2}{4608} \indicatorb{\eta \leq \frac{1}{2L}}\indicatorb{\eta \leq \frac{48c}{\sigma} \lor \eta \geq \frac{2}{LRK}} \\
&= \min_{\eta \in \brk*{\frac{1}{2\mu KR}, \frac{2}{H}}} \frac{L\eta^2\sigma^2}{4608} \\
&= \frac{L\sigma^2}{18432\mu^2K^2R^2}\label{eq:lower-bound-eq2}
\end{align}
Combining \eqref{eq:lower-bound-eq1} and \eqref{eq:lower-bound-eq2} yields
\begin{equation}
\E F(\hat{x}) - \min_x F(x)
\geq \min\crl*{\frac{\mu B^2}{24}, \frac{H\sigma^2}{73728\mu^2K^2R^2}}
\end{equation}
This statement holds for any $\mu \in \brk*{\lambda, \frac{H}{16}}$. Consider three cases: first, suppose $\mu = \prn*{\frac{H\sigma^2}{3072B^2K^2R^2}}^{1/3} \in \brk*{\lambda, \frac{H}{16}}$. Then
\begin{equation}
\E F(\hat{x}) - \min_x F(x)
\geq \frac{H^{1/3}\sigma^{2/3}B^{4/3}}{350K^{2/3}R^{2/3}}
\end{equation}
Consider next the case that $\prn*{\frac{H\sigma^2}{3072B^2K^2R^2}}^{1/3} > \frac{H}{16}$ $\implies$ $\frac{\sigma^2}{192B^2K^2R^2} > \frac{H^2}{256}$ and choose $\mu = \frac{H}{16}$. Then
\begin{equation}
\E F(\hat{x}) - \min_x F(x)
\geq \min\crl*{\frac{H B^2}{384}, \frac{H\sigma^2}{73728K^2R^2\cdot\frac{H^2}{256}}} 
= \frac{H B^2}{384}
\end{equation}
Finally, consider the case that $\prn*{\frac{H\sigma^2}{3072B^2K^2R^2}}^{1/3} < \lambda$ and choose $\mu = \lambda$. Then,
\begin{equation}
\E F(\hat{x}) - \min_x F(x)
\geq \min\crl*{\frac{\lambda B^2}{24}, \frac{H\sigma^2}{73728\lambda^2K^2R^2}} = \frac{H\sigma^2}{73728\lambda^2K^2R^2}
\end{equation}
Combining these cases completes the proof.
\end{proof}

\end{document}